\def\eqref#1{equation~\ref{#1}}
\def\1{\bm{1}}
\DeclareMathAlphabet{\mathsfit}{\encodingdefault}{\sfdefault}{m}{sl}
\SetMathAlphabet{\mathsfit}{bold}{\encodingdefault}{\sfdefault}{bx}{n}
\definecolor{bgBlue}{RGB}{235, 245, 255} 
\definecolor{bgGray}{RGB}{248, 248, 248} 
\definecolor{bgOrange}{RGB}{255, 245, 235} 
\newtheorem{theorem}{Theorem}[section]
\newtheorem{corollary}[theorem]{Corollary}
\newtheorem{remark}{Remark}  
\definecolor{rliableolive}{HTML}{BBCC33}
\definecolor{rliableblue}{HTML}{77AADD}
\definecolor{rliablered}{HTML}{EE8866}
\definecolor{aiboxback}{HTML}{F0F0FD}
\newtcolorbox{AIbox}[2][]{aibox,title=#2,#1}
\algrenewcommand\algorithmicindent{0.7em} 
\title{One Model for All Tasks: Leveraging Efficient World Models in Multi-Task Planning}
\author{
 \qquad \qquad \qquad Yuan Pu$^{1,*}$ \qquad  Yazhe Niu$^{1,2,*}$ \qquad Jia Tang$^{1,3}$ \qquad  Junyu Xiong$^{1,4}$  \\\qquad \qquad \qquad \qquad \qquad \qquad \textbf{Shuai Hu}$^{5}$ \qquad  \textbf{Hongsheng Li}$^{2,6,\dagger}$ \vspace{.5em}\\
$^1$Shanghai Artificial Intelligence Laboratory \qquad 
$^2$The Chinese University of Hong Kong MMLab \\
$^3$Nanjing University of Aeronautics and Astronautic \\ 
$^4$University of Science and Technology of China \\
$^5$Novosibirsk State University \qquad 
$^6$Centre for Perceptual and Interactive Intelligence
}
\begin{document}

\maketitle
{
    \renewcommand{\thefootnote}{\fnsymbol{footnote}}
    
    \footnotetext[1]{These authors contributed equally to this work.}
    
    \footnotetext[2]{Corresponding author.}
}

\renewcommand{\thefootnote}{\arabic{footnote}}


\begin{abstract}

In heterogeneous multitask decision-making, tasks not only exhibit diverse observation and action spaces but also vary substantially in their underlying complexities.
While conventional multitask world models like UniZero excel in single-task settings, we find that when handling a broad and diverse suite of tasks, gradient conflicts and the loss of model plasticity often constrain their sample efficiency.
In this work, we address these challenges from two complementary perspectives: the single learning iteration and the overall learning process. 
First, to mitigate the gradient conflicts, we systematically investigate key architectural designs for extending UniZero. Our investigation identifies a Mixture-of-Experts (MoE) architecture as the most effective approach. We demonstrate, both theoretically and empirically, that this architecture alleviates gradient conflicts by routing task-specific representations to specialized sub-networks. This finding leads to our proposed model, \textit{ScaleZero}.
Second, to dynamically allocate model capacity throughout the learning process, we introduce an online Dynamic Parameter Scaling (DPS) strategy.
This strategy progressively integrates LoRA adapters in response to task-specific progress, enabling adaptive knowledge retention and parameter expansion.
Evaluations on a diverse set of standard benchmarks (Atari, DMC, Jericho) demonstrate that ScaleZero, utilizing solely online reinforcement learning with one model, performs on par with specialized single-task agents. With the DPS strategy, it remains competitive while using just 71.5\% of the environment interactions. These findings underscore the potential of ScaleZero for effective multitask planning.
Our code is available at \textcolor{magenta}{https://github.com/opendilab/LightZero}.

\end{abstract}

\vspace{-14pt}
\section{Introduction}
\vspace{-6pt}
\label{intro}
Unified world models~\citep{gato, MDT, jat} represent a significant step towards generalist agents, offering a 
cohesive framework for multi-modal perception ~\citep{dino}, long-horizon prediction~\citep{transformers_rl}, and decision-making \textit{by learning a shared latent space representation}. 
By encoding the environment into a compact latent space, these learned representations enable effective planning. Algorithms like Monte Carlo Tree Search (MCTS)~\citep{MCTS_survey} are well-suited for this task, as they can perform lookahead efficiently within this abstract representation~\citep{muzero,unplugged}.
MCTS excels at dynamically balancing exploration and exploitation, a combination that has achieved well-established success in homogeneous task domains like board games~\citep{muzero,efficientzero}.
Extending this paradigm to heterogeneous multitask reinforcement learning (MTRL)~\citep{mtl} is a key objective for creating generalist agents. 
However, this ambition is impeded by a formidable obstacle: training a single shared model on diverse tasks with potentially conflicting dynamics and objectives is notoriously difficult~\citep{cho2024hard}. While prior work~\citep{sun2022paco,moco,recon} has attempted to mitigate inter-task interference, these methods have been predominantly studied in supervised learning and model-free RL settings.
The challenges and dynamics of multitask planning within a shared world model remain largely unexplored.

Within this specific domain of multitask planning, our work identifies and addresses two critical, intertwined obstacles not fully resolved by existing methods:
(1) \textit{Representational bottlenecks and plasticity collapse:} In diverse multitask settings, a shared model is susceptible to gradient dominance from simpler, faster-converging tasks~\citep{cho2024hard}. This imbalance leads to representation interference~\citep{PCGrad,bejnordi2024interrogate}, where the learning signals for more complex tasks are suppressed. 
This culminates in a progressive loss of network plasticity~\citep{dohare2024loss,todorov2025sparsity}—\textit{the fundamental ability of a model to adjust its parameters to learn from new data}—imposing a hard ceiling on the model's overall learning capacity and leading to performance collapse on challenging tasks.
(2) \textit{Static resource allocation:} Conventional architectures employ a uniform, one-size-fits-all forward pass, applying the same learning resources (e.g., data collection and model updates) to every task irrespective of its intrinsic difficulty. This static strategy results in profound computational inefficiency, as resources are squandered on converged or show diminishing returns, instead of being directed toward those that require further learning.
To dissect these issues, we begin with a quantitative diagnosis using UniZero~\citep{pu2024unizero}, a contemporary unified world model, as a representative testbed. We train a single model via online reinforcement learning across eight canonical Atari games and observe a stark failure pattern, as shown in Figure~\ref{fig:plasticity_loss}.
While the model rapidly masters simple tasks like \emph{Pong}, it exhibits initial progress followed by a catastrophic performance collapse on complex games with disparate dynamics and visual styles, 
such as \emph{Seaquest}, whose complex exploration demands starkly contrast with the reactive gameplay of tasks like \emph{Pong}.
We provide quantitative evidence linking this failure to specific internal model dynamics~\citep{lyle2022learning}: an uncontrolled inflation of the latent state norm and a corresponding spike in the dormant neuron ratio~\citep{sokar2023dormant} within the transformer backbone, signaling a collapse in model plasticity, rendering the network unable to adapt to new information from challenging tasks.

Based on this diagnosis, we tackle these challenges from two complementary perspectives. First, as an internal, architectural solution, we undertake a principled exploration of the design space across the core dimensions \emph{input representation}, \emph{model architecture}, and \emph{optimization strategy}. This systematic investigation, which evaluates five key axes—task conditioning, encoder architectures (ResNet~\citep{ResNet} vs. ViT~\citep{ViT}), latent normalization schemes, backbone design, and multitask optimization strategies~\citep{he2020momentum}—culminates in a new, powerful model we term \textit{ScaleZero}.
Among the evaluated design choices, the integration of a sparse Mixture-of-Experts~\citep{MoE} backbone yields the most significant gains by fundamentally addressing plasticity collapse.
To provide a deeper understanding of this key architectural choice, we present a dedicated empirical and theoretical analysis in Section~\ref{sec:why_moe}, explaining MoE's effectiveness in multitask planning. Validated on a comprehensive suite of online RL benchmarks spanning diverse modalities—including 26 visually-complex Atari games~\citep{atari}, 18 state-based DeepMind Control (DMC) Suite tasks~\citep{dmc}, and 4 text-based Jericho environments~\citep{jericho}—\textit{ScaleZero} robustly matches and often surpasses the performance of single-task expert agents.

Second, as an external, procedural strategy to optimize the overall learning process, we introduce \textit{Dynamic Parameter Scaling (DPS)}, a novel online mechanism that couples model capacity to learning progress.
DPS adaptively curates the set of active tasks based on real-time return feedback and orchestrates a phased expansion of model capacity by injecting lightweight LoRA adapters~\citep{hu2022lora}. By strategically freezing previously trained parameters, DPS creates a curriculum of model capacity that directs computational resources where they are most needed.
Our experiments demonstrate that when augmented with this strategy, our method achieves performance nearly on par with single-task agents while reducing total environment interactions by around 28.5\% on the DMC benchmark, offering a superior trade-off between final performance and computational budget. 

Our main contributions are summarized as follows: (1) We provide the quantitative diagnosis of plasticity collapse in unified world models within heterogeneous MTRL, establishing a concrete link between performance degradation and internal learning dynamics. (2) We conduct a systematic architectural exploration that yields \textit{ScaleZero}, a unified world model that demonstrates exceptional performance and generalization across distinct tasks from three distinct benchmarks. (3) We propose \textit{Dynamic Parameter Scaling}, an adaptive training strategy that dynamically allocates model capacity and computational resources, reducing total environment interactions by around 28.5\%.

\section{Related Work}
\label{sec:related_work}
\textbf{MCTS with Learned World Models.}
Planning in a learned latent space, popularized by MuZero~\citep{muzero}, is a dominant RL paradigm~\citep{sampledmuzero,stochastic,gumbel,rezero,lightzero}. Recent works have integrated Transformers to enhance representational capacity~\citep{iris, pu2024unizero, dino-wm}, achieving state-of-the-art performance in single-task domains. However, their monolithic design is a critical liability in heterogeneous multitask settings, where they suffer from representational interference and plasticity collapse. Our work directly confronts this architectural bottleneck.

\textbf{Multitask Reinforcement Learning.}
MTRL aims to improve data efficiency by sharing knowledge across tasks~\citep{vithayathil2020survey,ScaleQ, tdmpc2, MDT, jat}. Existing methods mitigate interference through architectural solutions like task-specific modules~\citep{schmied2023learning, sun2022paco} or optimization-based approaches that manage gradient conflicts~\citep{moco, ma2023harmonydream}. These strategies, however, have been predominantly studied outside of latent-space planning, where the unique challenge of disentangling dynamics prediction remains largely unaddressed.

\textbf{Sparse and Parameter-Efficient Architectures.}
We draw inspiration from two complementary paradigms.
Sparse MoE models offer a natural architectural prior for multitask specialization by enabling conditional computation~\citep{dai2024deepseekmoe, obando2024mixtures}. 
Concurrently, parameter-efficient methods like LoRA~\citep{hu2022lora} provide a lightweight mechanism for adaptation. 
However, conventional LoRA approaches are typically restricted to static, offline fine-tuning~\citep{agiza2024mtlora, huang2023lorahub}. To adapt these principles to non-stationary online data streams, our approach diverges from standard methods that depend on predefined task boundaries. Instead, we introduce a mechanism that autonomously allocates parameter space in response to distinct distribution shifts. By unifying MoE with this adaptive DPS strategy within a \textit{transformer-based world model}, we create a system that effectively balances architectural specialization with dynamic plasticity for large-scale MTRL.
See Appendix~\ref{sec:appendix_related_work} for more discussion.

\vspace{-10pt}
\section{Background}
\vspace{-5pt}
\label{background}
\subsection{MCTS with Learned World Models}
Monte Carlo Tree Search is a powerful planning algorithm that has demonstrated remarkable success in domains with known rules \citep{alphago, alphazero}.
Methods employing MCTS for planning within a learned latent space represent the state-of-the-art in complex sequential decision-making \citep{muzero, efficientzero}.
These approaches learn a world model comprising three components: (i) a \textit{representation model} $h_\theta$ that encodes observation-action history into a latent state $z_t$; (ii) a \textit{dynamics model} $g_\theta$ that predicts the next latent state and reward $(\hat{z}_{t+1}, \hat{r}_t) = g_\theta(z_t, a_t)$; and (iii) a \textit{prediction model} $f_\theta$ that outputs a policy and value $(p_t, v_t)$ to guide the MCTS planner.
Recent architectures, exemplified by UniZero~\citep{pu2024unizero}, unify these components into a single, monolithic Transformer. 
This design enables end-to-end optimization via a composite loss that aligns the model's predictions with targets derived from the MCTS planner: a policy target ($\pi_t$) based on the root node's visit counts, and a bootstrapped TD value target ($\hat{v}^{targ}_t$) ~\citep{muzero}. 
These are combined with objectives for predicting the reward ($r_t$) and the next latent state ($z_{t+1}$):

\vspace{-12pt}
\begin{equation}
\mathcal{L}_{\text{Unified}} = \sum_{t=0}^{H-1} \left( \mathcal{L}_{\text{value}}(v_t, \hat{v}^{targ}_t) + \mathcal{L}_{\text{policy}}(p_t, \pi_t) + \mathcal{L}_{\text{reward}}(\hat{r}_t, r_t) + \mathcal{L}_{\text{dynamics}}(\hat{z}_{t+1}, \operatorname{sg}(z_{t+1})) \right),
\label{eq:unizero_loss}
\end{equation}
where $H$ is the context length, $\operatorname{sg}(\cdot)$ means stop-gradient.
While sample-efficient and hyperparameter-insensitive for single-task training, this unified scheme exposes a critical vulnerability: in multitask settings, the shared transformer backbone is updated by an aggregated gradient.
This shared update mechanism becomes the nexus of interference, 
impeding a single model's ability to learn multiple complex task and leading directly to the \textit{plasticity collapse} we identified in our diagnosis.

\subsection{Gradient Conflict in Unified Architectures}

The challenge of MTRL is to train a single, task-conditioned policy $\pi_\theta(a|s, k)$ on a distribution of $K$ tasks, $\{\tau_1, \dots, \tau_K\}$. 
Even within a single task, the composite losses of a world model (e.g., for dynamics, reward, and policy) can conflict, creating a challenging multi-objective optimization problem~\citep{ma2023harmonydream}. In MTRL, this challenge is compounded by inter-task gradient conflicts. 
In a unified architecture with shared parameters $\theta$, the total loss is the sum of individual task losses, $\mathcal{L}_{\text{MTRL}}(\theta) = \sum_{k=1}^{K} \mathcal{L}^{k}$, where $\mathcal{L}^{k}$ is the loss from Eq.~\ref{eq:unizero_loss} for task $k$. Consequently, the learning dynamics are driven by the sum of per-task gradients: $G_{\text{total}} = \sum_{k=1}^{K} g_k$, where $g_k = \nabla_\theta \mathcal{L}^{k}$.
The core problem, known as \textit{gradient conflict}~\citep{CAGrad}, arises when these per-task gradients are misaligned, a condition quantified by their cosine similarity: $\cos(g_i, g_j) < 0$. A negative similarity indicates that an update improving performance on task $i$ actively degrades performance on task $j$. In a monolithic architecture where all core parameters are shared, frequent gradient conflicts lead to destructive interference~\citep{sodhani2021multi}. This dynamic forces the model into a suboptimal compromise, creating the representational bottlenecks and catastrophic performance drops characteristic of plasticity collapse~\citep{dohare2024loss}.

\subsection{Architectural Paradigms for Mitigating Interference}

To overcome the limitations of monolithic designs, architectural solutions have been proposed to enable parameter specialization and mitigate interference. Two prominent paradigms are particularly relevant. Sparse \textit{Mixture-of-Experts} replaces the dense feed-forward network in a Transformer block with a set of $N$ parallel "expert" networks~\citep{moe_survey}. For each input, a trainable gating function $G(x)$ sparsely selects a small subset of experts (e.g., top-k) to process the token:
$
\mathrm{MoE}(x) = \sum_{i=1}^{N} G_i(x) \cdot \mathrm{Expert}_i(x).
$
This conditional computation creates specialized pathways within the model, allowing different inputs (e.g., from different tasks) to be processed by distinct subsets of parameters, thereby reducing interference and increasing model capacity without a proportional rise in computational cost. Recent designs also explore hybrid approaches, such as including a shared expert alongside specialized ones, to balance generalization and specialization~\citep{dai2024deepseekmoe}. \textit{Low-Rank Adaptation} offers a parameter-efficient method for adapting large pre-trained models~\citep{hu2022lora}. Instead of fine-tuning the entire weight matrix $W_0$, LoRA freezes $\theta_B$ and injects a trainable, low-rank "update" matrix, $\Delta \theta = BA$. The modified forward pass becomes $(\theta_B + \alpha BA)x$. By training only the small low-rank factors $(A, B)$, LoRA can efficiently learn task-specific modifications while preserving the general knowledge in the pre-trained weights. This approach has proven effective for creating extensive multitask capabilities on top of a single base model, where different LoRA modules can be composed to handle diverse tasks~\citep{huang2023lorahub}.

\vspace{-5pt}
\section{Method}

\label{sec:method}
We begin in Sec.~\ref{sec:method_plasticity} by quantitatively diagnosing \textit{plasticity collapse} in multitask training.
To address this, Sec.~\ref{sec:method_scalezero} presents a systematic design space exploration across five axes—which culminates in our proposed model, \textit{ScaleZero}.
Finally, to resolve the static allocation issue, we introduce \textit{Dynamic Parameter Scaling} in Sect.~\ref{sec:method_dps}, a LoRA-based strategy that adaptively allocates model capacity.

\subsection{Diagnosing Plasticity Collapse in Multitask Planning}
\label{sec:method_plasticity}

\begin{figure}[ht]
\vspace{-4pt}
\centering
\includegraphics[width=0.85\textwidth]{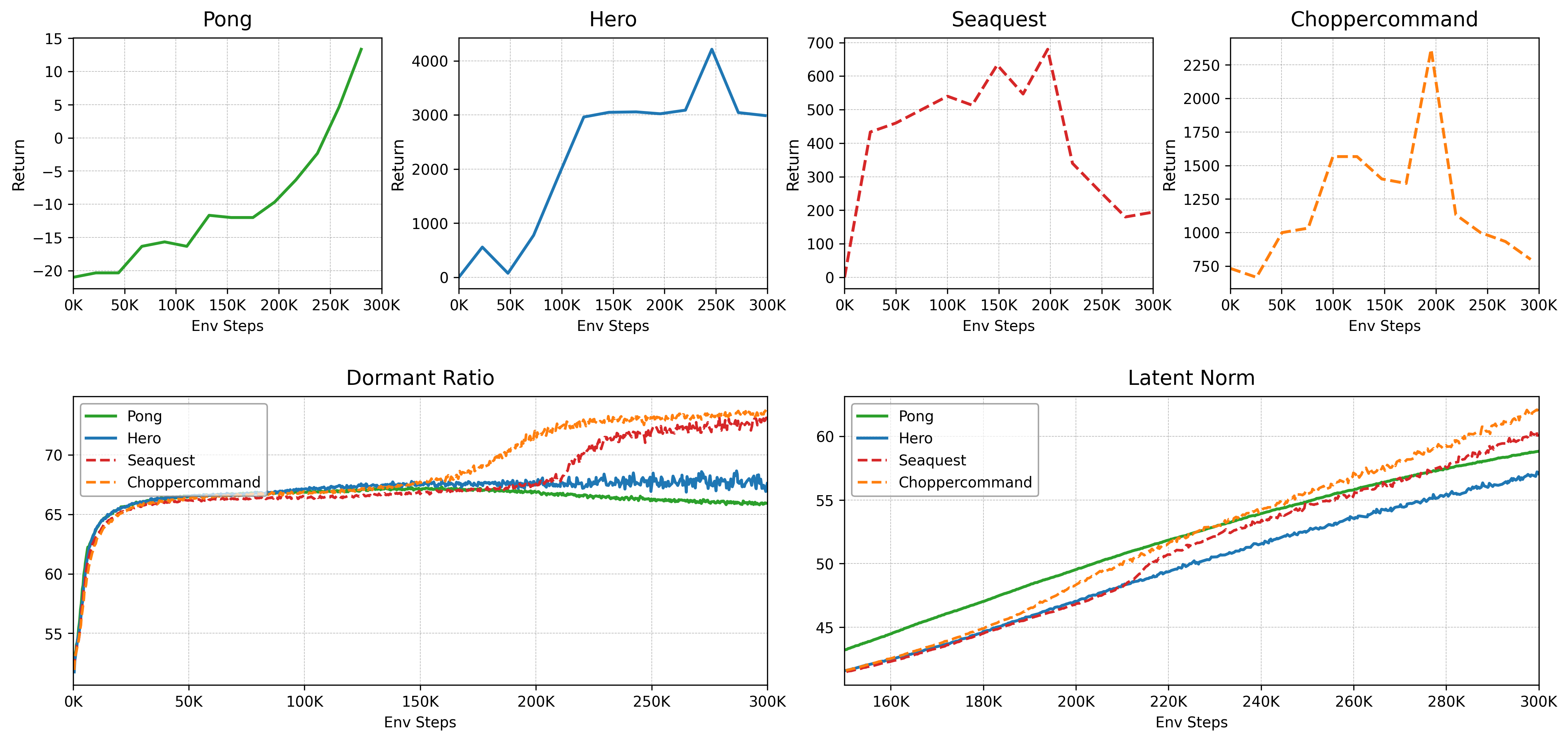}
\caption{
\textbf{Plasticity collapse in the baseline (UniZero) on a multitask Atari benchmark.} While simple tasks like \emph{Pong} and \emph{Hero} show stable learning, complex tasks such as \emph{Seaquest} and \emph{ChopperCommand} suffer a catastrophic performance collapse in later training (Top).
This failure is precisely correlated with a sharp spike in the dormant neuron ratio of the transformer (Bottom Left) and an uncontrolled inflation of the latent state norm (Bottom Right), empirically validating the link between external performance and internal learning dynamics.
}
\vspace{-10pt}
\label{fig:plasticity_loss}
\end{figure}

To empirically substantiate the challenges of representational bottlenecks and plasticity collapse, we conduct a quantitative analysis of a unified world model's internal dynamics during multitask training. 
We track two key metrics indicative of network plasticity degradation:
(1) 
$
\textit{DormantRatio}(l) = \frac{1}{N_{l}} \sum_{i=1}^{N_{l}} \mathbf{1}!\left(|h_{i}^{l}| \le \epsilon\right):
$
This metric quantifies the proportion of neurons whose activation magnitude $|h_i^l|$ falls below a threshold $\epsilon$.
A high ratio signifies a loss of active pathways and reduced network plasticity.
(2) \textit{Latent State Norm:} We compute the average L2 norm of the latent state ($|z_t|_2$).
An uncontrolled inflation of this norm is a well-known indicator of training instability~\citep{team2025kimi}.

Diagnostic experiments on UniZero, conducted across a multitask Atari8 suite (Appendix~\ref{app:atari_exp}), reveal a critical failure mode in complex learning scenarios. 
While simpler tasks like \emph{Pong} achieve stable convergence, more complex ones such as \emph{Seaquest} exhibit initial learning followed by a catastrophic performance collapse (Figure~\ref{fig:plasticity_loss}). 
This collapse coincides with a sharp increase in the dormant neuron ratio and an inflated latent state norm. 
The representational nature of this failure is corroborated by a marked decline in the feature effective rank~\citep{dohare2024loss}, indicating a degradation of the learned feature space (Figure~\ref{fig:appendix_design_space_eff_rank_en}).
We attribute this failure mode to two intertwined factors: (1) \textit{Gradient Competition}, where gradients from simpler tasks overwhelm those from complex ones, and (2) \textit{Representation Interference}, where the shared network fails to maintain diverse features, leading to a representational bottleneck.
Based on this diagnosis, our method tackles plasticity collapse from two complementary perspectives: (1) an \textit{internal}, architectural solution, \textit{ScaleZero} (Section~\ref{sec:method_scalezero}), designed to mitigate interference within a single learning iteration. (2) an \textit{external}, procedural strategy, \textit{Dynamic Parameter Scaling} (Section~\ref{sec:method_dps}), which optimizes resource allocation across the entire learning process.

\subsection{Architecture Design of ScaleZero via Principled Exploration}

\label{sec:method_scalezero}

\begin{figure}[ht]
    \centering
    \begin{subfigure}[b]{0.55\textwidth}
        \centering
        \includegraphics[width=1.3\textwidth, height=5.8cm, keepaspectratio]{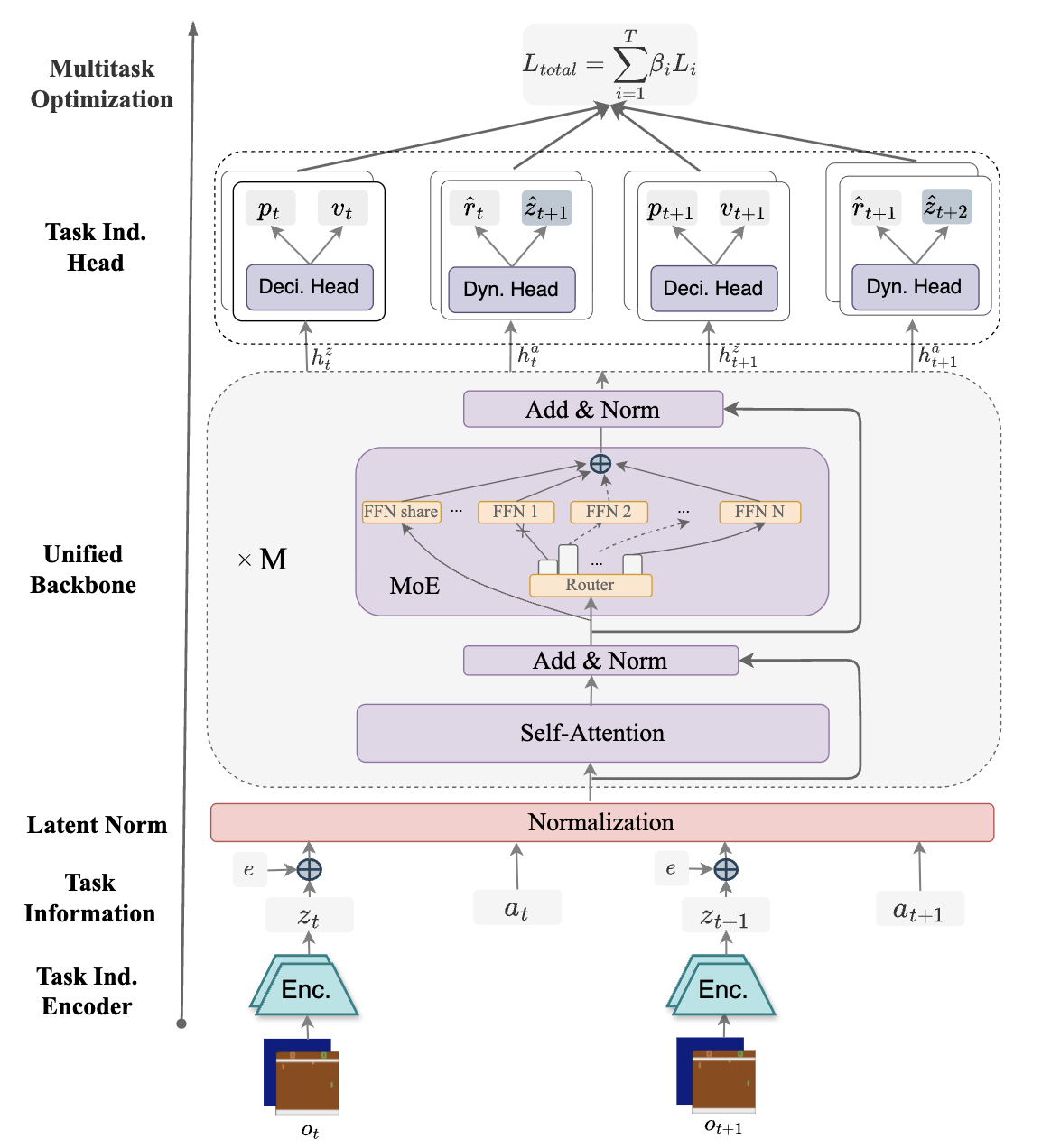}
        \caption{Design Space of UniZero for Multitask learning}
        \label{fig:design_space_a}
    \end{subfigure}\hfill
    \begin{subfigure}[b]{0.45\textwidth}
        \centering
        \includegraphics[width=\textwidth, height=5.8cm, keepaspectratio]{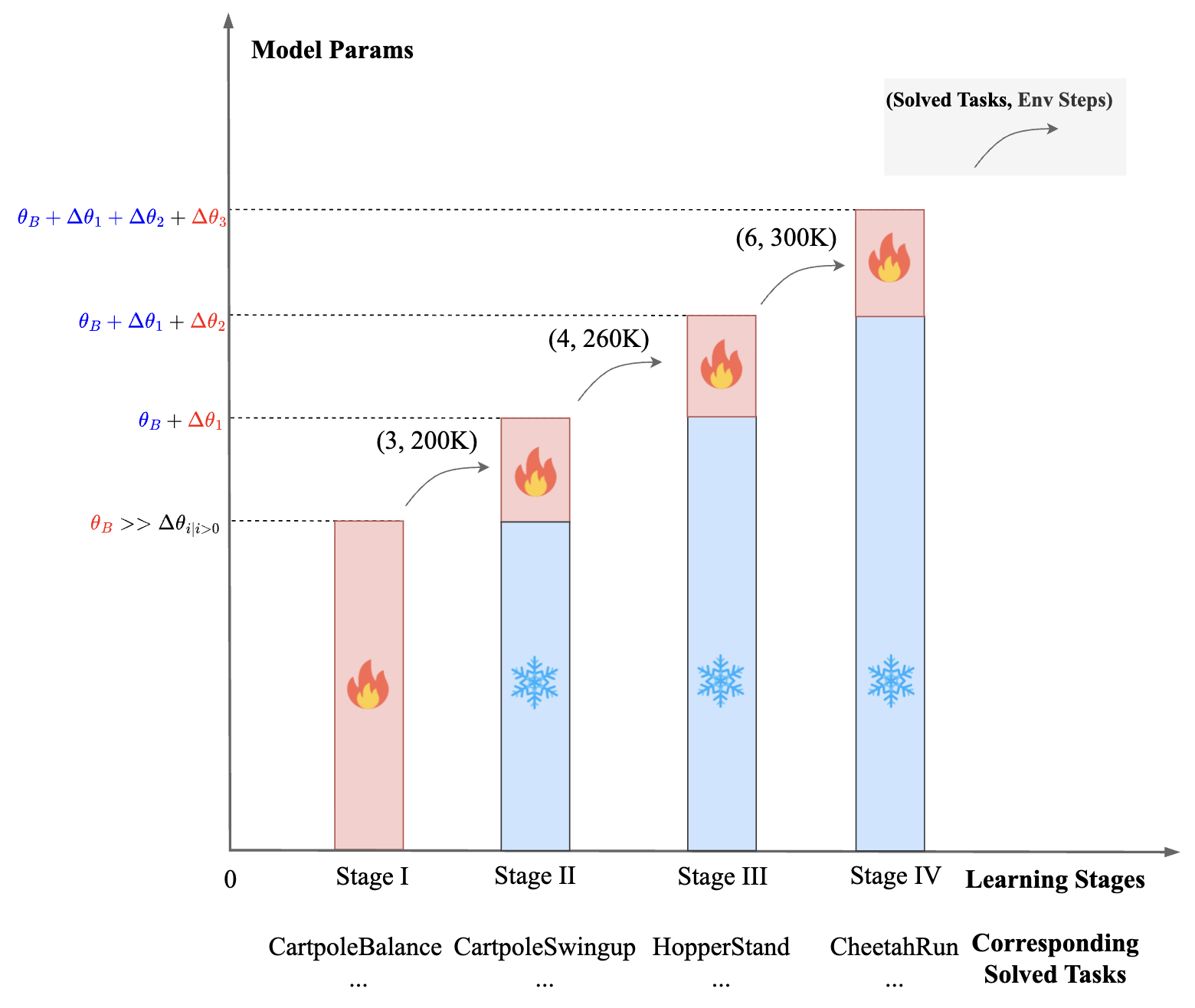}
        \caption{Dynamic Parameter Scaling}
        \label{fig:design_space_b}
    \end{subfigure}
    \caption{
        \textbf{(a)} A systematic exploration of the UniZero design space across five axes: task conditioning, encoder architecture, latent normalization, backbone design, and optimization. This investigation informs the design of our proposed \textit{ScaleZero} model. 
        \textbf{(b)} A conceptual diagram of \textit{Dynamic Parameter Scaling (DPS)}. DPS progressively expands model capacity by injecting LoRA adapters in stages, triggered by learning progress. This creates a curriculum of model, directing resources toward unsolved tasks while preserving existing knowledge.
    }
    \vspace{-12pt}
    \label{fig:design_space}
\end{figure}

\begin{figure}[ht]
    \centering
    \includegraphics[width=0.45\textwidth]{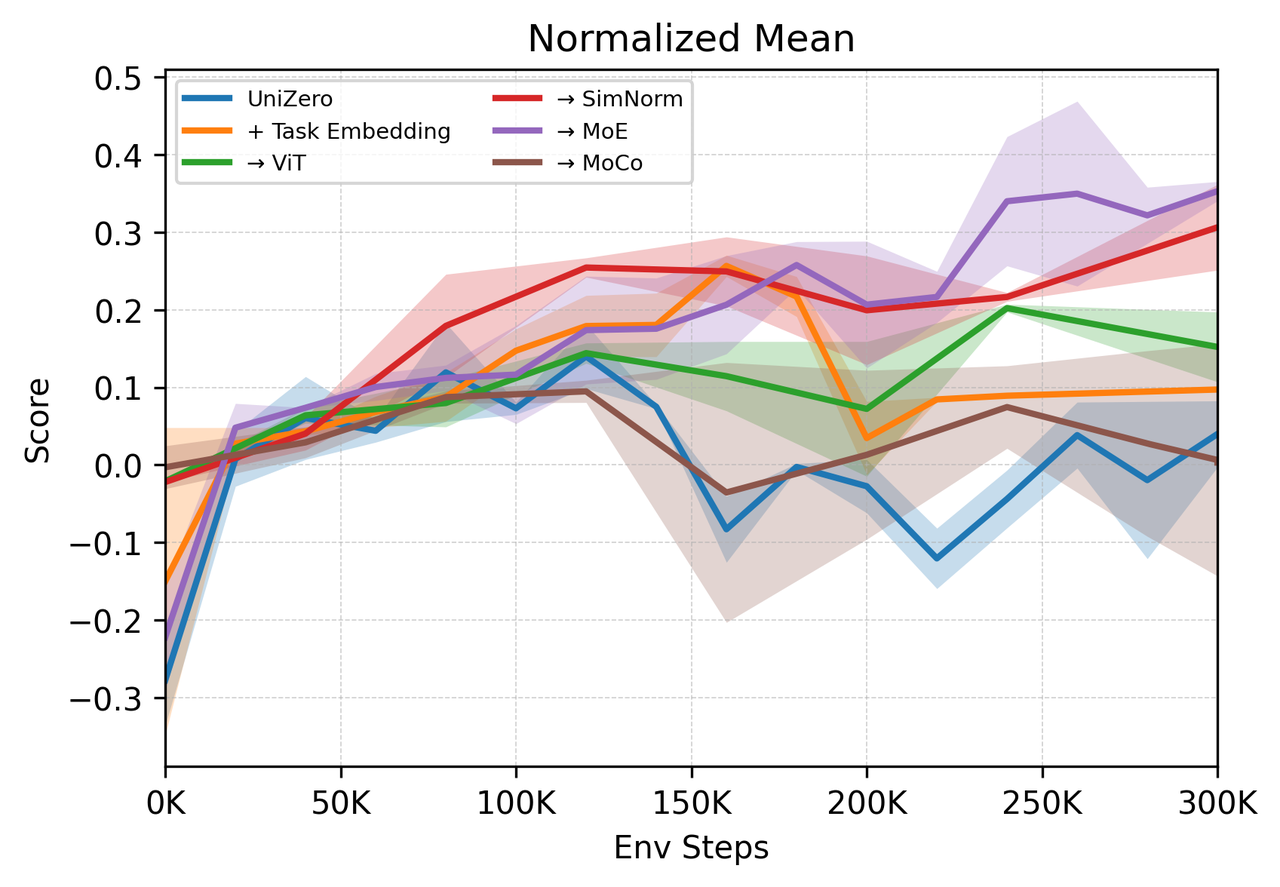}
    \includegraphics[width=0.45\textwidth]{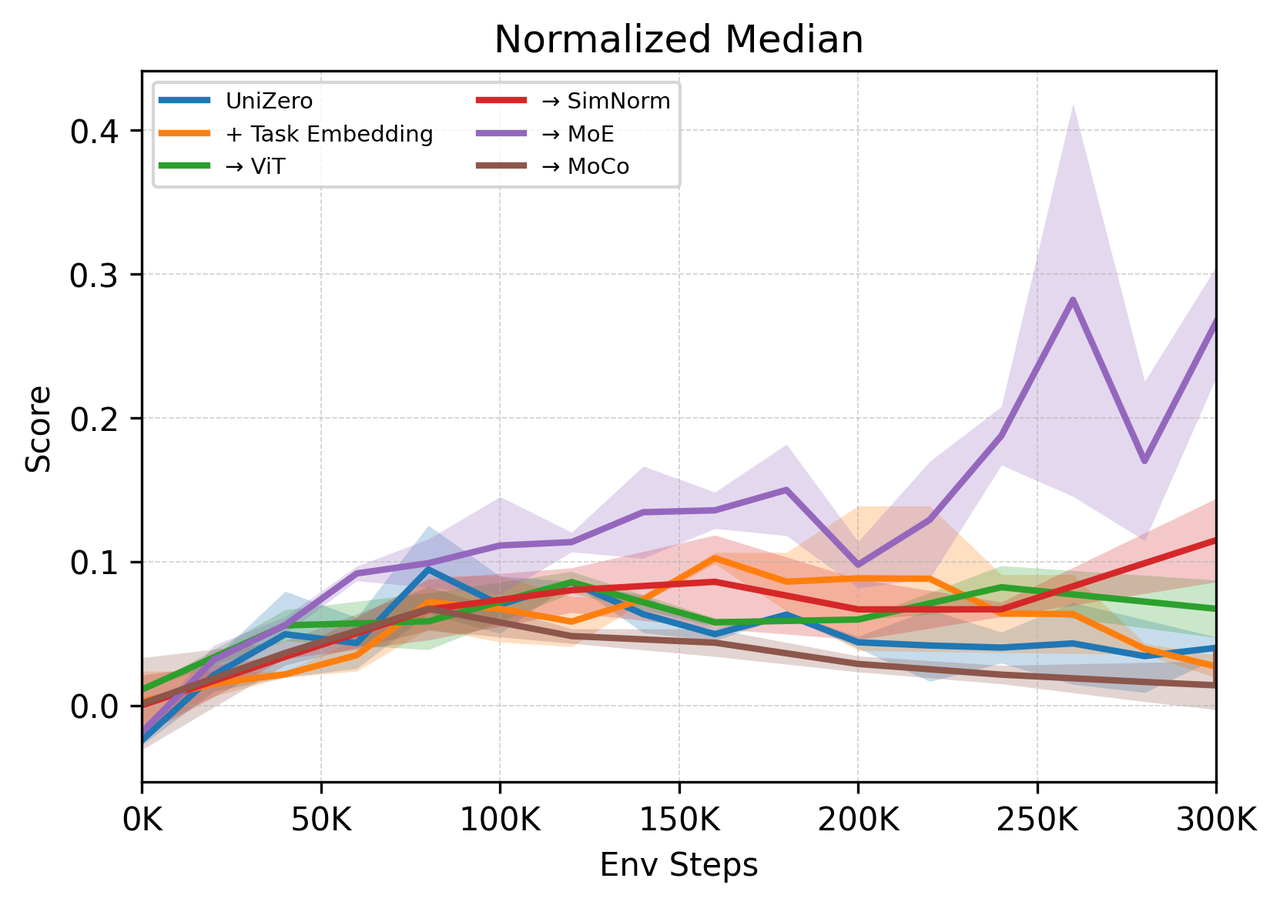}
    \vspace{-4pt}
    \caption{
        \textbf{Performance impact of architectural modifications on the Atari8 multitask benchmark.} This ablation across the UniZero design space reveals that replacing the Transformer backbone with a \textit{Mixture-of-Experts architecture yields the most significant and consistent performance gains}. In contrast, other interventions, with the partial exception of SimNorm, provide marginal or inconsistent benefits. These results underscore the centrality of the MoE's conditional computation in overcoming the limitations of a shared, dense backbone.
    }
    \label{fig:atari8_performance_comparison}
    \vspace{-15pt}
\end{figure}

Motivated by the above diagnosis of plasticity collapse, we conduct a principled exploration of the UniZero design space to forge a robust solution on Atari8 multitask benchmark (see Appendix~\ref{app:atari_exp} for experimental details).
We systematically evaluate five key design axes (Figure~\ref{fig:design_space_a}) drawn from three core dimensions (input representation, model architecture, and optimization strategy): \textit{task conditioning}, \textit{encoder architecture}, \textit{latent normalization}, \textit{backbone design}, and \textit{optimization strategy}. Specific implementation details are provided in Appendix~\ref{appendix:exploaration_design_space}. 
The comprehensive results, summarized in Figure~\ref{fig:atari8_performance_comparison}, reveal that the integration of a MoE backbone provides a uniquely significant and robust performance uplift, directly counteracting the failure modes identified in our initial diagnosis. This is further corroborated by the complete plasticity metric analysis in Figure~\ref{fig:appendix_design_space_plasticity}, which demonstrates that the MoE-based model consistently maintains healthy latent dynamics (i.e., lower dormant ratios and stable latent norms) across all tasks.
Below, we detail the findings of each axis.

\textbf{Explicit Task Conditioning.} A shared model processing raw observations may face ambiguity in discerning the underlying task, especially when initial states are similar across environments~\citep{rakelly2019efficient,sodhani2021multi}. To test if providing an explicit task identity $k$ could resolve ambiguity, we concatenate a learnable task embedding $e_k$ to the latent state $z$.
While this approach accelerates initial convergence on simpler tasks, it offers marginal long-term benefits for complex ones and fails to mitigate plasticity collapse, as evidenced by minimal improvement in metrics like the dormant neuron ratio (Figure~\ref{fig:appendix_design_space_plasticity}).
This indicates that such input-level task conditioning are insufficient to resolve deep-seated representational conflicts in online MTRL.

\textbf{Encoder Architecture (ResNet vs. ViT).} We contrast a standard ResNet-like encoder in ~\cite{pu2024unizero} with a Vision Transformer to assess the encoder's impact on visual representation quality.
On the Atari8 benchmark, the choice of encoder does not yield a significant performance difference.
We attribute this to the limited visual heterogeneity in this specific task suite, but posit that the scalability and powerful feature extraction of ViT are critical for broader, more diverse multitask domains.

\textbf{Latent Normalization (LayerNorm vs. SimNorm).} To directly combat the observed latent norm inflation, we evaluate SimNorm~\citep{tdmpc2} against the standard LayerNorm~\citep{layernorm}.
SimNorm, which projects latents onto a simplex to enforce a norm constraint, effectively stabilized training and prevented performance collapse, consistent with its reported success in less heterogeneous settings~\citep{tdmpc2}. However, in our highly diverse multitask setup, this hard constraint appears to curtail representational expressiveness, leading to suboptimal final performance.
This highlights a critical trade-off between stability and capacity for online heterogeneous RL training.

\textbf{Mixture-of-Experts Backbone.} The most impactful modification is replacing the dense feed-forward networks with sparse MoE layers in UniZero's transformer.
This conditional computation paradigm directly addresses the core challenges of multitask planning by mitigating the intertwined issues of representational interference and gradient conflict, as it routes task-specific computations and their corresponding gradient updates through distinct, specialized expert sub-networks. 
As validated in Figures~\ref{fig:atari8_performance_comparison} and \ref{fig:appendix_design_space_plasticity}, the MoE backbone yields substantial performance gains.
We attribute this success to its inherent ability to mitigate gradient conflict, a mechanism we analyze in depth in Section~\ref{sec:why_moe}.
There, we provide both empirical validation and a theoretical analysis demonstrating that the upper bound on gradient conflict for an MoE layer is \textit{strictly lower} than that of its dense counterpart.

\textbf{Multitask Gradient Correction.} We also explored a dynamic gradient re-weighting scheme inspired by MoCo~\citep{moco} to directly mitigate gradient interference between tasks. While this approach showed promise on some tasks, it introduced significant computational overhead by increasing per-step training time by approximately 40\%, and demonstrated inconsistent efficacy across our heterogeneous task suite. Given this unfavorable performance-cost trade-off, we defer the investigation of more efficient gradient correction methods to future work.

\textbf{The ScaleZero Architecture.} The result of our systematic investigation is \textit{ScaleZero}, a novel architecture designed as a efficient and scalable world model.
It is defined by the integration of three core components: (1) a \textit{ViT-like Encoder} for powerful and scalable feature extraction; (2) an \textit{MoE Backbone} that uses sparse, conditional computation to increase model capacity while mitigating task interference; and (3) an explicit \textit{Layer Normalization} applied to the encoded latent state to ensure a robust balance between stability and representational expressiveness.
This core is complemented by a modular design that decouples task-specific encoders and heads, enabling flexible handling of diverse input and output modalities.
As validated in Section~\ref{sec:exp}, ScaleZero provides an effective blueprint for building generalist world models.
Differences from \textit{UniZero} are detailed in Appendix~\ref{appendix:core_modifications}.

\subsection{Dynamic Parameter Scaling for Efficient Multitask Learning}
\label{sec:method_dps}

To tackle the second challenge in multitask planning---static resource allocation---we propose \textit{Dynamic Parameter Scaling (DPS)}.
As illustrated in Figure~\ref{fig:design_space_b}, this strategy departs from the conventional \textit{one-size-fits-all} paradigm by dynamically aligning model capacity with learning progress.
DPS operates on two synergistic principles: (1) \textit{adaptive task curation}, which focuses computational effort exclusively on unsolved tasks, and (2) \textit{progressive capacity expansion}, which injects new parameters only when necessitated by task demands.
This combination establishes a "curriculum of model complexity," directing resources precisely where they are most required.
DPS employs dynamic, multi-stage training to manage tasks and parameters.
\begin{itemize}[leftmargin=*]
    \item \textbf{Adaptive Task Curation:} Let $\mathcal{T} = \{\tau_i\}_{i=1}^{N}$ denote the full set of tasks.
    A task $\tau_i$ is deemed "solved" once its performance metric, $\text{Metric}(\tau_i)$, surpasses a predefined threshold $\varepsilon_i$.
    At any training step $t$, we maintain an active set of unsolved tasks, $\mathcal{U}_t \subseteq \mathcal{T}$.
    To maximize efficiency, both data collection and gradient updates are performed \textit{only} for tasks within $\mathcal{U}_t$.
    Once solved, a task is removed from this active set, thereby ceasing all computational overhead associated with it.

    \item \textbf{Staged Capacity Expansion:} 
    The training proceeds in $S+1$ stages over a total of $T_{\max}$ iterations.
\textit{Stage 0 (Warm-up)} trains the base model $\theta_B$ for an initial $T_0$ iterations. 
This stage establishes a robust foundation by training the shared base model on all tasks to learn general-purpose representations, which are crucial for the subsequent progressive introduction of specialized modules.
Triggered by learning progress, each subsequent \textit{Stage $s \ge 1$ (Expansion)} incorporates a new, independent LoRA module, $\Delta\theta_s= B_s A_s$.
Concurrently, a set of learnable scaling factors is defined to modulate the contribution of both the base model and all adapters. 
Specifically, a scaling factor $\alpha_0$ is associated with the base model weights $\theta_B$, while each LoRA module $\Delta\theta_j$ is assigned a corresponding scaling factor $\alpha_j$ for $j \in \{1, \dots, S\}$.
All scaling factors are initialized to 1.
\end{itemize}

\textbf{Adaptive Stage Transition Triggers.}
The transition from stage $s-1$ to $s$ is triggered upon satisfying either of two criteria, balancing progress against a fixed computational budget:
(1) \textit{Progress-based Trigger:} 
A transition is triggered if the number of newly solved tasks during the current stage reaches a quota $Q_s$.
(2) \textit{Budget-based Trigger:} To prevent stagnation on persistently arduous tasks, a transition is forced if the iteration count in the current stage exceeds a pre-allocated limit, $\lceil (T_{\max} - T_0) / S \rceil$.

\textbf{Optimization and Parameter Isolation.}
The process starts with the base weight matrices from $\theta_B$, which are pre-trained in Stage 0 and subsequently frozen.
For any base matrix $W_0 \in \theta_B$, its effective weight is progressively augmented.
At the beginning of stage $s \ge 1$, the effective matrix $W^{(s)}$ is defined as:
$W^{(s)} = \alpha_0 W_0 + \sum_{j=1}^{s} \alpha_j \Delta\theta_j = \alpha_0 W_0 + \sum_{j=1}^{s} \alpha_j B_j A_j.$
A key principle of DPS is parameter isolation. Upon advancing to stage $s$, all previously learned parameters---the backbone $\theta_B$ and adapters $\{\Delta\theta_j\}_{j=1}^{s-1}$---are \textit{frozen}.
To resolve scale ambiguity, the scaling factor $\alpha_s$ for the currently active adapter $\Delta\theta_s$ is temporarily fixed at 1.
Optimization is thereby focused on the newly added LoRA module $\Delta\theta_s$, the base model scaling factor $\alpha_0$, and the set of scaling factors corresponding to previously frozen adapters, $\{\alpha_j\}_{j=1}^{s-1}$.

This strategy yields two principal advantages.
First, it implements a \textit{resource-efficient training curriculum}, where new parameters are only introduced when required by the remaining unsolved tasks. 
Second, by isolating new learning within dedicated adapters, DPS provides \textit{targeted plasticity} for difficult tasks while \textit{preventing catastrophic forgetting or negative transfer} to previously mastered skills, whose knowledge is preserved in the frozen backbone.
DPS thus achieves a favorable trade-off between computational efficiency and final performance. (See Appendix~\ref{appendix:dps} for pseudocode).

\section{Experiments and Analysis}
\vspace{-5pt}
\label{sec:exp}
Our experiments are designed to validate the two central claims of this work.
First, in Section~\ref{sec:multitask_benchmarks}, we demonstrate that the proposed \textit{ScaleZero} effectively mitigates the representational bottlenecks and plasticity collapse issue.
We assess its ability to achieve powerful performance as a single, generalist agent across a suite of heterogeneous benchmarks spanning visual, proprioceptive-based, and text-based domains (Atari~\citep{atari}, DMControl~\citep{dmc}, and Jericho~\citep{jericho}).
Second, in Section~\ref{sec:dynamic_scaling}, we quantify the efficiency gains afforded by our DPS strategy, showing its effectiveness in overcoming static resource allocation.
Finally, in Section~\ref{sec:why_moe}, we provide a dedicated empirical and theoretical analysis of the MoE, substantiating its role as a foundational element of ScaleZero.
All experiments are conducted in a purely online RL setting, without reliance on expert data.
Implementation details are provided in Appendix~\ref{sec:appendix_implementation}.


\subsection{Multitask Learning Benchmarks}
\label{sec:multitask_benchmarks}

\subsubsection{Atari 100k Benchmark}

\begin{wraptable}{r}{0.55\textwidth}
    \centering
    \vspace{-12pt}
    \caption{
        Performance comparison of our multitask model, ScaleZero, against the single-task UniZero baseline across discrete (Atari) and continuous (DMControl) benchmarks.
    }
    \label{tab:main_results}
    
    \begin{subtable}[t]{\linewidth} 
        \centering
        \vspace{-3pt}
        \tiny
        \begin{tabular}{lcc}
            \toprule
            Algorithm & Norm. Mean & Norm. Median \\
            \midrule
            UniZero (ST) & 0.38 & \textbf{0.21} \\
            UniZero (MT) & 0.31 & 0.16 \\
            ScaleZero (MT) & \textbf{0.39} & 0.16 \\
            \bottomrule
        \end{tabular}
        \caption{
            Human-normalized scores on 26 games in Atari 100k benchmark. ScaleZero achieves a higher mean score. Full training curves are in Appendix~\ref{app:atari_exp}.
            We omit other MT baselines as, to the best of our knowledge, no prior work has tackled all 26 games with a single online RL agent.
        }
        \label{tab:atari26_unizero_vs_scalezero}
    \end{subtable}
    \vspace{-6pt}

    \begin{subtable}[t]{\linewidth} 
        \centering
        \tiny
        \begin{tabular}{lcc}
            \toprule
            Task & UniZero (ST) & ScaleZero (MT) \\
            \midrule
            acrobot-swingup         & 400.3          & \textbf{501.0} \\
            cartpole-balance        & 952.2          & \textbf{990.5} \\
            cartpole-balance\_sparse & \textbf{1000.0} & \textbf{1000.0} \\
            cartpole-swingup        & \textbf{801.3} & 769.0          \\
            cartpole-swingup\_sparse & \textbf{752.5} & 708.1          \\
            cheetah-run             & \textbf{517.6} & 510.9          \\
            ball\_in\_cup-catch      & \textbf{961.6} & 954.2          \\
            finger-spin             & \textbf{810.7} & 574.2          \\
            finger-turn\_easy       & \textbf{1000.0} & \textbf{1000.0} \\
            finger-turn\_hard        & 884.5          & \textbf{982.0} \\
            hopper-hop              & 120.5          & \textbf{138.0} \\
            hopper-stand            & \textbf{602.6} & 583.1          \\
            pendulum-swingup        & 865.6          & \textbf{866.0} \\
            reacher-easy            & \textbf{993.3} & 943.1          \\
            reacher-hard            & \textbf{988.8} & 943.5          \\
            walker-run              & \textbf{587.9} & 562.7          \\
            walker-stand            & \textbf{976.4} & 919.9          \\
            walker-walk             & \textbf{954.6} & 908.7          \\
            \midrule
            Mean                    & \textbf{787.2} & 769.7          \\
            Median                  & 875.1          & \textbf{887.3} \\
            \bottomrule
        \end{tabular}
        \caption{
            Raw scores on 18 DMControl tasks. ScaleZero achieves a superior median score, indicating robust generalist performance.
            Full training curves are in Appendix~\ref{app:dmc_exp}.
            For reference, L2M (MT)~\cite{schmied2023learning} reported a mean score of 840 on an easier 10-task benchmark that largely overlaps with ours, trained via offline supervised learning from expert data.
        }
        \label{tab:dmc18_scalezero_vs_unizero}
        \vspace{-25pt}
    \end{subtable}
\end{wraptable}

\textbf{Setup.}
We evaluate ScaleZero on the 26 games of the Atari 100k benchmark, a standard testbed for heterogeneous MTRL due to its diverse game mechanics. 
We compare one multi-task (MT) ScaleZero agent against the strong single-task (ST) UniZero baseline, where 26 separate models are trained for each game.
Performance is measured using the standard Human-Normalized Score (HNS)~\citep{hms}, reporting both mean and median to capture overall capability.
Further experimental details are available in Appendix~\ref{app:atari_exp}.

\textbf{Results.}
As clearly summarized in \autoref{tab:main_results}, ScaleZero, as a single generalist agent, achieves a higher mean normalized score than the set of 26 independently trained single-task specialists.
This result demonstrates that ScaleZero effectively mitigates the negative interference---a common challenge in heterogeneous MTRL, enabling positive knowledge transfer.
Notably, performance gains are particularly marked on notoriously difficult exploration tasks like \textit{Seaquest}---a game we identified as a primary failure case for the baseline in our initial diagnosis (Figure~\ref{fig:plasticity_loss})
While the median score is marginally lower, influenced by a few hard-exploration games, the higher mean score confirms the overall advantage and generalization ability of our approach.

\subsubsection{DeepMind Control Suite}
\textbf{Setup.}
To evaluate ScaleZero's effectiveness to continuous control, we conduct experiments on 18 tasks from the DeepMind Control Suite.
These tasks, based on the MuJoCo engine~\citep{mujoco}, are characterized by low-dimensional state inputs and continuous action spaces, demanding precise dynamics modeling.
Given the vector-based nature of observations, we use separate MLP encoders for each task.
The experimental protocol remains consistent, comparing one MT ScaleZero agent against 18 individually trained single-task UniZero models.
Details are provided in Appendix~\ref{app:dmc_exp}.

\textbf{Results.}
\autoref{tab:dmc18_scalezero_vs_unizero} demonstrate that ScaleZero achieves competitive performance, surpassing the baseline on several tasks and achieving a superior median score.
This metric is particularly informative for evaluating a generalist agent, as it shows robust performance across the majority of tasks rather than excellence limited to a simpler subset.
This success validates the versatility of the ScaleZero, proving its effectiveness not only in visual domains but also in complex continuous control.

\subsubsection{Jericho: Text-Based Adventure Benchmark}

\textbf{Setup.}
To assess performance on tasks requiring language understanding and long-horizon planning, we employ the Jericho benchmark.
These text-based games present challenges through their combinatorial action spaces and the sparse reward nature.
We evaluate on four representative games and compare against UniZero and CALM+OC~\citep{sudhakar2023language}, a powerful language-model-based method.
Both ScaleZero and UniZero use the same BGE text encoder~\citep{bge_embedding}.

\textbf{Results.}
As reported in \autoref{tab:jericho_results}, ScaleZero achieves performance on par with specialized single-task agents and is competitive with the strong CALM+OC baseline.
This shows that the ScaleZero agent can effectively learn capable policies in environments where both states and actions are linguistic.
These findings substantiate that the design of ScaleZero are modality-agnostic, successfully extending its efficacy to the domain of text-based planning.
Experimental details are provided in Appendix~\ref{app:jericho_exp}.

\begin{table}[htbp]
    \centering
        \tiny
    \caption{Average returns comparison on four Jericho tasks. Detailed learning curves are in Appendix~\ref{app:jericho_exp}.}
    \label{tab:jericho_results}
    \begin{tabular}{lcccc}
        \toprule
        Algorithm & Acorncourt & Omniquest & Detective & Zork1 \\
        \midrule
        CALM+OC (ST)~\citep{sudhakar2023language} & -- & 7.8 & 288.5 & 38.0 \\
        UniZero (ST)     & 10 & 10 & \textbf{295}  & \textbf{44} \\
        ScaleZero (MT)   & \textbf{10} & \textbf{10} & 280  & \textbf{44} \\
        \bottomrule
    \end{tabular}
\end{table}

\subsection{Efficiency Evaluation of Dynamic Parameter Scaling}
\label{sec:dynamic_scaling}

Having established the superior multitask performance of ScaleZero, we now evaluate the efficiency gains afforded by our \textit{Dynamic Parameter Scaling} strategy.
We compare the standard ScaleZero against its \textit{ScaleZero-DPS} variant on the DMC18 benchmark.
The primary metric is the number of environment interactions required to match the strong performance of the single-task baselines.
As shown in \autoref{fig:dmc18_dps_envsteps_summary}, ScaleZero-DPS consistently achieves this performance target using only \textit{71.5\%} of the interactions required by ScaleZero.
This corresponds to a substantial 28.5\% reduction in data sampling and subsequent training cost, directly validating the effectiveness in resolving the static resource allocation problem.
The steeper learning curves shown in Figure~\ref{fig:appendix_dmc18_dps_curves_en} further confirm that these efficiency gains persist throughout the dynamic training process.
A mechanistic analysis in Appendix~\ref{app:dps_dynamics} reveals the agent's efficiency stems from an emergent hierarchical strategy:
the model learns to preserve foundational knowledge in early layers while applying targeted plasticity in later layers to adapt to unsolved tasks, providing a clear mechanistic explanation for its performance.

\begin{figure}[htbp]
    \centering
    \includegraphics[width=\columnwidth]{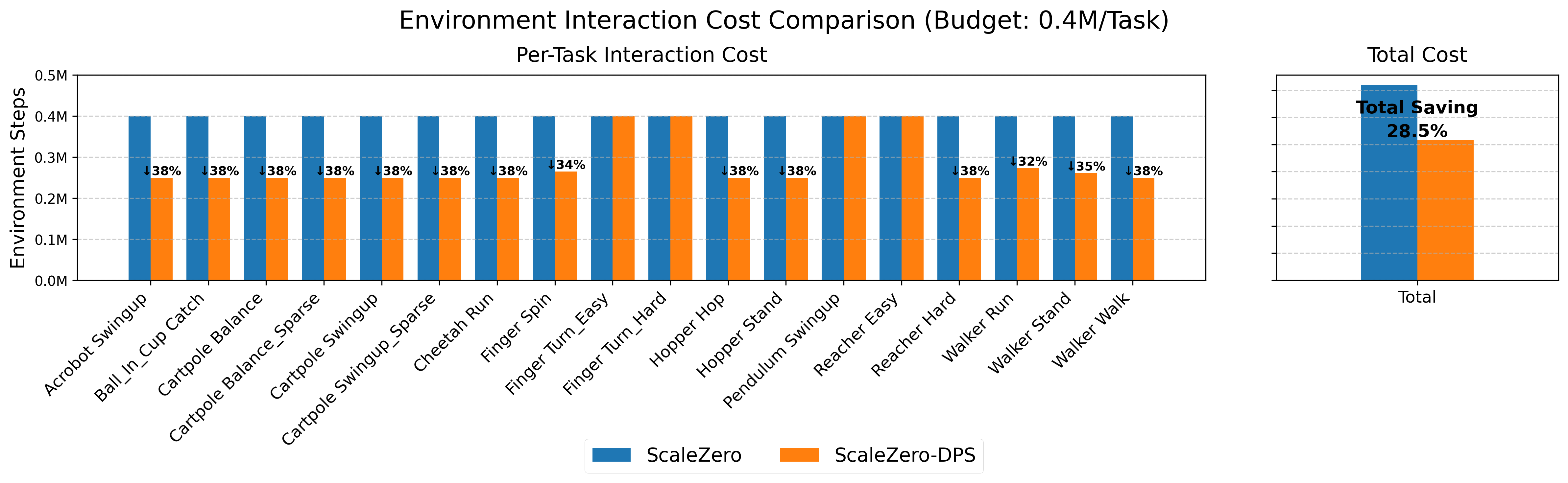}
    \caption{Interaction cost comparison for ScaleZero vs. ScaleZero-DPS on DMControl. Tha latter reaches the target performance with a 28.5\% reduction in the environment cost. Detailed curves are in Appendix~\ref{app:dmc_exp}.}
    \label{fig:dmc18_dps_envsteps_summary}
    \vspace{-10pt}
\end{figure}

\subsection{Understanding MoE Efficacy: An Empirical and Theoretical Analysis}
\label{sec:why_moe}

The superior performance of ScaleZero motivates a deeper investigation into its core architecture: the MoE transformer backbone.
Here, we analyze the mechanisms of MoE from both empirical and theoretical perspectives.
Our empirical findings (detailed in Appendix~\ref{sec:moe_experimal}) reveal that replacing dense MLP layers with MoE counterparts not only reduces gradient conflict within the MoE layers themselves but also in adjacent networks.
To explain these observations, we provide a theoretical justification grounded in the two-phase training characterization of MoE~\citep{chen2208towards}.
We introduce the following informal theorem, which asserts that an MoE layer maintains a strictly lower upper bound on gradient conflict than a dense counterpart---a property governed primarily by the router's specialization.
Detailed explanations, proofs and corollaries are provided in Appendix~\ref{sec:moe_theoretical}.

\begin{theorem}[Upper Bound on Gradient Conflict in MoE Layers \textit{(informal)}]
In a Mixture-of-Experts (MoE) layer, consider two tasks $t_1$ and $t_2$ with routing weights 
$\lambda^{t_1}_m, \lambda^{t_2}_m$ over $M$ experts, and per-task gradients on expert $m$, denoted as 
$g_{t_1}^{(m)}$ and $g_{t_2}^{(m)}$. Let $G$ be the maximum gradient conflict on any single expert, and $\cos(u,v)$ denote the cosine similarity between vectors $u$ and $v$.
Then the following results hold:
(1). (Generally) Define the $m$-th element of vectors $u, v$ as   
$u_m = \lambda^{t_1}_m \|g_{t_1}^{(m)}\|,\; 
 v_m = \lambda^{t_2}_m \|g_{t_2}^{(m)}\|,\; m=1,\dots,M.$ The full-layer gradient conflict admits a general upper bound: $\mathrm{conflict}(G_{t_1},G_{t_2}) \le  G \cdot \cos(u,v)$.
(2). (Sparse Router) In the \textit{exploration stage}, where tasks uniformly select experts, the expected gradient conflict of the MoE layer is approximately bounded by $ G \cdot (1 - M!/ ((M-K)! \, M^K))$, where $K$ is the number of tasks.
(3). (Sparse Router) In the \textit{router learning stage}, where expert sets stabilize, the gradient conflict is roughly upper bounded by $\mathbb{E}[\text{conflict}_{i,j}] \approx G \cdot \frac{U}{ab}$, where $a = |S_i|$, $b = |S_j|$ are the sizes of the expert sets for tasks $i$ and $j$, and $U = |S_i \cap S_j|$ is their intersection size.
\end{theorem}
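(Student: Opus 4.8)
The plan is to define "gradient conflict" concretely as a conflict functional of the two aggregated task gradients, then decompose the layer-level gradient as a weighted sum over experts and push the bound through the weighting. First I would fix the definition: on expert $m$, write the per-task contribution to the full-layer gradient as $\lambda_m^{t_1} g_{t_1}^{(m)}$ and $\lambda_m^{t_2} g_{t_2}^{(m)}$, so that $G_{t_1} = \sum_{m=1}^M \lambda_m^{t_1} g_{t_1}^{(m)}$ and similarly for $t_2$ (stacking across experts, since distinct experts occupy disjoint parameter blocks). Take $\mathrm{conflict}(G_{t_1},G_{t_2})$ to be a quantity controlled by $\langle G_{t_1}, G_{t_2}\rangle$ when this inner product is negative (e.g. $\max(0,-\langle G_{t_1},G_{t_2}\rangle)$, or the negative part of the cosine similarity scaled by the norms). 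Because experts are block-disjoint, $\langle G_{t_1}, G_{t_2}\rangle = \sum_{m=1}^M \lambda_m^{t_1}\lambda_m^{t_2} \langle g_{t_1}^{(m)}, g_{t_2}^{(m)}\rangle$. Bounding each per-expert inner product by $\|g_{t_1}^{(m)}\|\,\|g_{t_2}^{(m)}\|$ times the worst-case per-expert conflict factor $G$ gives $\mathrm{conflict}(G_{t_1},G_{t_2}) \le G \sum_m \lambda_m^{t_1}\|g_{t_1}^{(m)}\| \cdot \lambda_m^{t_2}\|g_{t_2}^{(m)}\| = G\,\langle u, v\rangle$, and normalizing by $\|u\|\|v\|$ (or folding the norms into the definition of conflict) yields part (1): $\mathrm{conflict}(G_{t_1},G_{t_2}) \le G\cos(u,v)$.

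For part (2), in the exploration stage I model the router as assigning each task an independent uniformly random size-$K$ subset of the $M$ experts (here $K$ plays the role of the top-$k$ count; the statement overloads $K$ as "number of tasks", which I would reconcile in the formal version). The overlap between two tasks' active expert sets is then a random variable, and $\cos(u,v)$ is governed by the fraction of shared experts. The expected fraction of experts on which both tasks place mass, under independent uniform selection, is $1 - \Pr[\text{disjoint}]$, and $\Pr[\text{disjoint sets of size }K] = \binom{M-K}{K}/\binom{M}{K}$; rewriting the complementary probability in falling-factorial form gives the $1 - M!/((M-K)!\,M^K)$ expression (modulo the exact combinatorial bookkeeping, which I would tighten — the stated form looks like a with-replacement approximation to the collision probability). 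I would then argue that in expectation $\cos(u,v)$ is at most this overlap fraction, assuming roughly balanced gradient magnitudes across the selected experts, delivering the claimed bound.

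For part (3), in the router-learning stage each task $i$ has settled on a (deterministic) expert set $S_i$ of size $a=|S_i|$, task $j$ on $S_j$ of size $b=|S_j|$, with intersection $U=|S_i\cap S_j|$. Only the $U$ shared experts contribute to $\langle u,v\rangle$, and under an approximate-uniformity assumption ($\lambda_m^{t_i}\approx 1/a$ on $S_i$, with comparable gradient norms) one gets $u_m v_m \approx \frac{1}{a}\cdot\frac{1}{b}$ on each shared expert and $\|u\|\approx 1/\sqrt{a}$, $\|v\|\approx 1/\sqrt{b}$, so $\cos(u,v)\approx U/\sqrt{ab}$; the stated $U/(ab)$ is the unnormalized version $\langle u,v\rangle$ (I would flag this discrepancy and present whichever normalization is intended consistently). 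Combining with part (1) gives $\mathbb{E}[\mathrm{conflict}_{i,j}] \approx G\cdot U/(ab)$, which is strictly below $G$ whenever the specialization is nontrivial ($U < ab$, i.e. the sets are not both singletons equal to each other), recovering the informal claim that MoE's conflict bound is strictly smaller than the dense case ($U = ab$ with a single shared expert, giving back $G$).

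The main obstacle I anticipate is not the algebra but pinning down the right probabilistic and approximation assumptions so the informal "$\approx$" statements become honest inequalities: specifically, justifying that cross-expert gradient alignment can be ignored (the block-disjointness of expert parameters makes this exact for the raw inner product, but the router weights $\lambda_m$ themselves depend on the input and carry gradient, so a fully rigorous treatment needs either a stop-gradient assumption on the router or a separate bound on the gating Jacobian's contribution), and quantifying the "roughly balanced magnitudes" hypothesis needed to pass from $\langle u,v\rangle$ to a clean function of the set sizes. I would handle this by stating these as explicit simplifying assumptions (constant per-expert gradient norm, stop-gradient router, top-$k$ with fixed $k$) in the formal version in \Cref{sec:moe_theoretical}, proving the clean bounds under them, and remarking that relaxing them only adds lower-order correction terms.
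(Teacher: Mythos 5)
Your part (1) is essentially the paper's own argument: decompose the layer gradient over block-disjoint expert parameters, bound each per-expert inner product by $G\,\|g_{t_1}^{(m)}\|\,\|g_{t_2}^{(m)}\|$, and recognize the resulting ratio as $\cos(u,v)\le 1$ via Cauchy--Schwarz. No issues there.

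Parts (2) and (3), however, rest on a different probabilistic model than the one the paper actually uses, and under your model the stated formulas do not come out. In the paper's exploration-stage analysis, $K$ really is the number of tasks and the router is top-$1$: each of the $K$ tasks independently selects a \emph{single} expert uniformly from the $M$ experts, so the event of nonzero layer conflict is exactly a birthday-problem collision, and $\Pr[\text{all distinct}]=M!/\big((M-K)!\,M^K\big)$ is the exact with-replacement count, not an approximation to be "tightened." Your reading of $K$ as a top-$k$ subset size, with each task drawing a random size-$K$ subset and the conflict controlled by expected overlap of $\cos(u,v)$, is a different setup; it yields $\binom{M-K}{K}/\binom{M}{K}$-type expressions and would not reproduce the claimed bound. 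The same issue recurs in part (3): the paper's router-learning-stage bound is again an expectation over top-$1$ selections, where task $i$ picks uniformly from $S_i$ with probability $1-O(1/d)$ (and from the complement with probability $O(1/d)$), so the leading term is simply the probability $U/(ab)$ that the two tasks land on the same shared expert, times the worst-case per-expert conflict $G$. Your deterministic soft-routing model with $\lambda_m\approx 1/a$ spread over all of $S_i$ gives $\cos(u,v)\approx U/\sqrt{ab}$, and the "normalization discrepancy" you flag is not a presentational choice in the statement — it is the symptom of using the wrong model; no consistent normalization of your quantity recovers $G\cdot U/(ab)$. The missing idea, in short, is that stages (2) and (3) are probabilistic collision bounds over sparse top-$1$ routing events (with the $O(1/d)$ routing-error terms supplying the corrections), for which the stated expressions are exact at leading order, rather than expectations of a cosine under subset- or soft-routing assumptions.
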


\section{Conclusion and Future work}
\label{sec:conclusion}
In this work, we diagnose and addressed the critical issue of plasticity collapse in multitask world models. We introduced \textit{ScaleZero}, a unified architecture incorporating a sparse MoE backbone to create specialized computational pathways. Our experiments across 48 distinct tasks (Atari, DMControl, Jericho) demonstrate that ScaleZero robustly matches or surpasses the performance of specialized single-task agents. Furthermore, we propose DPS, a LoRA-based strategy that achieves competitive performance on DMControl while using just 71.5\% of the environment interactions, highlighting a promising path toward greater sample efficiency. Our work provides a robust architectural foundation for developing more capable and efficient generalist agents. 
See Appendix~\ref{sec:appendix_conclusion} for a full discussion.

\vspace{-5pt}
\section{Acknowledgements}
\label{sec:acknowledgements}
We extend our gratitude to several team-members of the Shanghai AI Laboratory for their invaluable assistance, support, and feedback on this paper and the associated codebase. 

\clearpage

\section*{Ethics Statement}
We have read and adhered to the ICLR Code of Ethics. Our research focuses on the development of novel algorithms for multitask reinforcement learning, specifically proposing \textit{ScaleZero} and the Dynamic Parameter Scaling (DPS) strategy. All experiments were conducted on standard, publicly available benchmarks (Atari, DMC, and Jericho), which are simulated environments. Our work does not involve human subjects, nor does it use any personally identifiable or sensitive data. The proposed methods are foundational and do not have immediate, direct real-world applications that could pose ethical risks or societal harm. To the best of our knowledge, our research does not create or exacerbate discrimination or bias, as the training environments do not contain demographic or social data. We have strived to uphold the principles of scientific integrity and transparency by providing a thorough description of our methods and experimental setup. We declare no conflicts of interest.
\section*{Reproducibility Statement}
To ensure the reproducibility of our findings, we provide comprehensive details in the appendix of this paper. The appendix includes: (1) A detailed description of the implementation of our proposed model, \textit{ScaleZero}, and the online Dynamic Parameter Scaling (DPS) strategy.
(2) A thorough account of the experimental setup, including all hyperparameters, evaluation protocols, and the computational infrastructure used.
(3) The complete theoretical proofs for the claims made in the paper.
Furthermore, we commit to releasing our source code publicly upon the completion of the peer-review process to allow for full verification of our results and to facilitate future research in this area.

\bibliography{iclr2026_scalezero}
\bibliographystyle{iclr2026_conference}


\begin{appendices}
\clearpage 

\section*{Appendix Contents} 
\phantomsection 
\addcontentsline{toc}{section}{Appendix Contents} 

\begin{itemize}
    \item[\ref{sec:appendix_implementation}.] \hyperlink{sec:appendix_implementation}{\nameref{sec:appendix_implementation}} \dotfill \pageref{sec:appendix_implementation}
    \item[\ref{sec:appendix_atari}.] \hyperlink{sec:appendix_atari}{\nameref{sec:appendix_atari}} \dotfill \pageref{sec:appendix_atari}
    \item[\ref{sec:appendix_dmc}.] \hyperlink{sec:appendix_dmc}{\nameref{sec:appendix_dmc}} \dotfill \pageref{sec:appendix_dmc}
    \item[\ref{sec:appendix_jericho}.] \hyperlink{sec:appendix_jericho}{\nameref{sec:appendix_jericho}} \dotfill \pageref{sec:appendix_jericho}
    \item[\ref{sec:appendix_moe}.] \hyperlink{sec:appendix_moe}{\nameref{sec:appendix_moe}} \dotfill \pageref{sec:appendix_moe}
    \item[\ref{sec:appendix_related_work}.] \hyperlink{sec:appendix_related_work}{\nameref{sec:appendix_related_work}} \dotfill \pageref{sec:appendix_related_work}
    \item[\ref{sec:appendix_conclusion}.] \hyperlink{sec:appendix_conclusion}{\nameref{sec:appendix_conclusion}} \dotfill \pageref{sec:appendix_conclusion}
    \item[\ref{sec:appendix_llm}.] \hyperlink{sec:appendix_llm}{\nameref{sec:appendix_llm}} \dotfill \pageref{sec:appendix_llm}
\end{itemize}
\clearpage

\section{Implementation Details}

\title{Appendix}
\author{}
\date{}


\maketitle

\label{sec:appendix_implementation}

This section provides a comprehensive overview of the implementation details for ScaleZero and the associated experimental methodology. For clarity, we first establish a baseline by providing a concise summary of the UniZero baseline~\citep{pu2024unizero}, including its training and MCTS procedures. Building upon this foundation, we then sequentially detail the core modifications that define ScaleZero, our explorations of key design spaces, the Dynamic Parameter Scaling (DPS) strategy, and finally, the full hyperparameter and computational setup to ensure reproducibility. 
Our code is available at \textcolor{magenta}{https://github.com/opendilab/LightZero}.

\subsection{UniZero}
\label{appendix:unizero_framework}

As outlined above, we begin by detailing the foundational framework of our baseline, UniZero, to provide the necessary context for our contributions. The learning process revolves around two core procedures operating in a loop:

\begin{enumerate}[leftmargin=*]
    \item \textbf{\texttt{collect\_experience}:} The agent interacts with the environment to gather trajectories. At each timestep, it uses Monte Carlo Tree Search within its learned latent world model to generate an improved policy, from which an action is sampled. The resulting experience tuples (observation, action, reward, done) and the MCTS-improved policy are stored in a replay buffer.
    \item \textbf{\texttt{update\_world\_model}:} The world model, policy network, and value network are jointly optimized using batches of data sampled from the replay buffer. This step updates the model's understanding of the environment's dynamics and improves its decision-making capabilities.
\end{enumerate}

\subsubsection{MCTS in the Learned Latent Space}
The MCTS procedure is central to UniZero's planning capability and is used during the \texttt{collect\_experience} phase. For each action selection, a search tree is built in the latent space over a fixed number of simulations. Each simulation consists of three phases:
\begin{itemize}[leftmargin=*]
    \item \textbf{Selection:} Starting from the root node (the latent state of the current observation), the search traverses the tree by selecting actions that maximize an upper confidence bound score. This score, calculated using the PUCT formula, balances exploiting high-value actions (Q-value) and exploring less-visited actions (policy prior P). The action $a^*$ is chosen according to:
    \begin{equation}
    \label{eq:select_action}
    a^{*} = \arg \max_{a} \left[ Q(\hat{z}, a) + P(\hat{z}, a) \frac{\sqrt{\sum_{b} N(\hat{z}, b)}}{1 + N(\hat{z}, a)} \left( c_{1} + \log \left( \frac{\sum_{b} N(\hat{z}, b) + c_{2} + 1}{c_{2}} \right) \right) \right]
    \end{equation}
    where $N$ is the visit count, $Q$ is the mean action-value, $P$ is the policy prior, and $c_1, c_2$ are exploration constants.

    \item \textbf{Expansion:} When a leaf node is reached, the dynamics network is called to predict the next latent state $\hat{z}^{l+1}$ and reward $\hat{r}^l$. The decision network predicts a policy $p^l$ and value $v^l$ for the new state. This new node is added to the tree, and its outgoing edges are initialized with the predicted policy priors.

    \item \textbf{Backup:} The value $v^l$ predicted at the new leaf node is used to update the statistics of the nodes along the simulation path. The Q-values and visit counts $N$ for all parent state-action pairs are updated recursively back to the root.
\end{itemize}
After all simulations are complete, the visit counts at the root node are normalized to form a temperature-controlled distribution, which serves as the improved policy for action selection.

\subsubsection{Model and Policy Optimization}
\label{appendix:unizero_optimization}
The \texttt{update\_world\_model} step optimizes the network parameters by minimizing a joint loss function over batches of trajectories sampled from the replay buffer. The optimization objective in UniZero is composed of four main prediction losses:
\begin{equation}
\label{eq:unizero_loss_appendix}
\begin{split}
\mathcal{L}_{\text{UniZero}}(\theta) \doteq \mathop{\mathbb{E}}_{(\dots) \sim \mathcal{B}} \Big[ \sum_{t=0}^{H-1} \Big( & \beta_z  \underbrace{\| \hat{z}_{t+1} - \operatorname{sg}(\bar{h}(o_{t+1})) \|^{2}_{2}}_{\text{next-latent prediction}}
+ \beta_r \underbrace{\operatorname{CE}(\hat{r}_{t}, r_{t})}_{\text{reward prediction}} \\
+ & \beta_p \underbrace{\operatorname{CE}({p}_{t}, \pi_{t})}_{\text{policy prediction}} 
+ \beta_v \underbrace{\operatorname{CE}({v}_{t}, \hat{v}^{targ}_{t})}_{\text{value prediction}} \Big) \Big]
\end{split}
\end{equation}
where $H$ is the training context length and the loss components are defined as follows:
\begin{itemize}[leftmargin=*]
    \item \textbf{Next-latent Prediction:} The model predicts the next latent state $\hat{z}_{t+1}$. The target is generated by a slowly-updating target encoder $\bar{h}$ and is detached from the computation graph using a stop-gradient operator ($\operatorname{sg}$).
    \item \textbf{Reward and Value Prediction:} The model predicts the immediate reward $\hat{r}_t$ and the state value $v_t$. To handle varying scales across tasks, both predictions are framed as discrete classification problems and optimized using a cross-entropy (CE) loss. The value target $\hat{v}^{targ}_t$ is a bootstrapped $n$-step TD target computed using future rewards and a target value network.
    \item \textbf{Policy Prediction:} The model's policy head $p_t$ is trained to mimic the MCTS-improved policy $\pi_t$. This serves as a policy distillation step, which enhances training stability compared to direct policy gradient methods.
\end{itemize}
The terms $\beta_z, \beta_r, \beta_p, \beta_v$ are fixed coefficients that balance the contribution of each component. 

\subsection{Core Modifications of ScaleZero over UniZero}
\label{appendix:core_modifications}

Our core model, ScaleZero, is built upon the UniZero baseline, with key components upgraded to enhance learning efficiency and final performance in complex multitask environments. The primary architectural differences are summarized in Table~\ref{tab:unizero_vs_scalezero_summary}, with detailed explanations provided in the subsequent sections.

\begin{table}[h!]
\centering
\small
\caption{Summary of core architectural differences between UniZero and ScaleZero.}
\label{tab:unizero_vs_scalezero_summary}
\renewcommand{\arraystretch}{1.3} 
\begin{tabular}{p{0.2\linewidth} p{0.35\linewidth} p{0.35\linewidth}}
\toprule
\textbf{Component} & \textbf{UniZero (Baseline)} & \textbf{ScaleZero (Our Model)} \\
\midrule
\textbf{Backbone} & Standard Transformer blocks. & Mixture-of-Experts (MoE) Transformer blocks. \\
\textbf{Latent State Normalization} & SimNorm (L1 regularization). & Standard LayerNorm. \\
\textbf{Visual Encoder} & ResNet-like architecture trained from scratch. & Vision Transformer (ViT) trained from scratch. \\
\bottomrule
\end{tabular}
\end{table}

\subsubsection{Backbone: From Dense Transformers to Sparse MoE}
The most significant modification lies in the model's backbone. While both models employ a Transformer architecture to process state, action, and task tokens, their internal structure differs fundamentally.
\begin{itemize}[leftmargin=*]
    \item \textbf{UniZero (Baseline):} The backbone consists of $N$ standard Transformer blocks (e.g., $N=8$ for Atari26).
    \item \textbf{ScaleZero (Our Model):} To maintain a comparable parameter count while increasing model capacity, the backbone is composed of $N/2$ (for Atari) MoE Transformer blocks. In each block, the standard feed-forward network (FFN) is replaced by an MoE layer. This layer comprises 8 specialized expert networks and one shared expert, governed by a sparse top-1 routing gate. This design creates conditional computation pathways, mitigating task interference.
\end{itemize}

\subsubsection{Encoders and Normalization}
\paragraph{Encoders}
For encoding observations, ScaleZero adopts a similar strategy to the baseline but with a clear decision justified by empirical results.
\begin{itemize}[leftmargin=*]
    \item \textbf{Visual Encoder (e.g., Atari):} Our primary encoder is a Vision Transformer (ViT) trained from scratch. The input image (64x64x3) is divided into 8x8 patches, projected into token embeddings, and processed by a standard ViT-Base encoder to produce a 768-dimensional latent state.
    \item \textbf{Text Encoder (e.g., Jericho):} For text-based environments, we use a pre-trained BGE (BAAI General Embedding) model \citep{bge_embedding} to encode textual observations into fixed-dimensional vectors.
\end{itemize}

\paragraph{Justification for From-Scratch Visual Encoder}
In preliminary experiments, we explored leveraging a powerful pre-trained visual encoder, specifically a frozen DINOv2 model (\texttt{dinov2\_vits14}) \citep{oquab2023dinov2}, to potentially improve feature quality. However, this approach did not yield significant or consistent performance gains over the from-scratch ViT. We hypothesize this is due to the domain gap between the natural images DINOv2 was trained on and the distinct visual characteristics of Atari frames. This finding justifies our final choice and highlights the importance of domain-specific visual representations. We leave a more in-depth investigation of pre-trained models, including fine-tuning strategies, as future work.

\paragraph{Latent State Normalization}
ScaleZero replaces the L1-based \textit{SimNorm} used in UniZero with standard \textit{LayerNorm} for normalizing the latent state. This change was found to be effective within our modified architecture.

\subsubsection{Task-Specific Architectural Configurations}
\label{appendix:task_specific_configs}
It is important to note that certain architectural choices are adapted based on the benchmark, rather than being a modification of ScaleZero over UniZero. These configurations apply to our entire framework.
\begin{itemize}[leftmargin=*]
    \item \textbf{Encoders:} For the DeepMind Control (DMC) suite, which features heterogeneous vector observation spaces, we employ independent MLP encoders for each task. For benchmarks with a unified input modality (images for Atari, text for Jericho), a single shared encoder is used across all tasks.
    \item \textbf{Prediction Heads:} To accommodate the distinct action spaces of each task, independent prediction heads (for policy, value, etc.) are universally employed across all benchmarks.
\end{itemize}

\subsection{Explorations on Key Design Spaces}
\label{appendix:exploaration_design_space}
We conducted several exploratory extensions to key designs of UniZero to validate their effectiveness in multitask settings.

\textbf{Task Information Encoding:} To mitigate representational conflicts in multitask learning, we explored explicit encoding of task information. By introducing a learnable task embedding matrix, \texttt{task\_embed = nn.Embedding(task\_id)}, and concatenating it with the state representation, \texttt{new\_latent\_state = concat(latent\_state, task\_embed)}, we aimed for the model to better distinguish between different tasks. In our experiments, the \texttt{latent\_state} dimension was 672, and the \texttt{task\_embed} dimension was 96.

\textbf{Vision Transformer (ViT):} Our ViT implementation is adapted from the \texttt{lucidrains/vit-pytorch}\footnote{\url{https://github.com/lucidrains/vit-pytorch}} library. The core hyperparameters, corresponding to a ViT-Base level, are summarized in Table~\ref{tab:hyperparams_vit}.

\begin{table}[ht!]
\centering
\caption{Hyperparameters for the Vision Transformer (ViT) encoder.}
\label{tab:hyperparams_vit}
\begin{tabular}{@{}lc@{}}
\toprule
\textbf{Hyperparameter} & \textbf{Value} \\ \midrule
\texttt{image\_size} & 64x64 \\
\texttt{patch\_size} & 8x8 \\
\texttt{dim} (Embedding Dimension) & 768 \\
\texttt{depth} (Transformer Layers) & 6 \\
\texttt{heads} (Attention Heads) & 6 \\
\texttt{mlp\_dim} (MLP Hidden Dimension) & 2048 \\
\texttt{dropout} & 0.1 \\
\texttt{emb\_dropout} (Embedding Dropout) & 0.1 \\ \bottomrule
\end{tabular}
\end{table}

\textbf{Latent Normalization Strategy (SimNorm):} As part of our ablation studies, we explored the SimNorm strategy, inspired by the TD-MPC2 paper. SimNorm partitions the latent state $z$ into $L$ groups, where each group is a $V$-dimensional simplex $g$. The transformation is given by:
\begin{equation}
    z_{\text{sim\_norm}} = [g_1, \dots, g_i, \dots, g_L], \quad g_i = \frac{\exp(z_{i:i+V} / \tau)}{\sum_{j=1}^{V} \exp(z_{i:i+V_j} / \tau)}
\end{equation}
In our experiments, we set the simplex dimension $V=8$.

\textbf{Mixture-of-Experts (MoE):} Our MoE layer implementation leverages the efficient design patterns from the \texttt{mistral-inference}\footnote{\url{https://github.com/mistralai/mistral-inference/blob/main/src/mistral_inference/moe.py}} library. Specifically, we adopt a "1 Shared + 8 Routed" configuration where every token is processed by a dedicated shared expert while a gating network simultaneously routes it to the Top-1 expert among eight specialized experts. We do not claim this configuration is the theoretical global optimum, but rather a pragmatic trade-off between computational efficiency and performance. This design aligns with advanced LLM architectures (e.g., DeepSeek-MoE \cite{dai2024deepseekmoe}) that explicitly separate shared knowledge from specialized knowledge. As a Proof of Concept, it demonstrates that introducing even moderate sparsity (Routing) significantly mitigates the interference observed in dense baselines. In the future, drawing inspiration from \cite{ludziejewski2025joint} and \cite{zhao2025towards}, we plan to investigate the optimal MoE configurations specifically for online MTRL.


\textbf{Gradient Correction for MTRL.}

Conflicting gradients are a primary challenge in MTRL, often impeding stable optimization. To mitigate this, we adopt the MoCo algorithm from the LibMTL library~\citep{moco}\footnote{\url{https://github.com/median-research-group/LibMTL}}. MoCo is a gradient correction method that finds a consensual update direction by dynamically adjusting task weights. This process operates on momentum-smoothed gradients, which provide a more stable estimate of each task's long-term descent direction compared to noisy single-step gradients. The procedure involves two main steps:

\begin{itemize}
    \item \textbf{Momentum-based Gradient Smoothing.} To obtain a stable estimate of each task's descent direction, we maintain a momentum-based moving average, $y_i(t)$, of its past gradients. For each task $i$ at step $t$, the update rule is:
    \begin{equation}
        y_i(t) = (1-\beta) y_i(t-1) + \beta g'_i(t)
    \end{equation}
    where $g'_i(t)$ is the raw gradient $g_i(t)$ normalized and scaled by its task loss, and $\beta$ is the momentum coefficient. This smooths out noisy single-step gradients.

    \item \textbf{Dynamic Weight Adjustment.} The task weights $\lambda(t) \in \mathbb{R}^N$ are updated to minimize the squared norm of the aggregated gradient, formulated as $\min_{\lambda} \frac{1}{2} \|\sum_{i=1}^{N} \lambda_i y_i(t)\|^2$. We perform an approximate gradient descent step on this objective. The correct weight update is:
    \begin{equation}
        \lambda(t) = \text{softmax} \left( \lambda(t-1) - \gamma (Y(t)^T Y(t)) \lambda(t-1) \right)
    \end{equation}
    where $Y(t)$ is the matrix whose columns are the momentum gradients $\{y_i(t)\}$, and $\gamma$ is the weight learning rate. The final corrected gradient applied to the shared parameters $\theta_s$ is the weighted sum $g_s(t) = \sum_{i=1}^{N} \lambda_i(t) y_i(t)$.

    \item Our implementation is tailored for a Distributed Data Parallel (DDP) environment. All gradient correction computations occur on the master process (rank 0), which then broadcasts the final corrected gradient to all other processes for synchronized updates. The procedure is detailed in Algorithm~\ref{alg:moco_mtl}. Key hyperparameters are listed in Table~\ref{tab:hyperparams_moco}.
\end{itemize}

\begin{algorithm}[h!]
\caption{MoCo in a Distributed Data Parallel (DDP) Setting}
\label{alg:moco_mtl}
\begin{algorithmic}[1]
\Require Shared parameters $\theta_s$, task losses $\{L_i\}_{i=1}^N$.
\Require Momentum gradients $\{y_i\}_{i=1}^N$, task weights $\lambda \in \mathbb{R}^N$.
\Require Hyperparameters: Momentum $\beta$, weight learning rate $\gamma$.
\Ensure Updated shared parameters $\theta_s$.

\For{each training step}
    \Statex \textit{\# On each process in parallel}
    \State Compute local raw gradients $\{g_i^{\text{local}}\}$ and losses $\{L_i^{\text{local}}\}$.
    \State Aggregate all $\{g_i^{\text{local}}\}$ and $\{L_i^{\text{local}}\}$ to the master process (rank 0).

    \Statex
    \Statex \textit{\# On the master process (rank 0) only}
    \State Let $\{g_i\}_{i=1}^N$ and $\{L_i\}_{i=1}^N$ be the aggregated global gradients and losses.
    \For{$i = 1$ \textbf{to} $N$}
        \State Normalize and scale: $g'_i \gets \frac{g_i}{\|g_i\| + \epsilon} \cdot L_i$.
        \State Update momentum gradient: $y_i \gets (1-\beta) y_i + \beta g'_i$.
    \EndFor
    \State Construct gradient matrix: $Y \gets [y_1, \dots, y_N]$.
    \State Update task weights: $\lambda \gets \text{softmax}(\lambda - \gamma (Y^T Y) \lambda)$.
    \State Compute final corrected gradient: $g_s \gets \sum_{i=1}^{N} \lambda_i y_i$.
    \State Broadcast the corrected gradient $g_s$ to all other processes.

    \Statex
    \Statex \textit{\# On each process in parallel}
    \State Apply corrected gradient: $\nabla_{\theta_s} \gets g_s$.
    \State Perform optimizer step to update $\theta_s$.
\EndFor
\end{algorithmic}
\end{algorithm}

\begin{table}[ht!]
\centering
\caption{Hyperparameters for the MoCo algorithm.}
\label{tab:hyperparams_moco}
\begin{tabular}{@{}lcl@{}}
\toprule
\textbf{Hyperparameter} & \textbf{Symbol} & \textbf{Value} \\ \midrule
Momentum Coefficient & $\beta$ & 0.99 \\
Weight Learning Rate & $\gamma$ & 10.0 \\
Weight Regularization & $\rho$ & 0.0 \\ \bottomrule
\end{tabular}
\end{table}

\subsection{Dynamic Parameter Scaling (DPS)}
\label{appendix:dps}

This section provides a detailed breakdown of the \textit{Dynamic Parameter Scaling (DPS)} strategy introduced in Section~\ref{sec:method_dps}. DPS is designed to dynamically couple model capacity with learning progress, creating a "curriculum of model complexity" that directs resources precisely where they are most required. Detailed pseudocode is provided in Algorithm~\ref{alg:dps_final}.

\begin{algorithm}[h!]
\caption{Dynamic Parameter Scaling (DPS)}
\label{alg:dps_final}
\begin{algorithmic}[1]
\Require Set of tasks $\mathcal{T}=\{\tau_i\}_{i=1}^{N}$; Performance thresholds $\{\varepsilon_i\}_{i=1}^{N}$; Backbone params $\theta_B$;
\Require Hyperparameters: Total stages $S$, Warm-up iterations $T_0$, Max total iterations $T_{\max}$, Stage quotas $\{Q_s\}_{s=1}^S$.
\Ensure Trained parameters: $\theta_B$, $\alpha_0$, $\{\Delta\theta_s, \alpha_s\}_{s=1}^S$.

\State Initialize active set of unsolved tasks: $\mathcal{U} \gets \mathcal{T}$.
\State Initialize base model scaling factor: $\alpha_0 \gets 1$.

\For{stage $s = 0$ \textbf{to} $S$}
    \If{$s == 0$} \Comment{\textbf{Stage 0: Warm-up Phase}}
        \State Set trainable parameters $\theta_{\text{train}} \gets \theta_B$.
        \State Set stage iteration limit $L_s \gets T_0$.
        \For{$t_{\text{stage}} = 1$ \textbf{to} $L_s$}
            \State Sample a batch exclusively from tasks in active set $\mathcal{U}$.
            \State Perform forward pass with backbone: $W(x) = W_0 x$.
            \State Compute loss $\mathcal{L}$ and update $\theta_{\text{train}}$.
            \State Update active set: $\mathcal{U} \gets \{\tau_i \in \mathcal{T} \mid \text{Metric}(\tau_i) < \varepsilon_i\}$.
        \EndFor
    \Else \Comment{\textbf{Stage $s \ge 1$: Expansion Phase}}
        \State Initialize new LoRA module $\Delta\theta_s = B_sA_s$ and its scalar $\alpha_s \gets 1$.
        \State Freeze $\theta_B$ and all previous modules $\{\Delta\theta_j\}_{j=1}^{s-1}$.
        \State Set trainable parameters $\theta_{\text{train}} \gets \{\Delta\theta_s\} \cup \{\alpha_0\} \cup \{\alpha_j\}_{j=1}^{s-1}$.
        \State Record solved tasks at stage start: $\mathcal{S}_{\text{start}} \gets \mathcal{T} \setminus \mathcal{U}$.
        \State Set budget for current stage: $T_{\text{budget}} \gets \lceil (T_{\max} - T_0) / S \rceil$.
        \State $t_{\text{stage}} \gets 0$.
        
        \While{$t_{\text{stage}} < T_{\text{budget}}$} \Comment{\textbf{Budget-based Trigger}}
            \State $t_{\text{stage}} \gets t_{\text{stage}} + 1$.
            \State Sample a batch exclusively from tasks in active set $\mathcal{U}$.
            \State Perform forward pass: $W^{(s)}(x) = (\alpha_0 W_0 + \sum_{j=1}^{s} \alpha_j (B_j A_j)) x$.
            \State Compute loss $\mathcal{L}$ and update $\theta_{\text{train}}$.
            \State Update active set: $\mathcal{U} \gets \{\tau_i \in \mathcal{T} \mid \text{Metric}(\tau_i) < \varepsilon_i\}$.
            
            \If{$|\mathcal{T} \setminus \mathcal{U}| - |\mathcal{S}_{\text{start}}| \ge Q_s$} \Comment{\textbf{Progress-based Trigger}}
                \State \textbf{break} \Comment{Quota of newly solved tasks met, transition to next stage.}
            \EndIf
        \EndWhile
    \EndIf
\EndFor
\end{algorithmic}
\end{algorithm}

\textbf{Task Curation and Active Set Management.}
As described in the main text, we define the full set of tasks as $\mathcal{T} = \{\tau_i\}_{i=1}^{N}$. A task $\tau_i$ is deemed "solved" once its performance metric, $\text{Metric}(\tau_i)$, surpasses a predefined threshold $\varepsilon_i$. At any time $t$ during training, we maintain an \textit{active set} of all unsolved tasks, $\mathcal{U}_t \subseteq \mathcal{T}$. To maximize computational efficiency, both data collection and gradient updates are performed \textbf{exclusively} for tasks within this active set $\mathcal{U}_t$. Once a task is solved, it is removed from the active set, thereby ceasing all computational overhead associated with it.

\textbf{Staged Capacity Expansion.}
The training process is partitioned into $S+1$ stages, following the protocol outlined in the main text:
\begin{itemize}[leftmargin=*]
    \item \textbf{Stage 0 (Warm-up):} During an initial period of $T_0$ iterations, only the shared backbone network parameters, $\theta_B$, are trained. This stage establishes a robust foundation by training the shared base model on all tasks to learn general-purpose representations.
    \item \textbf{Stage $s \ge 1$ (Expansion):} Each subsequent stage introduces a new, independent LoRA module, $\Delta\theta_s = B_sA_s$. Concurrently, a set of learnable scaling factors is defined to modulate the contribution of both the base model and all adapters. A scaling factor $\alpha_0$ is associated with the base model weights $\theta_B$, and each LoRA module $\Delta\theta_j$ is assigned a corresponding scaling factor $\alpha_j$. All scaling factors are initialized to 1. These factors allow the model to dynamically re-weigh and reuse knowledge from the base model and earlier adaptations.
\end{itemize}

\textbf{Adaptive Stage Transition Triggers.}
The transition from stage $s-1$ to stage $s$ is governed by a dual-trigger mechanism, balancing learning progress against a fixed budget:
\begin{enumerate}[leftmargin=*]
    \item \textbf{Progress-based Trigger:} A new stage is initiated if the number of newly solved tasks within the current stage reaches a predefined quota $Q_s$.
    \item \textbf{Budget-based Trigger:} To prevent stagnation on particularly arduous tasks, a transition is forced if the iteration count within the current stage exceeds a pre-allocated limit, calculated as $\lceil (T_{\max} - T_0) / S \rceil$.
\end{enumerate}

\textbf{Optimization and Parameter Isolation.}
A key principle of DPS is parameter isolation. At any given stage $s \ge 1$, the effective weight matrix $W^{(s)}$ is dynamically composed from the base weight $W_0 \in \theta_B$ and all accumulated adapters:
\begin{equation}
    W^{(s)} = \alpha_0 W_0 + \sum_{j=1}^{s} \alpha_j \Delta\theta_j = \alpha_0 W_0 + \sum_{j=1}^{s} \alpha_j B_j A_j.
\end{equation}
Upon entering a new stage $s$, all prior parameters—the backbone $\theta_B$ (pre-trained in Stage 0) and the matrices of all previously introduced adapters $\{\Delta\theta_j\}_{j=1}^{s-1}$—are \textit{frozen}. To resolve scale ambiguity between the new adapter and its scalar, the scaling factor $\alpha_s$ for the currently active adapter $\Delta\theta_s$ is temporarily fixed at 1. Optimization is thereby focused \textit{exclusively on} the newly added LoRA module $\Delta\theta_s$, the base model's scaling factor $\alpha_0$, and the set of scaling factors corresponding to previously frozen adapters, $\{\alpha_j\}_{j=1}^{s-1}$.

This strategy yields two principal advantages:
\begin{itemize}[leftmargin=*]
    \item It implements a \textit{resource-efficient training curriculum}, as new parameters are only introduced when required by the remaining unsolved tasks.
    \item By isolating new learning within dedicated adapters, DPS provides \textit{targeted plasticity} for difficult tasks while \textit{preventing catastrophic forgetting or negative transfer} to previously mastered skills, whose knowledge is preserved in the frozen backbone and prior adapters.
\end{itemize}

\textbf{Stabilized Learnable Scaling Factors.} To prevent training instability from unbounded learnable scalars, we constrain the scaling factors $\alpha_j$ within a narrow, stable range (e.g., centered at 1.0). This is achieved via a re-parameterization trick, where each $\alpha_j$ is computed from an unbounded underlying parameter $\hat{\alpha}_j$ as follows:
\begin{equation}
    \alpha_j = \text{offset} + \text{range} \cdot \tanh(\hat{\alpha}_j)
\end{equation}
This design ensures that the contribution of each adapter is modulated smoothly without causing drastic shifts in the output distribution.
The specific DPS hyperparameters used in our ScaleZero-DPS experiments are detailed in Table~\ref{tab:hyperparams_dps_appendix}.

\begin{table}[ht!]
\centering
\small
\caption{Hyperparameter configuration for Dynamic Parameter Scaling in our ScaleZero-DPS experiments.}
\label{tab:hyperparams_dps_appendix}
\begin{tabular}{@{}lccl@{}}
\toprule
\textbf{Hyperparameter} & \textbf{Symbol} & \textbf{Value} & \textbf{Description} \\ \midrule
\texttt{curriculum\_stage\_num} & $S+1$ & 5 & Total number of stages (1 warm-up + 4 expansion). \\
\texttt{lora\_r} & $r$ & 64 & The rank of the LoRA matrices. \\
\texttt{lora\_alpha} & - & 1 & Conventional LoRA hyperparameter (scaling = alpha/r). \\
& & & \textit{Note: This is distinct from our learnable $\alpha_j$ factors.} \\
\texttt{lora\_scale\_init} & $\alpha_{j, \text{init}}$ & 1.0 & Initial value of the DPS learnable scaling factors, $\alpha_j$. \\
\texttt{lora\_scale\_range} & - & 0.2 & Range for DPS scalars, yielding $\alpha_j \in [0.8, 1.2]$. \\
\texttt{min\_stage0\_iters} & $T_0$ & 10,000 & Number of iterations for the warm-up stage. \\
\texttt{max\_stage\_iters} & - & 5,000 & Per-stage iteration budget, i.e., $\lceil (T_{\max} - T_0) / S \rceil$. \\ \bottomrule
\end{tabular}
\end{table}

\subsection{Hyperparameter Settings}
Our hyperparameter configuration for ScaleZero largely follows the original UniZero paper to ensure a fair comparison. We employ the \texttt{AdamW} optimizer for all experiments. Table~\ref{tab:hyperparams_shared} lists the hyperparameters that are kept consistent across all benchmarks. Architectural configurations and loss weights are detailed subsequently, followed by benchmark-specific settings in Tables~\ref{tab:hyperparams_atari}, \ref{tab:hyperparams_dmc}, and \ref{tab:hyperparams_jericho}.


\begin{table}[htbp]
\centering
\caption{\textbf{Common Hyperparameters}. This table lists the hyperparameters that remain constant across all benchmarks (Atari, DMC, and Jericho tasks) for ScaleZero and UniZero.}
\label{tab:hyperparams_shared}
\begin{tabular}{@{}ll@{}}
\toprule
\textbf{Hyperparameter}                         & \textbf{Value} \\ \midrule

\multicolumn{2}{l}{\textbf{\underline{Planning}}} \\
Number of MCTS Simulations ($sim$)             & 50              \\
Temperature                                     & 0.25            \\
Dirichlet Noise ($\alpha$)                      & 0.3             \\
Dirichlet Noise Weight                          & 0.25            \\
Coefficient $c_1$                               & 1.25            \\
Coefficient $c_2$                               & 19652           \\

\multicolumn{2}{l}{\textbf{\underline{Architecture}}} \\
Embedding Dimension & 768 \\
Number of Heads & 8 \\ 
Dropout Rate ($p$)                              & 0.1              \\
Activation Function                             & GELU \\
Reward/Value Bins                               & 101 for DMC and Jericho, 601 for Atari              \\
SimNorm Dimension ($V$)                         & 8                \\
SimNorm Temperature ($\tau$)                    & 1                \\

\multicolumn{2}{l}{\textbf{\underline{Optimization}}} \\
Optimizer                                       & AdamW            \\
Learning Rate                                   & $1 \times 10^{-4}$ \\
Policy Entropy Coefficient                      & $1 \times 10^{-4}$ \\
Weight Decay                                    & $10^{-4}$        \\
Max Gradient Norm                               & 5                \\
Discount Factor                                 & 0.997            \\
Soft Target Update Momentum                     & 0.05             \\
Temporal Difference (TD) Steps                  & 5                \\
\bottomrule
\end{tabular}
\end{table}

Our ScaleZero model incorporates Mixture-of-Experts layers within its Transformer backbone for the three benchmarks to enhance model capacity and specialization. When MoE is used, each MoE layer consists of 8 specialist experts and 1 shared expert, with the router selecting the top-1 expert per token. 
The final loss is a weighted sum of several components, with weights varying between discrete and continuous action space environments, as detailed in Table~\ref{tab:loss_weights}.

\begin{table}[h!]
\centering
\small
\caption{Loss function weights for different environment types.}
\label{tab:loss_weights}
\begin{tabular}{@{}lcc@{}}
\toprule
\textbf{Loss Term} & \textbf{Discrete Action Spaces (Atari, Jericho)} & \textbf{Continuous Action Spaces (DMC)} \\ \midrule
Value Loss & 0.5 & 0.1 \\
Reward Loss & 1.0 & 0.1 \\
Policy Loss & 1.0 & 0.1 \\
Observation Loss & 10.0 & 10.0 \\
\bottomrule
\end{tabular}
\end{table}

\subsubsection{Atari-Specific Settings}
For Atari, observations are $64 \times 64 \times 3$ images processed by a ViT encoder. Training is conducted for 400,000 total environment steps.

\begin{table}[h!]
\centering
\caption{Key hyperparameters for Atari experiments.}
\label{tab:hyperparams_atari}
\begin{tabular}{@{}ll@{}}
\toprule
\textbf{Hyperparameter} & \textbf{Value} \\ \midrule
\multicolumn{2}{c}{\textit{General Training}} \\ 
Effective Global Batch Size & 512 \\
Discount Factor ($\gamma$) & 0.997 \\ \addlinespace
\multicolumn{2}{c}{\textit{World Model}} \\ 
Num. Transformer Layers & 2 (Atari-8), 4 (Atari-26) \\
Training Context Length ($H$) & 10 \\
Encoder Type & ViT \\ \addlinespace
\multicolumn{2}{c}{\textit{MCTS}} \\ 
Inference Context Length ($H_{\text{infer}}$) & 4 \\ \addlinespace
\multicolumn{2}{c}{\textit{Replay Buffer}} \\ 
Replay Buffer Size & 500,000 \\
Replay Ratio & 0.25 \\ \bottomrule
\end{tabular}
\end{table}

\subsubsection{DMC-Specific Settings}
For the DMC suite, state vectors are processed by an MLP encoder. Training runs for 400,000 total environment steps.

\begin{table}[h!]
\centering
\caption{Key hyperparameters for DMC experiments.}
\label{tab:hyperparams_dmc}
\begin{tabular}{@{}ll@{}}
\toprule
\textbf{Hyperparameter} & \textbf{Value} \\ \midrule
\multicolumn{2}{c}{\textit{General Training}} \\ 
Effective Global Batch Size & 512 \\
Discount Factor ($\gamma$) & 0.99 \\
Frame Skip & 4 (8 for Pendulum) \\ \addlinespace
\multicolumn{2}{c}{\textit{World Model}} \\ 
Num. Transformer Layers & 4 \\
Training Context Length ($H$) & 5 \\
Encoder Type & MLP \\
Num. Sampled Actions & 20 \\ \addlinespace
\multicolumn{2}{c}{\textit{MCTS}} \\ 
Inference Context Length ($H_{\text{infer}}$) & 2 \\ \addlinespace
\multicolumn{2}{c}{\textit{Replay Buffer}} \\ 
Replay Buffer Size & 1,000,000 \\
Replay Ratio & 0.25 \\ \bottomrule
\end{tabular}
\end{table}

\subsubsection{Jericho-Specific Settings}
For the text-based Jericho suite, a pre-trained language model (\texttt{BAAI/bge-base-en-v1.5}) is used as the text encoder. Training runs for 500,000 total environment steps.

\begin{table}[h!]
\centering
\caption{Key hyperparameters for Jericho experiments.}
\label{tab:hyperparams_jericho}
\begin{tabular}{@{}ll@{}}
\toprule
\textbf{Hyperparameter} & \textbf{Value} \\ \midrule
\multicolumn{2}{c}{\textit{General Training}} \\ 
Effective Global Batch Size & 256 \\
Discount Factor ($\gamma$) & 0.997 \\ \addlinespace
\multicolumn{2}{c}{\textit{World Model}} \\ 
Num. Transformer Layers & 2 \\
MoE Layers & None \\
Training Context Length ($H$) & 10 \\
Encoder Type & Text (\texttt{BAAI/bge-base-en-v1.5}) \\ \addlinespace
\multicolumn{2}{c}{\textit{MCTS}} \\ 
Inference Context Length ($H_{\text{infer}}$) & 4 \\ \addlinespace
\multicolumn{2}{c}{\textit{Replay Buffer}} \\ 
Replay Buffer Size & 500,000 \\
Replay Ratio & 0.1 \\ \bottomrule
\end{tabular}
\end{table}

\subsection{Training Framework and Computational Analysis}
\label{app:training_infrastructure}

This section details the distributed training framework employed for our multi-task experiments and provides a comprehensive analysis of the model parameter specifications and computational costs for the Atari, DMC, and Jericho benchmarks.

\subsubsection{Distributed Multi-Task Training Implementation}
\label{sec:app_impl_details}

Our multi-GPU, multitask training framework is implemented based on PyTorch's Distributed Data Parallel (DDP)~\citep{pt_ddp}. The design is optimized for scalability and efficiency when training a single unified model across highly diverse task sets. The core implementation logic is as follows:

\begin{itemize}
    \item \textbf{Static Task Partitioning and Resource Allocation:} 
    To handle the diverse set of environments (e.g., 26 distinct Atari games), we employ a static partitioning strategy. The total set of tasks is divided among the available GPUs (ranks). Each GPU is assigned an exclusive subset of tasks and instantiates dedicated resources for each assigned task, including independent data collectors, evaluators, and replay buffers. This modular design isolates task-specific operations, ensuring robust memory management and preventing resource contention between tasks.

    \item \textbf{Heterogeneous Batch Construction:} 
    During each training iteration, every GPU constructs a local training batch by sampling data segments from the replay buffers of its assigned tasks. This results in a heterogeneous batch containing experiences from multiple different environments. To ensure load balancing, the micro-batch size sampled from each task's buffer is uniform and pre-calculated based on global memory constraints.

    \item \textbf{Gradient Accumulation for Large-Scale Training:} 
    Training transformer-based world models requires significant memory. To simulate a large effective batch size necessary for stable convergence without exceeding GPU memory limits, we utilize gradient accumulation. The number of accumulation steps is dynamically adjusted in conjunction with the micro-batch size. This strategy ensures that the model parameters are updated using gradients derived from a sufficiently large and diverse dataset, promoting effective generalization.

    \item \textbf{Synchronized Global Optimization:} 
    Once the gradients are computed locally on the heterogeneous batches, the DDP framework synchronizes them across all GPUs via an \texttt{AllReduce} operation. This results in a globally averaged gradient that reflects the collective experience of all tasks in the benchmark. A single optimization step is then performed on the shared model parameters, ensuring that the unified model learns jointly from the entire multi-task distribution.
\end{itemize}

This integrated approach enables the joint optimization of a single, powerful model using data streams from multiple environments in a distributed and highly efficient manner, which is key to ScaleZero's strong performance in multitask scenarios.

\subsection{Parameter Analysis and Computational Cost}
\label{sec:app_cost_analysis}

To provide a granular analysis of resource utilization, Table~\ref{tab:detailed_cost_comparison} compares the network parameter scales and specific training costs between our multi-task approach, ScaleZero (MT), the multi-task baseline UniZero (MT), and the single-task baseline UniZero (ST).
Experiments for multi-task models were conducted on a computing node equipped with 8$\times$ (4$\times$ for Jericho-4) NVIDIA A100 (80GB) GPUs, whereas single-task models were trained on individual 1$\times$ NVIDIA A100 (80GB) GPUs per task. Table~\ref{tab:detailed_cost_comparison} summarizes the parameter counts and approximate wall-clock training times for the main results presented in~\autoref{sec:exp}.

Note on Parameter Counts: The \textit{Total Params} column encompasses all model components, including embeddings and auxiliary heads; thus, it exceeds the simple sum of the Encoder, Backbone, and Head parameters. For \textit{UniZero (ST)}, the values listed represent the model size for a single task. The total training time for ST is the aggregate time required to train all tasks within the benchmark sequentially or in parallel resource equivalents.

\begin{table}[h!]
\centering
\caption{Detailed comparison of network parameters, wall-clock training time, and hardware resources. ``nlayer'' denotes the number of transformer layers in the world model backbone. For UniZero (ST), the training time indicates the cost per task and the total cumulative time for the benchmark. Colors indicate method type: \colorbox{bgBlue}{\textbf{Blue}} for ScaleZero (MT), \colorbox{bgGray}{\textbf{Gray}} for UniZero (MT), and \colorbox{bgOrange}{\textbf{Orange}} for UniZero (ST).}
\label{tab:detailed_cost_comparison}

\setlength{\aboverulesep}{0pt}
\setlength{\belowrulesep}{0pt}
\renewcommand{\arraystretch}{1.2} 

\resizebox{\textwidth}{!}{
\begin{tabular}{@{}llcccccc@{}}
\toprule
\multirow{2}{*}{\textbf{Benchmark}} & \multirow{2}{*}{\textbf{Method}} & \multicolumn{4}{c}{\textbf{Network Parameters}} & \multirow{2}{*}{\textbf{Training Time}} & \multirow{2}{*}{\textbf{Hardware}} \\ \cmidrule(lr){3-6}
 &  & \textbf{Encoder} & \textbf{Backbone} & \textbf{Head} & \textbf{Total} &  &  \\ \midrule
 

\rowcolor{bgBlue} 
\cellcolor{white} & ScaleZero (MT) \scriptsize{(nlayer=4)} & 26.8 M & 254.9 M \scriptsize{(act.$\approx$56M)} & 81.5 M & 366.4 M & $\sim$6 Days & 8$\times$ A100 (80G) \\
\cmidrule{2-8}

\rowcolor{bgGray}
\cellcolor{white} & UniZero (MT) \scriptsize{(nlayer=8)} & 18.8 M & 56.7 M & 81.5 M & 160.2 M & $\sim$5 Days & 8$\times$ A100 (80G) \\
\cmidrule{2-8}

\rowcolor{bgOrange}
\cellcolor{white}\multirow{-3}{*}{\textbf{Atari-26}} & UniZero (ST) \scriptsize{(nlayer=2)} & 7.8 M & 14.2 M & 3.9 M & 43.8 M & \begin{tabular}{@{}c@{}}7 Hours/Task \\ Total: $\sim$8 Days\end{tabular} & 1$\times$ A100 (80G) \\ \midrule


\rowcolor{bgBlue}
\cellcolor{white} & ScaleZero (MT) \scriptsize{(nlayer=$4$)} & 10.8 M & 254.9 M \scriptsize{(act.$\approx$56M)} & 66.9 M & 339.9 M & $\sim$2 Days & 8$\times$ A100 (80G) \\
\cmidrule{2-8}

\rowcolor{bgOrange}
\cellcolor{white}\multirow{-2}{*}{\textbf{DMC-18}} & UniZero (ST) \scriptsize{(nlayer=$2$)} & $0.60$ M & $14.2$ M & $11.2$ M & $26.1$ M & \begin{tabular}{@{}c@{}}$~3$ Hours/Task \\ Total: $\sim2.3 $ Days\end{tabular} & 1$\times$ A100 (80G) \\ 
\midrule

\rowcolor{bgBlue}
\cellcolor{white} & ScaleZero (MT) \scriptsize{(nlayer=$2$)} & 110.1 M & 127.4 M \scriptsize{(act.$\approx$27.9M)} & 14.2 M & 250.2 M & $\sim$3 Days & 4$\times$ A100 (80G) \\
\cmidrule{2-8}

\rowcolor{bgOrange}
\cellcolor{white}\multirow{-2}{*}{\textbf{Jericho-4}} & UniZero (ST) \scriptsize{(nlayer=$2$)} & $110.1$ M & $14.2$ M & $3.2$ M & $127.5$ M & \begin{tabular}{@{}c@{}}$~13$ Hours/Task \\ Total: $\sim2.1 $ Days\end{tabular} & 1$\times$ A100 (80G) \\

\bottomrule
\end{tabular}
}
\end{table}

\subsection{Justification for the Model-Free Multi-Task Baselines}
\label{app:baselines_justification}

In the experimental comparisons presented in Table \ref{tab:atari26_unizero_vs_scalezero}, we focus primarily on Model-Based approaches, excluding online model-free MTRL baselines. This exclusion is driven by fundamental disparities in data regimes and architectural paradigms:

The Atari-26 benchmark presents significant challenges due to its heterogeneity in visual dynamics and mechanics. A survey of recent literature \citep{xu2022feasibility, ye2022become, lee2022multi} indicates that existing multi-task methods predominantly rely on offline pre-training or fine-tuning rather than learning \textit{ab initio}. While methods such as IMPALA with PopArt \citep{hessel2019multi} have shown promise, they operate in a \textit{high-throughput regime}, typically requiring over 200 million frames to converge. In contrast, ScaleZero targets the \textit{sample-efficient regime} (consistent with the MuZero/UniZero research trajectory), operating within a strict budget of 100k--300k environment steps—approximately 0.2\% of the data required by \cite{hessel2019multi}. Under such severe constraints, high-throughput model-free RL methods fail to learn meaningful policies, rendering a direct comparison methodologically unsound. To date, no prior work has reported a single online MTRL agent capable of mastering the full Atari-26 suite from scratch under this data budget.

Similarly, the current landscape for the DMC-18 benchmark is dominated by offline approaches or methods utilizing expert demonstrations \citep{schmied2023learning, haldar2024baku}. Comparing our online learning framework, which learns purely from interaction, against offline paradigms that benefit from extensive pre-collected datasets creates an inequitable experimental setting due to the vast disparity in data availability.

Existing Model-Free MTRL methods, such as PaCo \citep{sun2022paco} and Soft Modularization \citep{yang2020multi}, introduce architectural innovations specifically tailored for Policy or Value networks. Conversely, ScaleZero operates within a Model-Based framework focused on learning environmental dynamics. The inductive biases required for accurate dynamics modeling differ fundamentally from those optimized for value estimation. Consequently, directly transplanting architectural mechanisms designed for policy optimization into a World Model structure lacks theoretical compatibility and straightforward implementation.

\section{Atari Experiment Details}
\label{sec:appendix_atari}
\label{app:atari_exp}

\subsection{Benchmark Setup}
\label{app:atari_setup}

This section elaborates on the experimental setup for the Atari benchmarks discussed in Section~\ref{sec:multitask_benchmarks}. We use the Arcade Learning Environment (ALE)~\citep{atari} as our primary simulation platform.

Our evaluation protocol compares our multitask model, \textit{ScaleZero (MT)}, against a strong single-task baseline consisting of individually trained \textit{UniZero (ST)} models. We adopt this comparative framework for two reasons: first, standard online MTRL baselines for the full Atari suite are currently lacking; second, the multitask variant of UniZero demonstrates severe performance degradation. Consequently, the UniZero ST baseline serves as a rigorous "specialist" upper bound, providing a challenging standard for evaluating the net positive knowledge transfer of our multitask approach.

The experimental evaluation is structured into two phases: first, we conduct model design and ablation studies on the \textit{Atari8} (multitask) benchmark (a subset of 8 games: Alien, Boxing, ChopperCommand, Hero, MsPacman, Pong, RoadRunner, and Seaquest). Second, we evaluate the final ScaleZero architecture on the full \textit{Atari26} (multitask) benchmark.

For all games, observations are preprocessed into 64x64 RGB images. The state input to the model is a single frame, yielding an input tensor of shape 3x64x64 without any frame stacking. We employ standard environment wrappers, including sticky actions (p=0.25) and a frame skip of 4~\citep{dqn}. Performance is quantified by the Human-Normalized Score (HNS), with both mean and median scores reported to assess aggregate performance and robustness. To ensure reproducibility and fair comparison, all environmental configurations strictly adhere to those in the UniZero paper~\citep{pu2024unizero} and DI-engine~\citep{ding}.

\subsection{Supplementary Analysis of Plasticity and Representational Collapse}
\label{app:atari_plasticity}

This section provides a deeper empirical analysis of the plasticity loss and representational collapse phenomena diagnosed in Section~\ref{sec:exp}.

\subsubsection{Training Dynamics and Representational Collapse in UniZero}
While Figure~\ref{fig:plasticity_loss} in the main text illustrates the performance degradation and plasticity loss for UniZero on complex tasks, \autoref{fig:appendix_plasticity_loss_other_4_en} presents the same metrics for the four other tasks in the Atari8 set.

To quantitatively diagnose the cause of this degradation, we measure the \textit{effective rank}~\citep{dohare2024loss} of the model's representation space. A low effective rank indicates representational collapse, where the model fails to maintain the feature diversity required for multitask learning. For a matrix $\boldsymbol{\Phi} \in \mathbb{R}^{n \times m}$ with normalized singular values $p_k$, the effective rank is the exponential of the Shannon entropy of its singular value distribution:
\begin{equation}
\operatorname{erank}(\boldsymbol{\Phi}) \doteq \exp\left(-\sum_{k=1}^q p_k \log(p_k)\right)
\end{equation}
Specifically, we compute the effective rank of a hidden layer's activation matrix from a minibatch of 256 sequence samples. As shown in \autoref{fig:appendix_design_space_eff_rank_en}, the performance collapse in the baseline model is directly accompanied by a decline in the representation's effective rank, providing quantitative evidence for the representational collapse hypothesis.

\begin{figure}[htbp]
    \centering
    \includegraphics[width=0.6\linewidth]{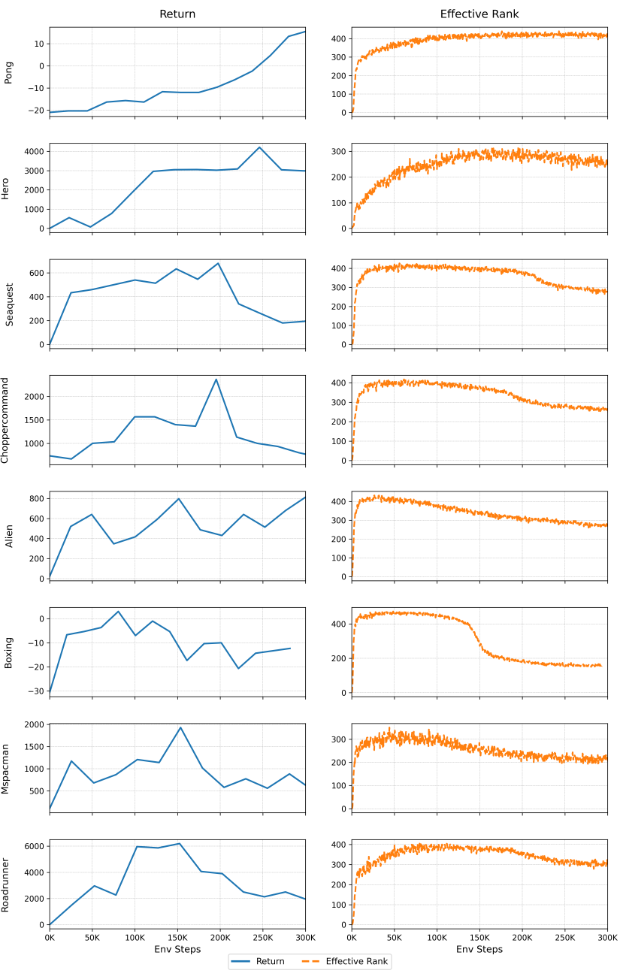}
    \caption{\textbf{Analysis of representation effective rank.} This figure supplements the diagnosis in Figure~\ref{fig:plasticity_loss}, showing the relationship between game return (Left) and representation effective rank (Right) for the baseline model. The sharp drop in performance correlates strongly with a decline in effective rank, substantiating the claim that performance failure is linked to a collapse in the dimensionality of the model's representation space.}
    \label{fig:appendix_design_space_eff_rank_en}
\end{figure} 

\begin{figure}[htbp]
    \centering
    \includegraphics[width=\linewidth]{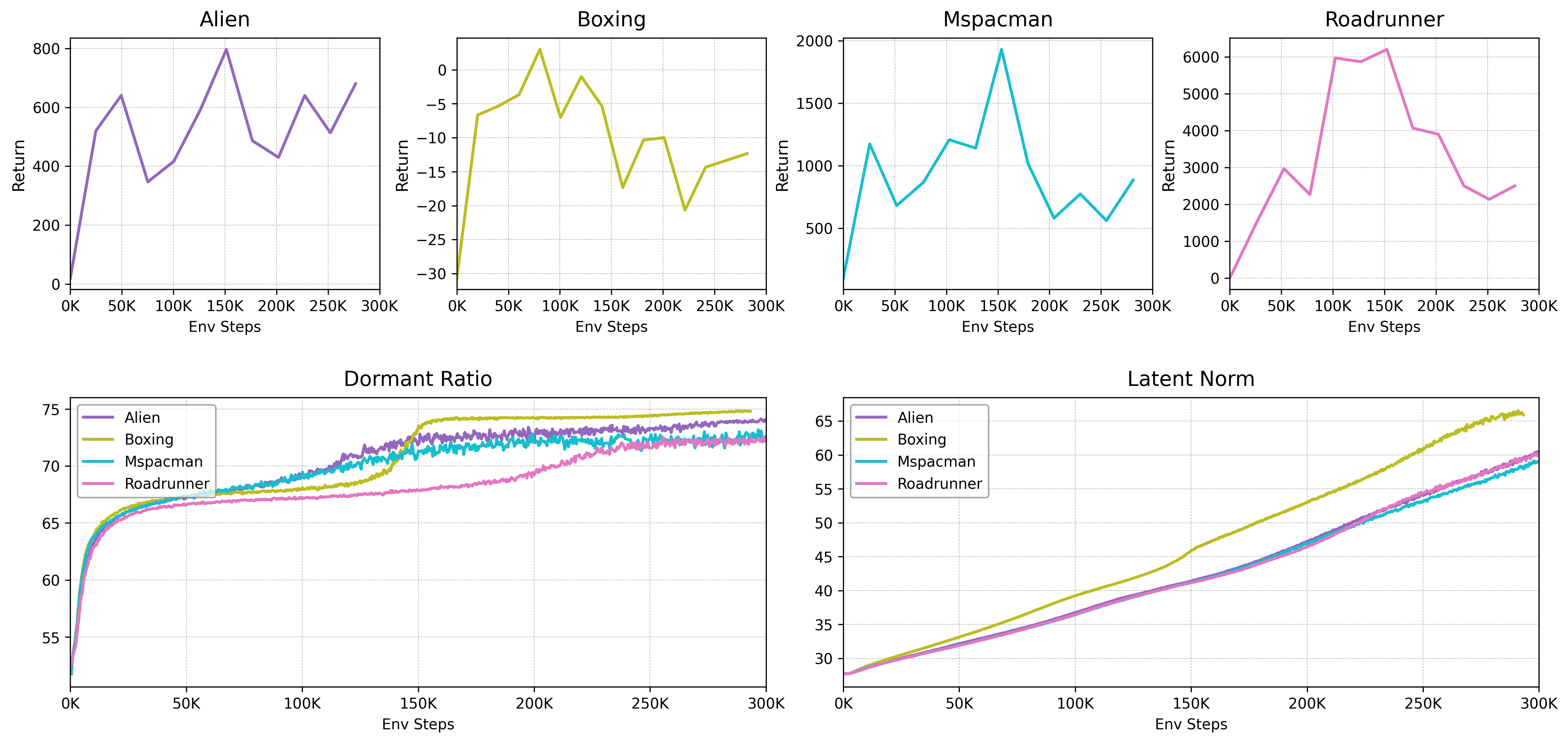}
    \caption{\textbf{Training dynamics of UniZero on the other four Atari8 tasks.} Complementing Figure~\ref{fig:plasticity_loss} from the main text, this figure shows performance curves, dormant neuron ratios, and latent state norms for UniZero on Boxing, MsPacman, RoadRunner, and Alien.}
    \label{fig:appendix_plasticity_loss_other_4_en}
\end{figure} 

\subsubsection{Analysis of Representational Collapse}
The correlations observed in our experiments reveal a critical failure mode in standard multitask architectures. These models face inherent difficulties in reconciling conflicting task demands, leading to severe gradient conflict. We hypothesize that as gradients from distinct tasks interfere destructively, the magnitude of the aggregated update signal diminishes—a phenomenon we term \textit{Gradient Norm Collapse}. This hypothesis is supported by the experimental analysis in \citet{ma2025network}, which demonstrates that dense connectivity inherently induces such destructive interference, thereby impeding the model's ability to track non-stationary data distributions. Consequently, the feature space degenerates into a low-dimensional, task-agnostic subspace, which is suboptimal for specialized tasks. This is the mechanism of \textit{representational collapse}, quantitatively diagnosed by the low effective rank shown in \autoref{fig:appendix_design_space_eff_rank_en}.

This contraction of the feature space renders many neurons redundant, leading to a \textit{rising dormant neuron ratio}. Specifically, when weights fail to adapt to shifting input patterns, neuronal pre-activations may systematically drift into negative regions. Once a neuron becomes inactive, its gradient vanishes, effectively freezing the associated weights and resulting in irreversible \textit{Plasticity Loss} \citep{sokar2023dormant}. To compensate for this impoverished representational capacity, normalization layers (e.g., LayerNorm) amplify the magnitude of the few remaining active features, causing an \textit{inflation of the latent norm}. Ultimately, this collapsed representation lacks the capacity to model task diversity, resulting in suboptimal performance.

In contrast, ScaleZero's MoE architecture is designed to mitigate this issue. By routing tasks to specialized expert sub-networks, it enforces dynamic sparsity. This design is consistent with the findings of \citet{ma2025network}, which suggest that network sparsity is crucial for unlocking the scaling potential of deep reinforcement learning. By reducing the interference highlighted in their work, our MoE design theoretically lowers the upper bound of gradient conflict. This preserves distinct representational subspaces. As shown in \autoref{fig:appendix_design_space_plasticity}, the MoE architecture effectively maintains neuron activity (low dormant ratio) and stabilizes representation magnitudes (controlled latent norm), thereby preventing representational collapse and enabling superior multitask performance.

\begin{figure}[htbp]
    \centering
    \includegraphics[width=0.8\linewidth]{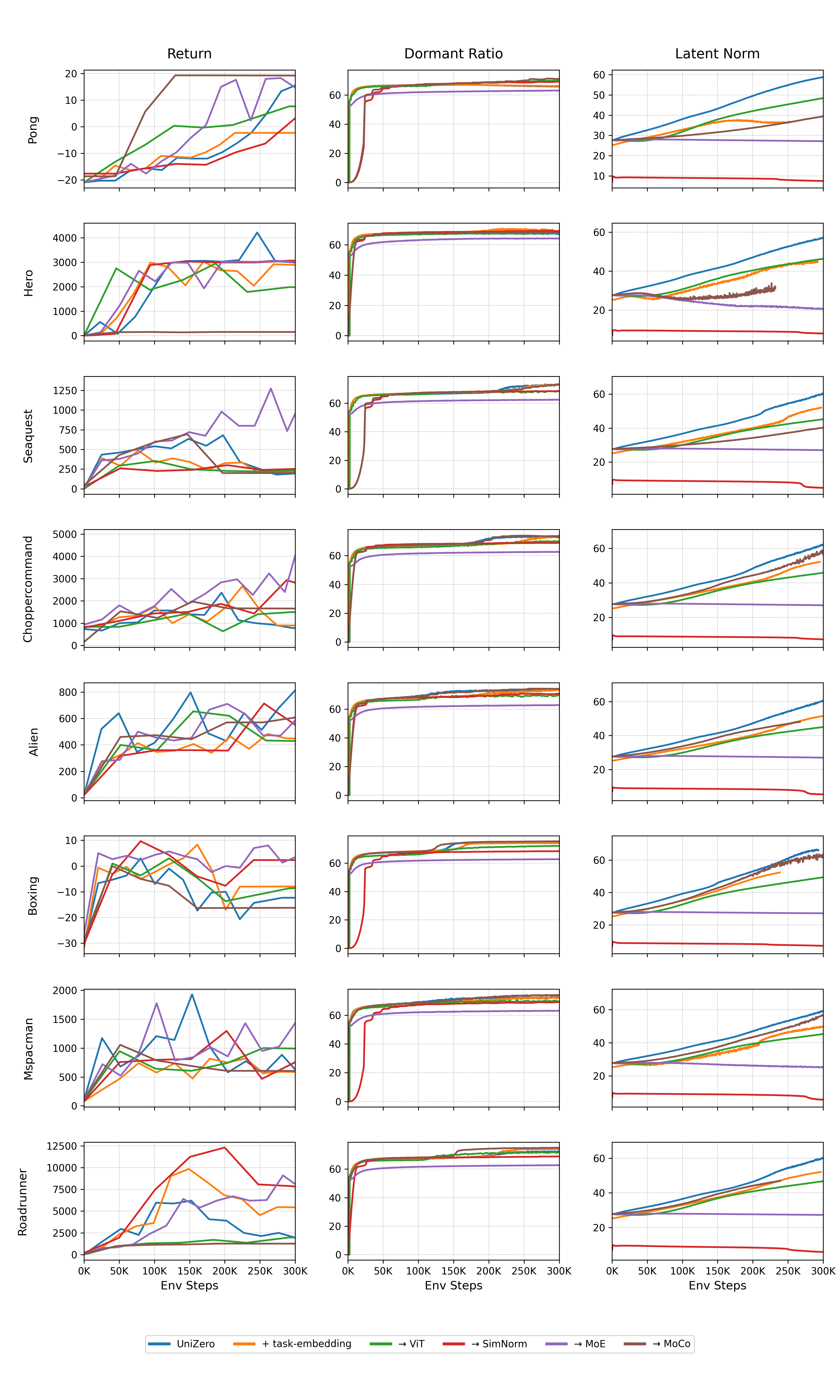}
    \caption{\textbf{Complete plasticity metric analysis for the ScaleZero design space exploration.} This figure provides the full data for the ablation study, detailing the evolution of performance (Return), Dormant Ratio, and Latent Norm for different model variants across all 8 tasks in the Atari8 multitask benchmark. The MoE-based model consistently outperforms others by maintaining superior plasticity metrics.}
    \label{fig:appendix_design_space_plasticity}
\end{figure}

\subsection{Full Performance on the Atari26 Benchmark}
\label{app:atari_results}

This section provides the complete learning curves corresponding to the summary results in \autoref{tab:atari26_unizero_vs_scalezero}. As reported in the main text, the single multitask ScaleZero (MT) agent achieves a higher mean HNS than the average of 26 specialized UniZero (ST) agents. This result demonstrates that ScaleZero achieves net positive knowledge transfer, which we attribute to the MoE architecture's ability to maintain plasticity and transfer general priors (e.g., object physics) from simple to complex tasks.

However, the median HNS of ScaleZero is lower than the ST baseline. This is primarily influenced by suboptimal performance on a few hard-exploration or mechanically unique games (e.g., \textit{PrivateEye}), indicating that negative interference is not fully eliminated. The complete learning curves for all 26 games are shown in \autoref{fig:appendix_atari26_curves_en}.

\begin{figure}[htbp]
    \centering
    \includegraphics[width=\linewidth]{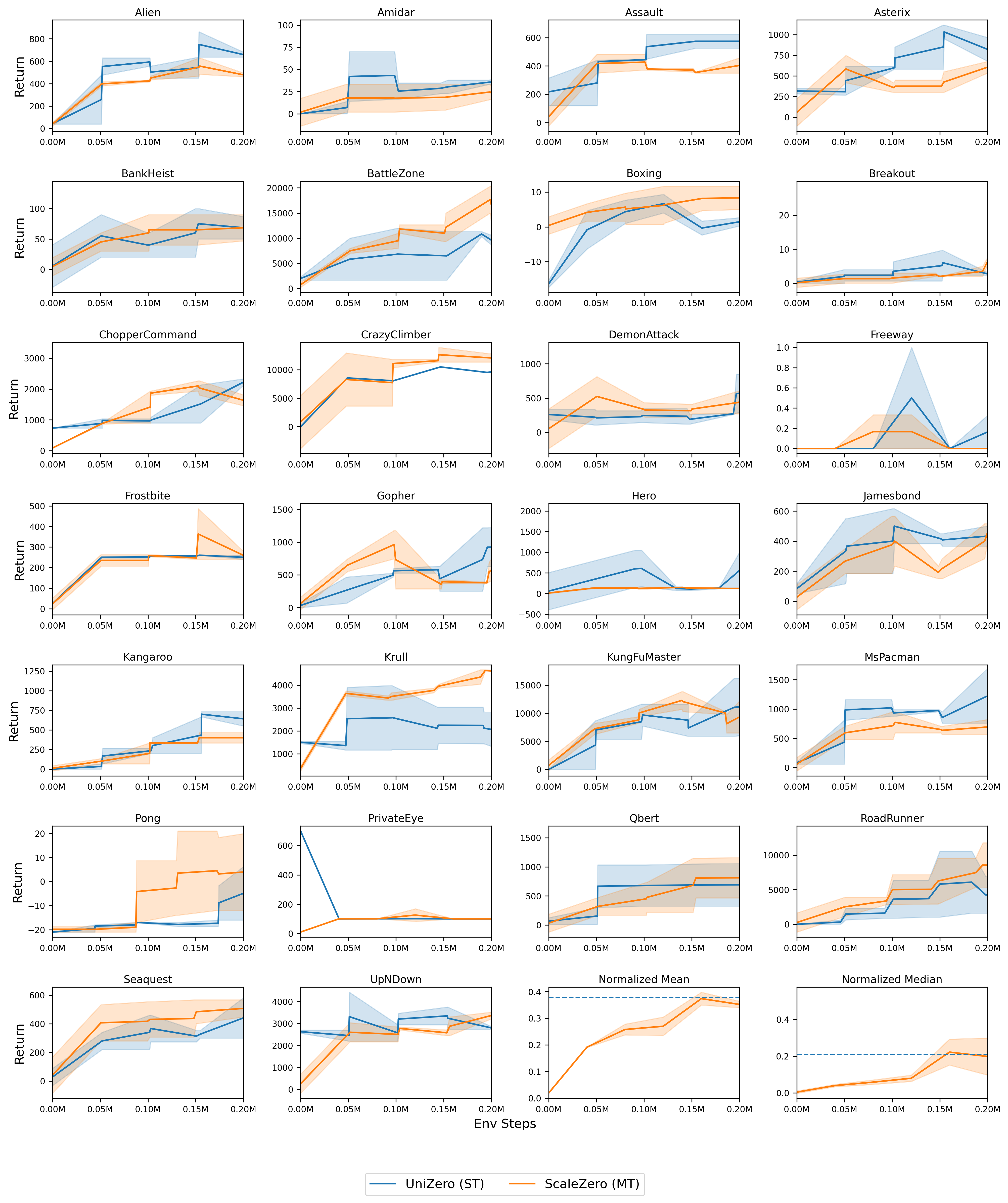}
    \caption{\textbf{Learning curves for ScaleZero (MT) vs. UniZero (ST) on the Atari26 benchmark.} This figure shows the full per-game learning curves (mean and 95\% confidence interval) for the multitask ScaleZero agent compared against 26 single-task UniZero baselines.}
    \label{fig:appendix_atari26_curves_en}
\end{figure}


\section{DMC Experiment Details}
\label{sec:appendix_dmc}
\label{app:dmc_exp}

\subsection{Benchmark Setup}
\label{app:dmc_setup}

This section details the experimental setup for the DeepMind Control (DMC) Suite~\citep{dmc}, which serves as our benchmark for continuous control as presented in Section~\ref{sec:multitask_benchmarks}. Experiments were conducted on a suite of 18 tasks (e.g., \textit{walker\_walk}, \textit{cheetah\_run}), following the experimental setup of \cite{pu2024unizero}. Following the methodology of our Atari experiments, we compare a single \textit{ScaleZero (MT)} model against a baseline of 18 individually trained \textit{UniZero (ST)} models. Model inputs are low-dimensional state vectors from the environment, and the action space is a continuous vector normalized to $[-1, 1]$. Performance is measured as the average return over 8 evaluation episodes, with results averaged across 2 random seeds.

\subsection{Performance Comparison: ScaleZero vs. UniZero}
\label{app:dmc_results}

As summarized in \autoref{tab:dmc18_scalezero_vs_unizero} and detailed in \autoref{fig:appendix_dmc18_curves_en}, the multitask ScaleZero model achieves performance competitive with, and often superior to, the single-task UniZero baseline. Notably, ScaleZero achieves a higher median score, indicating robust generalist performance across the majority of tasks rather than excelling on only a few. This validates that the architectural principles of ScaleZero are effective for complex, state-based continuous control problems. We hypothesize this success stems from the MoE layers learning to specialize in shared physical priors and control primitives (e.g., balancing vs. locomotion), which are then composed by the router based on the task.

\begin{figure}[htbp]
    \centering
    \includegraphics[width=0.98\textwidth]{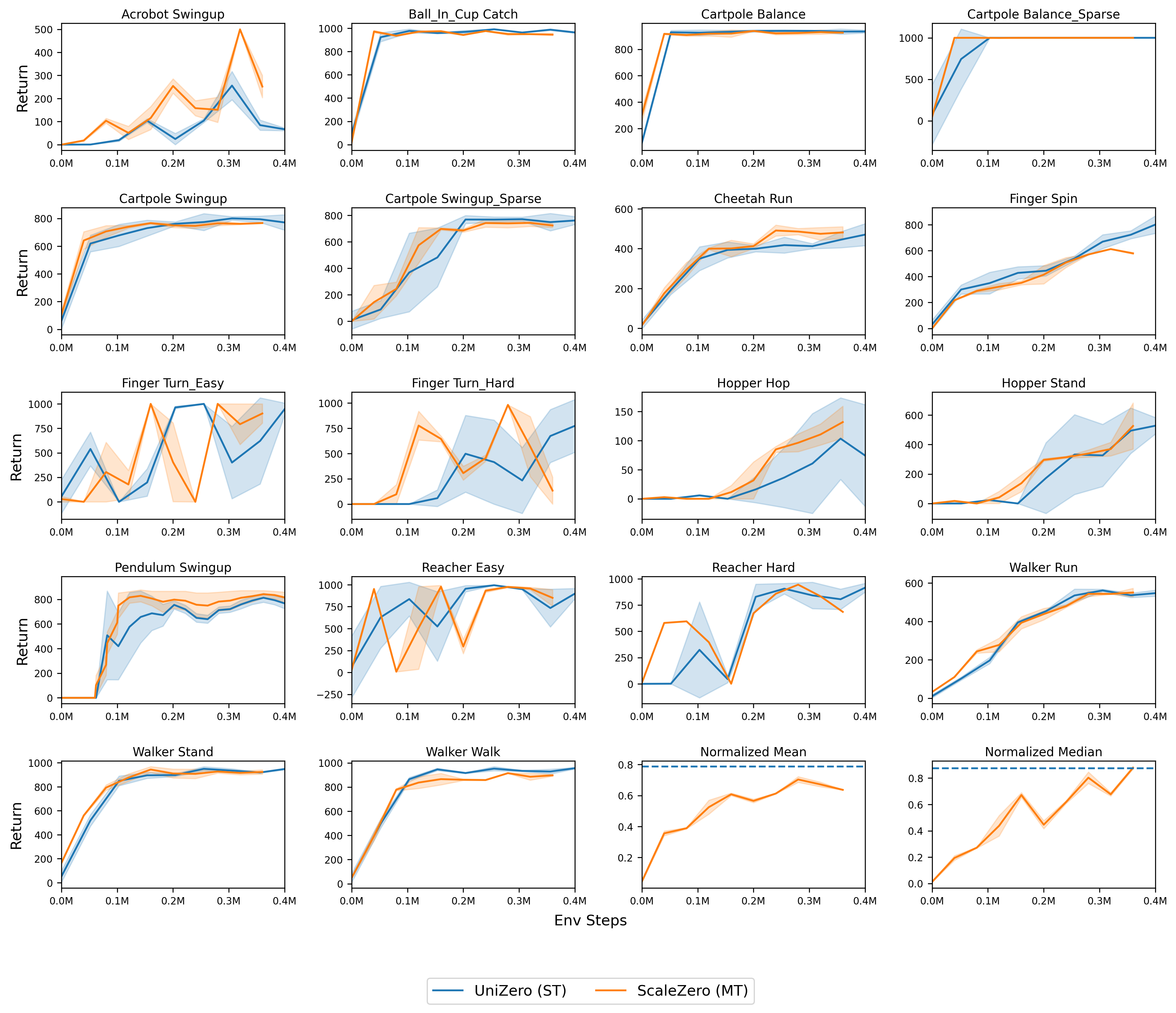}
    \caption{\textbf{Learning curves of ScaleZero (MT) vs. UniZero (ST) on the DMC18 benchmark.} This figure provides the full per-task learning curves, comparing the performance of the single multitask ScaleZero agent against 18 specialized UniZero baselines. Solid lines represent the mean performance over 2 random seeds, and the shaded area indicates the 95\% confidence interval.}
    \label{fig:appendix_dmc18_curves_en}
\end{figure}

\subsection{Efficiency Evaluation of Dynamic Parameter Scaling (DPS)}
\label{app:dmc_dps_exp}

This section provides the detailed learning curves that substantiate the efficiency gains of Dynamic Parameter Scaling (DPS), as reported in Section~\ref{sec:dynamic_scaling}. \autoref{fig:appendix_dmc18_dps_curves_en} compares the per-task sample efficiency of \textbf{ScaleZero-DPS} against the standard ScaleZero model.

The curves demonstrate that ScaleZero-DPS (orange) learns significantly faster on most tasks, achieving target performance levels with fewer environment interactions. This validates the claim that DPS leads to a substantial reduction in sample complexity. The asterisks (*) denote tasks where training was terminated early by the DPS policy upon reaching a "solved" threshold, visually confirming the dynamic allocation of computational resources away from mastered tasks.

\begin{figure}[htbp]
    \centering
    \includegraphics[width=\columnwidth]{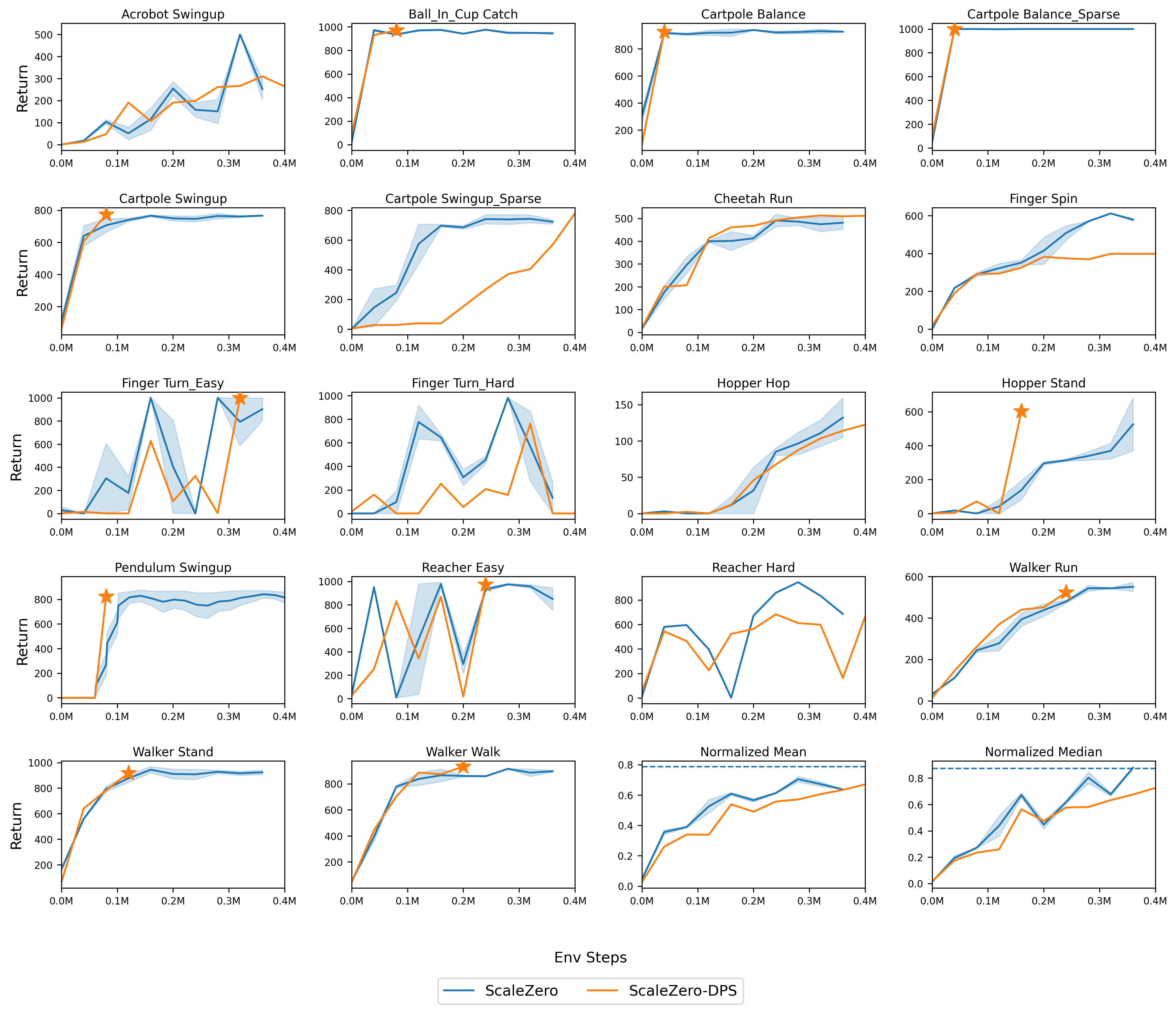}
    \caption{\textbf{Sample efficiency comparison of ScaleZero-DPS vs. standard ScaleZero on DMC18.} This figure shows the learning curves (return vs. environment steps) for both models across 18 tasks. The faster rise of the ScaleZero-DPS curves (orange) illustrates its superior sample efficiency. An asterisk (*) indicates that the DPS policy halted training for that task, demonstrating the dynamic resource allocation mechanism in action.}
    \label{fig:appendix_dmc18_dps_curves_en}
\end{figure}

\subsection{Mechanistic Analysis of Dynamic Parameter Scaling}
\label{app:dps_dynamics}

To understand the mechanisms by which Dynamic Parameter Scaling (DPS) achieves its sample efficiency, we conduct a granular analysis of the agent's internal parameter dynamics. This section presents two complementary visualizations: a time-series plot showing the continuous evolution of key metrics (\autoref{fig:dps_dynamics_summary}), and a suite of "importance matrices" providing discrete snapshots at critical training junctures (\autoref{fig:dps_alpha_matrices}).

\subsubsection{Setup}

\textbf{Time-Series Dynamics (\autoref{fig:dps_dynamics_summary}):} This figure provides a macroscopic view of the training process. The top panel correlates high-level metrics—normalized performance scores and the number of solved tasks—with the progression of training stages. The bottom panel offers a view into the internal mechanics by tracking the average learnable importance (`alpha` scale) of LoRA adapters. It specifically compares the behavior of adapters in the first Transformer layer (Layer 0, solid lines) versus the last (Layer 3, dashed lines), highlighting the emergence of different dynamic behaviors at different model depths.

\textbf{Stage-Wise Importance Matrices (\autoref{fig:dps_alpha_matrices}):} This multi-panel figure offers a more granular, event-based perspective. Each matrix is an "importance matrix" where a cell at `(row r, column c)` shows the average `alpha` scale of the adapter introduced in Stage `c`, as measured at the end of Stage `r`'s training. This reveals how the model re-evaluates the contribution of all historical adapters after each phase of targeted learning. To provide a comprehensive view, this analysis is presented for:
\begin{itemize}[leftmargin=*,noitemsep]
    \item The overall average across all layers and adapter types (Panel a).
    \item A per-layer breakdown for Layer 0 and Layer 3 to show hierarchical differences (Panels b-c).
    \item A per-type breakdown for adapters applied to the Query, Key, Value, and output Projection linear layers to reveal functional specialization (Panels d-g).
\end{itemize}

\subsubsection{Results and Analysis}

The analysis of these figures reveals several key dynamics of the DPS-managed learning process. The time-series plot (\autoref{fig:dps_dynamics_summary}) illustrates the staged training protocol. In this particular experiment, the stage transitions (vertical dashed lines) are triggered by a pre-allocated iteration budget, consistent with the method's budget-based trigger mechanism. These capacity expansions occur as overall performance and the number of solved tasks increase over the course of training. The divergence in `alpha` scales between Layer 0 and Layer 3 suggests that the model applies different update strategies to shallow versus deep layers as training progresses.

The importance matrices in \autoref{fig:dps_alpha_matrices} provide further detail on this behavior:

\begin{itemize}[leftmargin=*]
    \item \textbf{Overall Strategy (Panel a):} The data is consistent with a dual mechanism of knowledge retention and plasticity. The importance scale of the foundational adapter from Stage 0 (column 0) is not static; it increases from 1.0 to 1.097 by the end of training. This suggests that the model increases the relative contribution of its initial knowledge as it encounters more complex tasks. Concurrently, the diagonal values show how the model focuses on the newest adapter during each stage, which is then re-weighted in subsequent stages as its knowledge is integrated.

    \item \textbf{Hierarchical Differentiation (Panels b vs. c):} A notable difference is observed between Layer 0 and Layer 3. In Layer 0 (Panel b), the importance of the initial adapter (cell (4,0)) exhibits high stability with moderate growth (to 1.058), indicating its role as a stable foundation for low-level features. In contrast, Layer 3 (Panel c) shows more pronounced dynamic re-weighting. The foundational adapter's importance grows more significantly (to 1.099), suggesting that higher-level, abstract representations rely heavily on and amplify this core knowledge. The greater variance in Layer 3's alpha values indicates that later layers are more involved in adaptive, task-specific strategy modifications.

    \item \textbf{Functional Differentiation by Layer Type (Panels d-g):} The analysis reveals distinct behaviors based on which linear layer within the attention mechanism an adapter modifies.
        \begin{itemize}[leftmargin=*,noitemsep]
            \item \textbf{Query (Q) and Key (K) Adapters} (Panels d, e): Adapters applied to the Q and K layers, which produce the representations for calculating attention scores, generally exhibit importance scales that increase over time. For instance, the Stage 0 Key adapter's scale reaches 1.104. This suggests a continuous strengthening of the mechanisms that determine token-to-token relevance.
            \item \textbf{Value (V) and Projection (Proj) Adapters} (Panels f, g): In contrast, adapters for the V layer (which provides the content to be aggregated) and the output Projection layer (which combines attention head outputs) show more complex dynamics, including suppression (alpha < 1.0). For example, the Value adapter from Stage 3 is down-weighted to 0.889 after Stage 4. This behavior may indicate a more cautious integration of new value-based information or a re-balancing of attention head outputs to integrate new skills without disrupting existing ones.
        \end{itemize}
\end{itemize}

In summary, the DPS method facilitates a structured learning process. The model appears to preserve and amplify foundational knowledge, primarily in early layers, while enabling dynamic, adaptive strategy-switching in later layers. Furthermore, it modulates the importance of adapters based on their specific function within the self-attention mechanism (Query/Key vs. Value/Projection). This combination of knowledge retention and targeted, hierarchical plasticity offers a mechanistic explanation for the method's observed sample efficiency.

\begin{figure}[htbp]
    \centering
    \includegraphics[width=0.9\columnwidth]{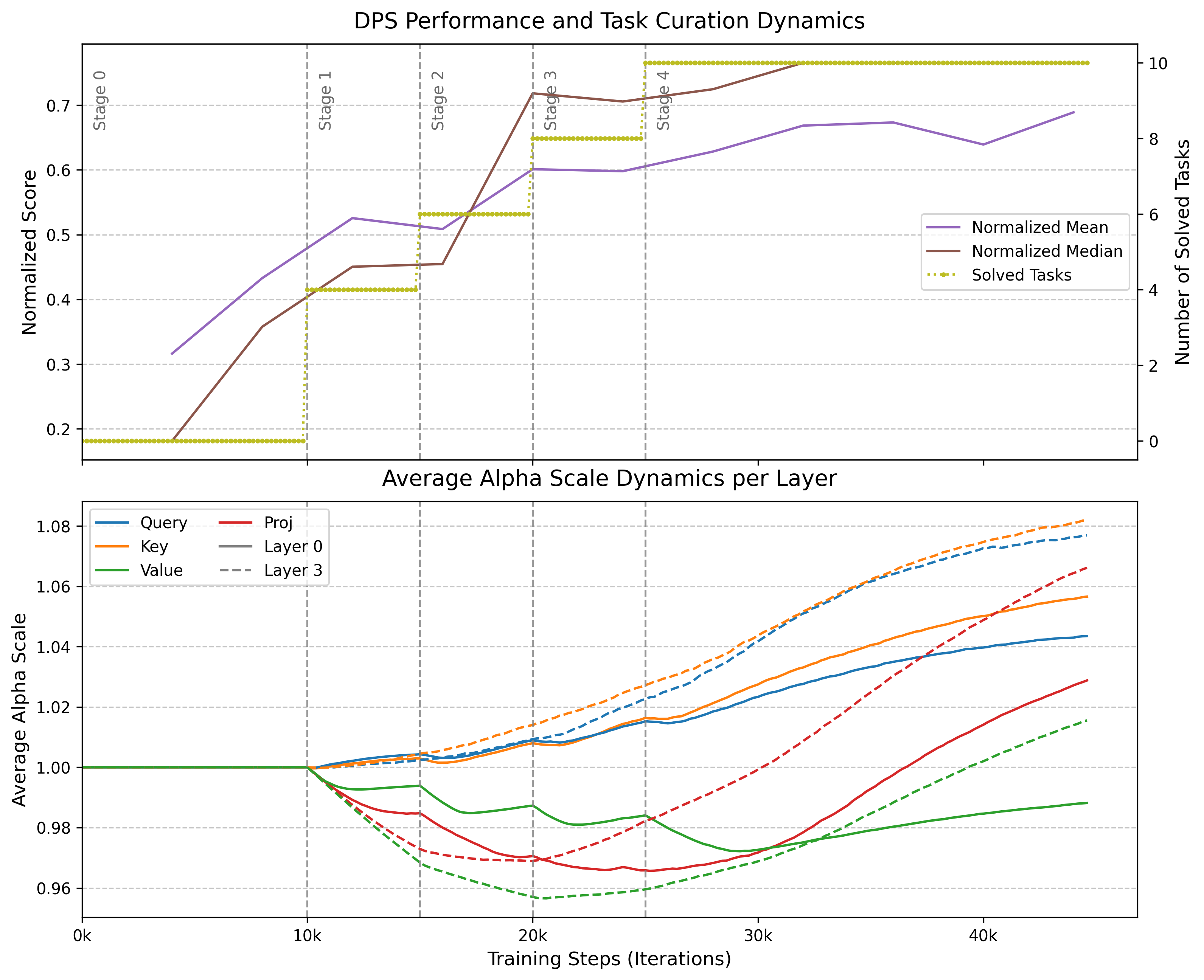}
    \caption{
        \textbf{Internal dynamics of ScaleZero-DPS during training.}
        \textbf{(Top)} Performance and task-solving progress across stages.
        \textbf{(Bottom)} Average `alpha` scales for adapters in the first (Layer 0, solid) and last (Layer 3, dashed) layers, showing the emergence of hierarchical specialization.
    }
    \label{fig:dps_dynamics_summary}
\end{figure}

\begin{figure*}[htbp]
    \centering
    \begin{subfigure}{0.32\textwidth}
        \centering
        \includegraphics[width=\linewidth]{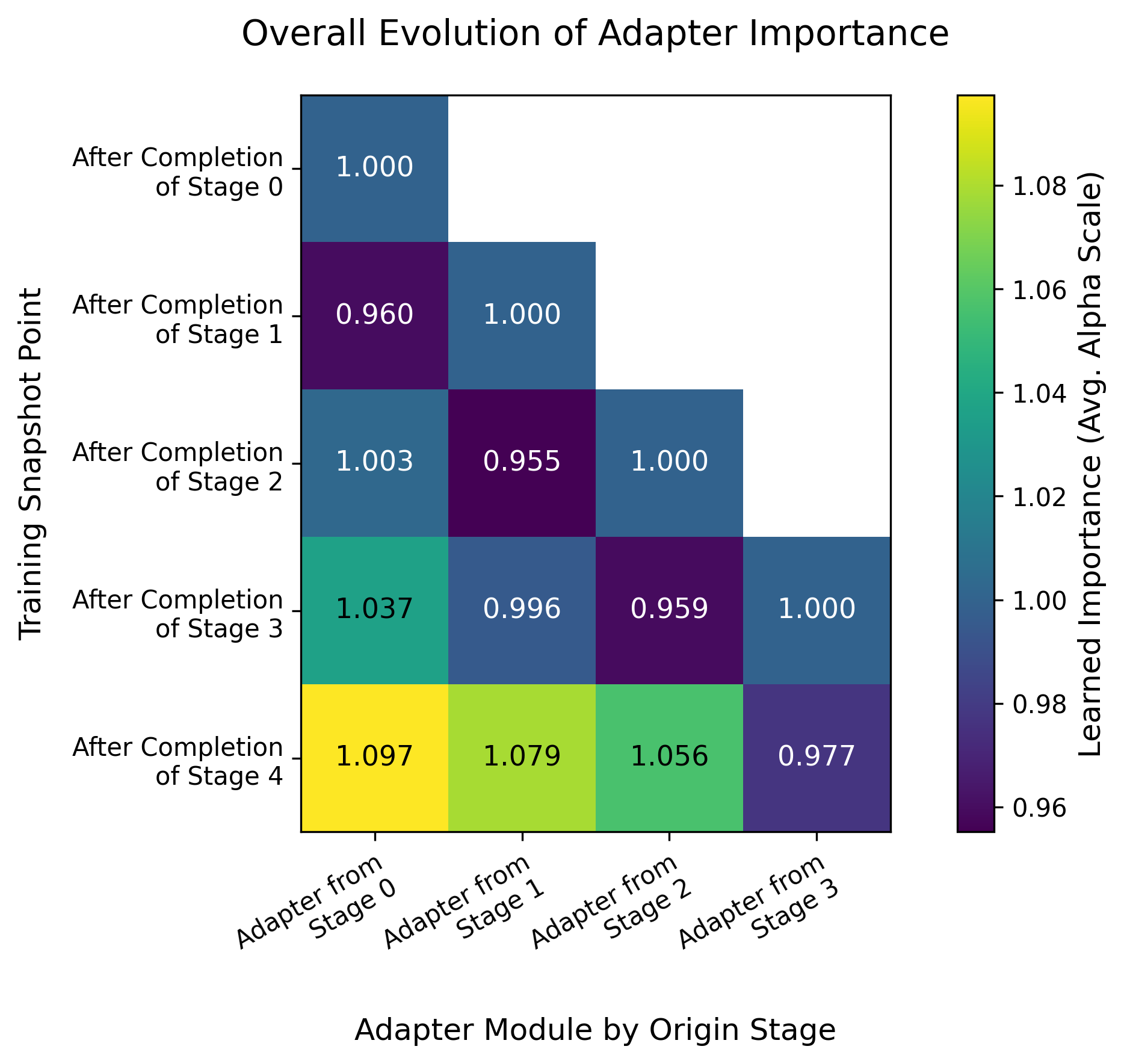}
        \caption{Overall Average}
        \label{fig:matrix_overall}
    \end{subfigure}
    \hfill
    \begin{subfigure}{0.32\textwidth}
        \centering
        \includegraphics[width=\linewidth]{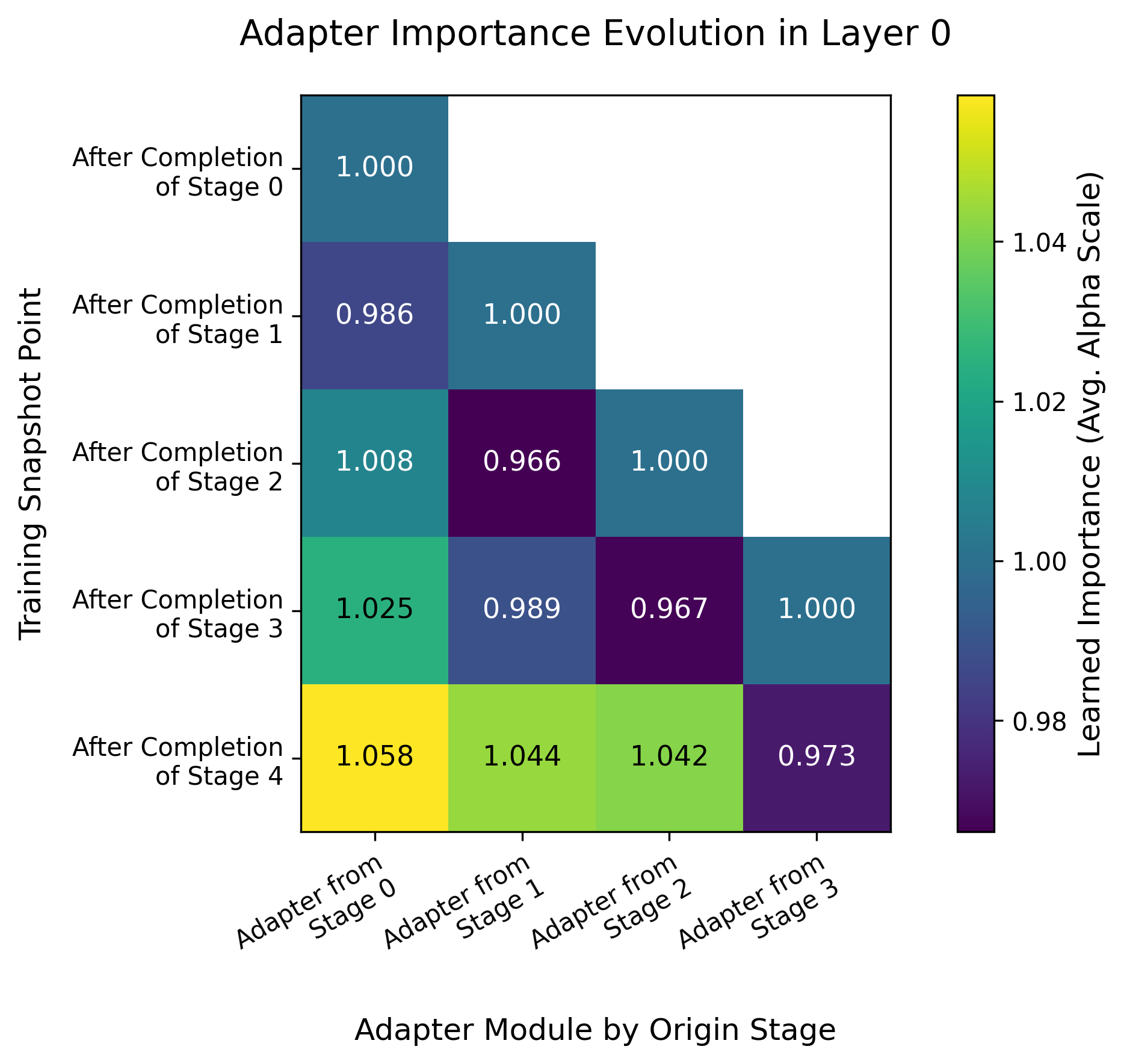}
        \caption{Layer 0 Only}
        \label{fig:matrix_layer0}
    \end{subfigure}
    \hfill
    \begin{subfigure}{0.32\textwidth}
        \centering
        \includegraphics[width=\linewidth]{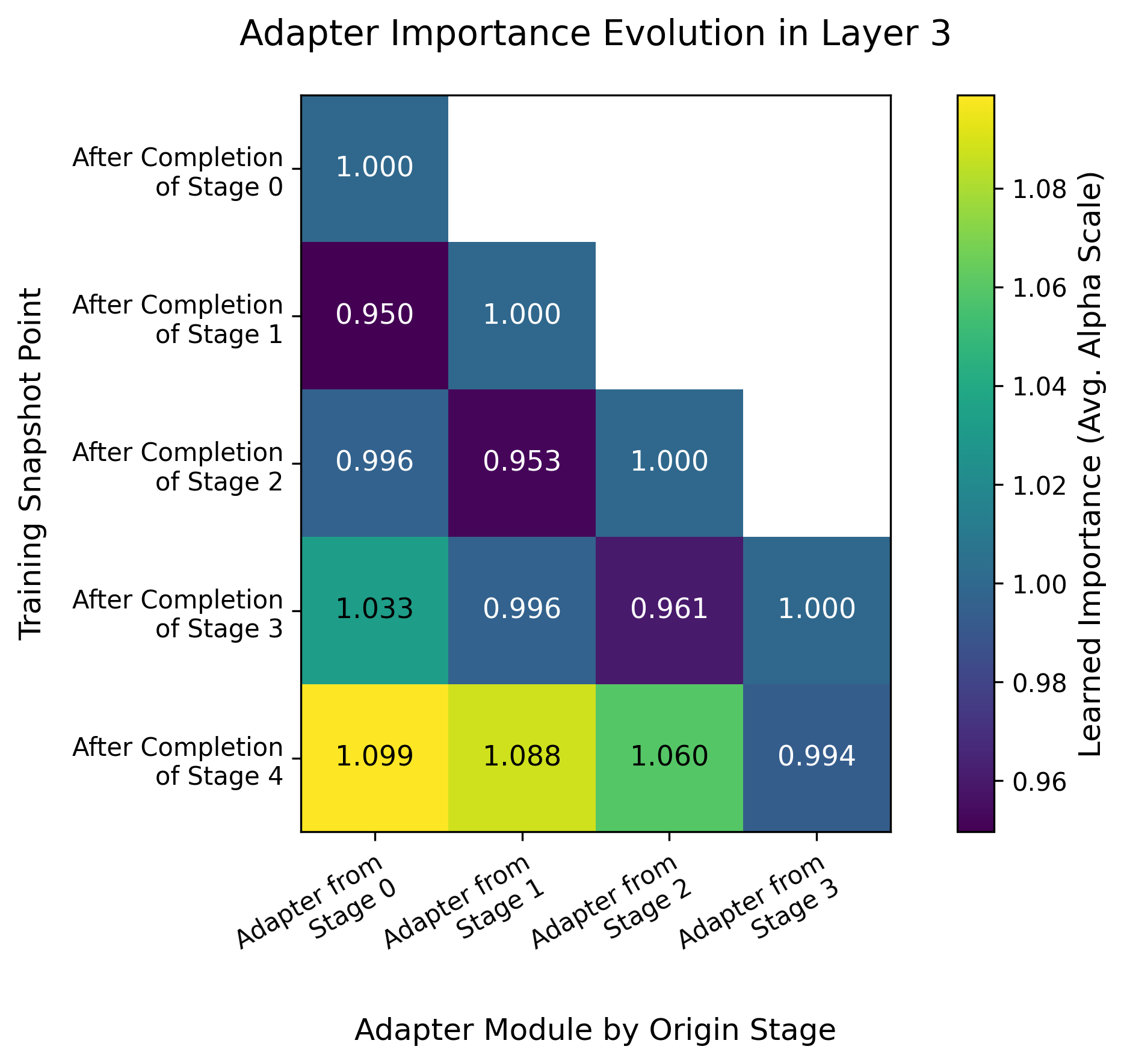}
        \caption{Layer 3 Only}
        \label{fig:matrix_layer3}
    \end{subfigure}

    \vspace{0.5cm} 

    \begin{subfigure}{0.24\textwidth}
        \centering
        \includegraphics[width=\linewidth]{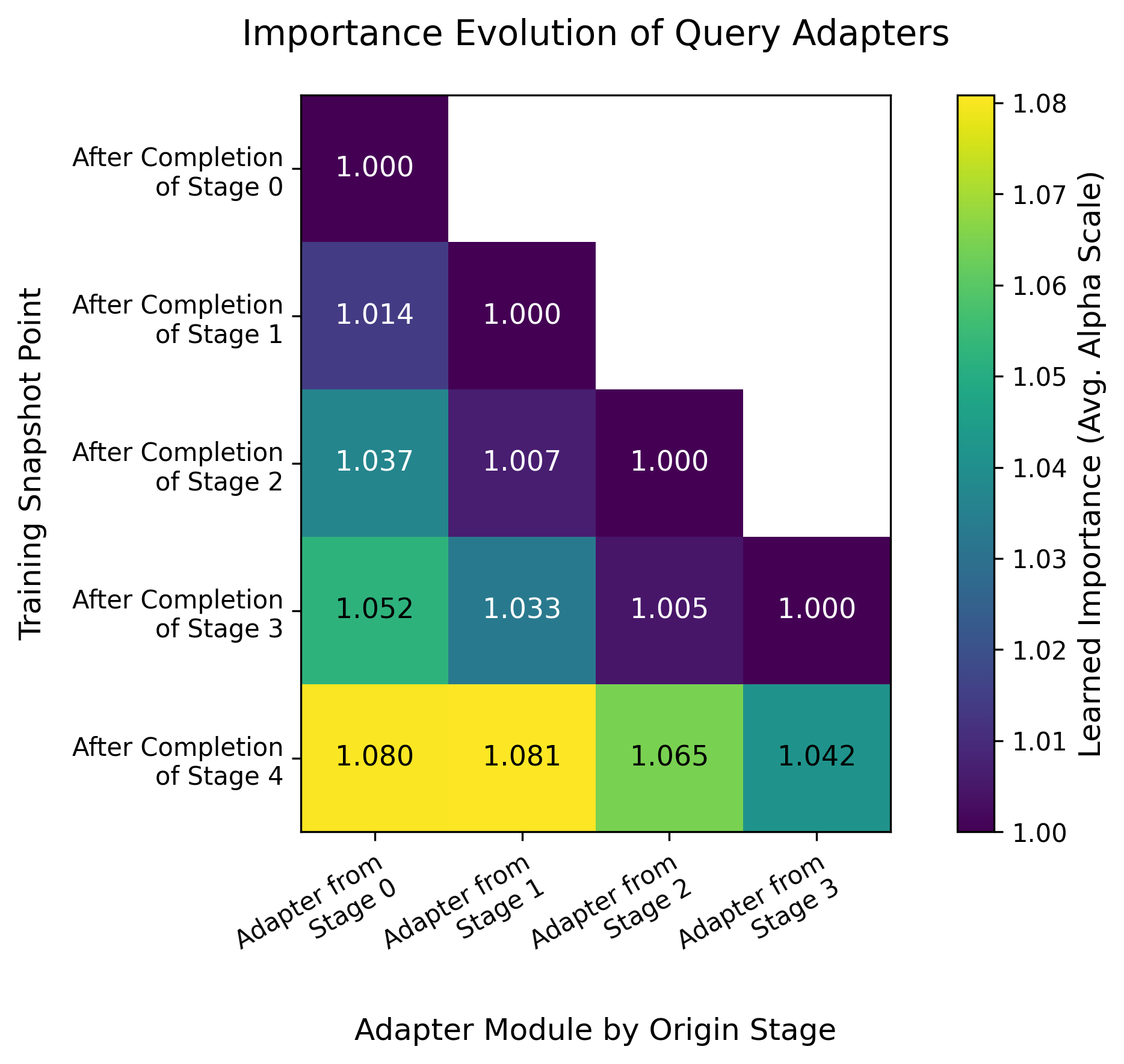}
        \caption{Query Adapters}
        \label{fig:matrix_query}
    \end{subfigure}
    \hfill
    \begin{subfigure}{0.24\textwidth}
        \centering
        \includegraphics[width=\linewidth]{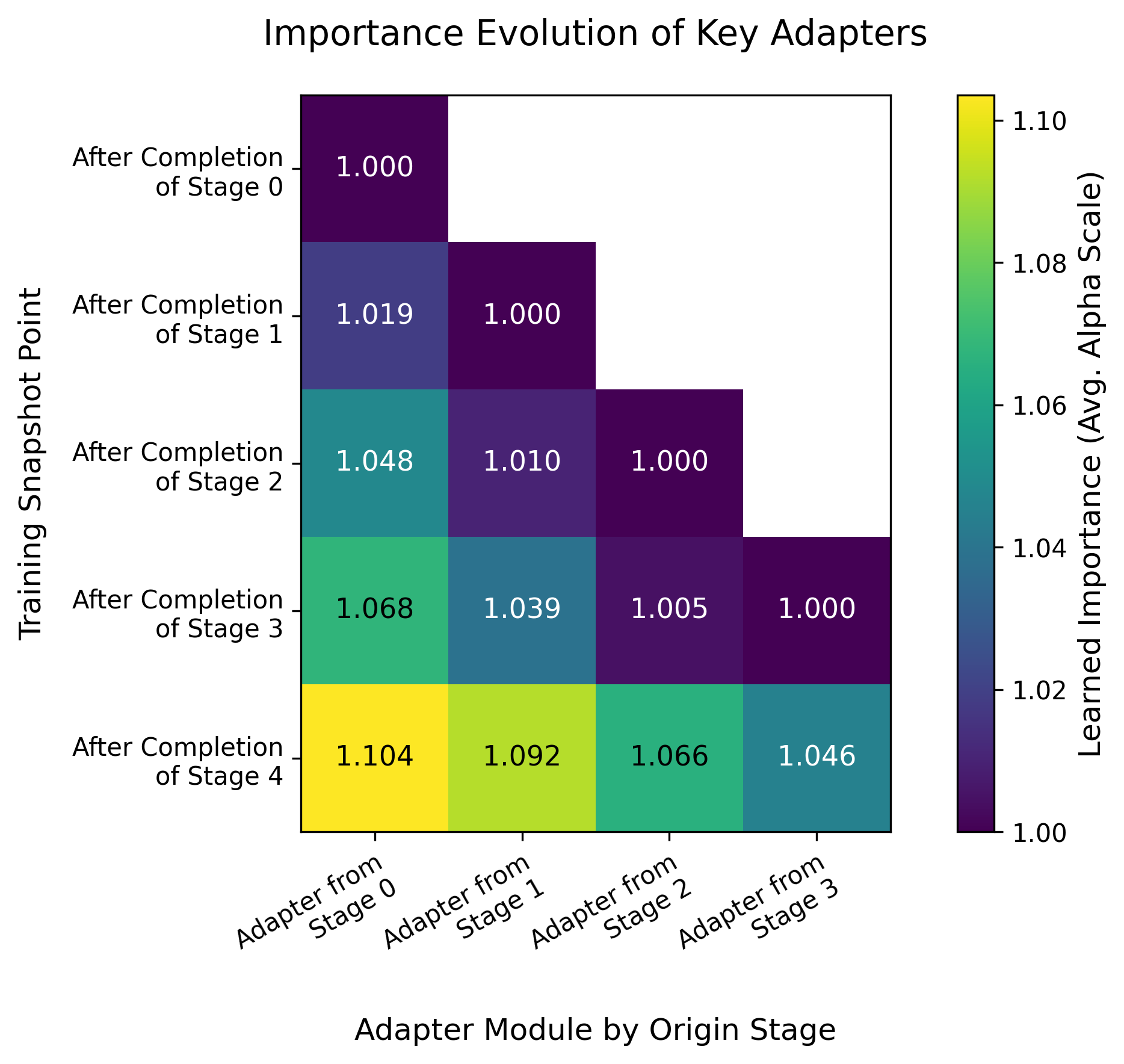}
        \caption{Key Adapters}
        \label{fig:matrix_key}
    \end{subfigure}
    \hfill
    \begin{subfigure}{0.24\textwidth}
        \centering
        \includegraphics[width=\linewidth]{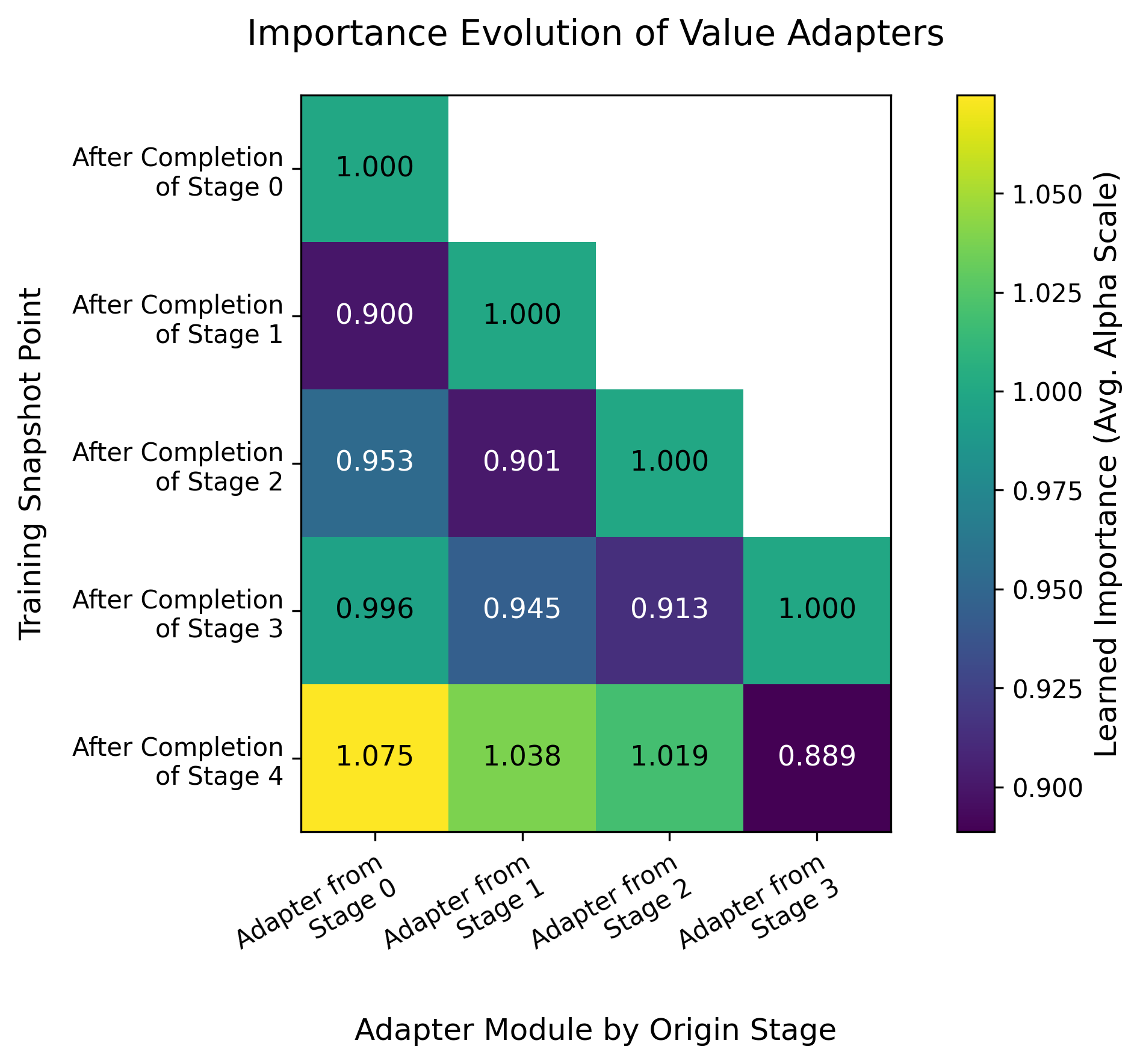}
        \caption{Value Adapters}
        \label{fig:matrix_value}
    \end{subfigure}
    \hfill
    \begin{subfigure}{0.24\textwidth}
        \centering
        \includegraphics[width=\linewidth]{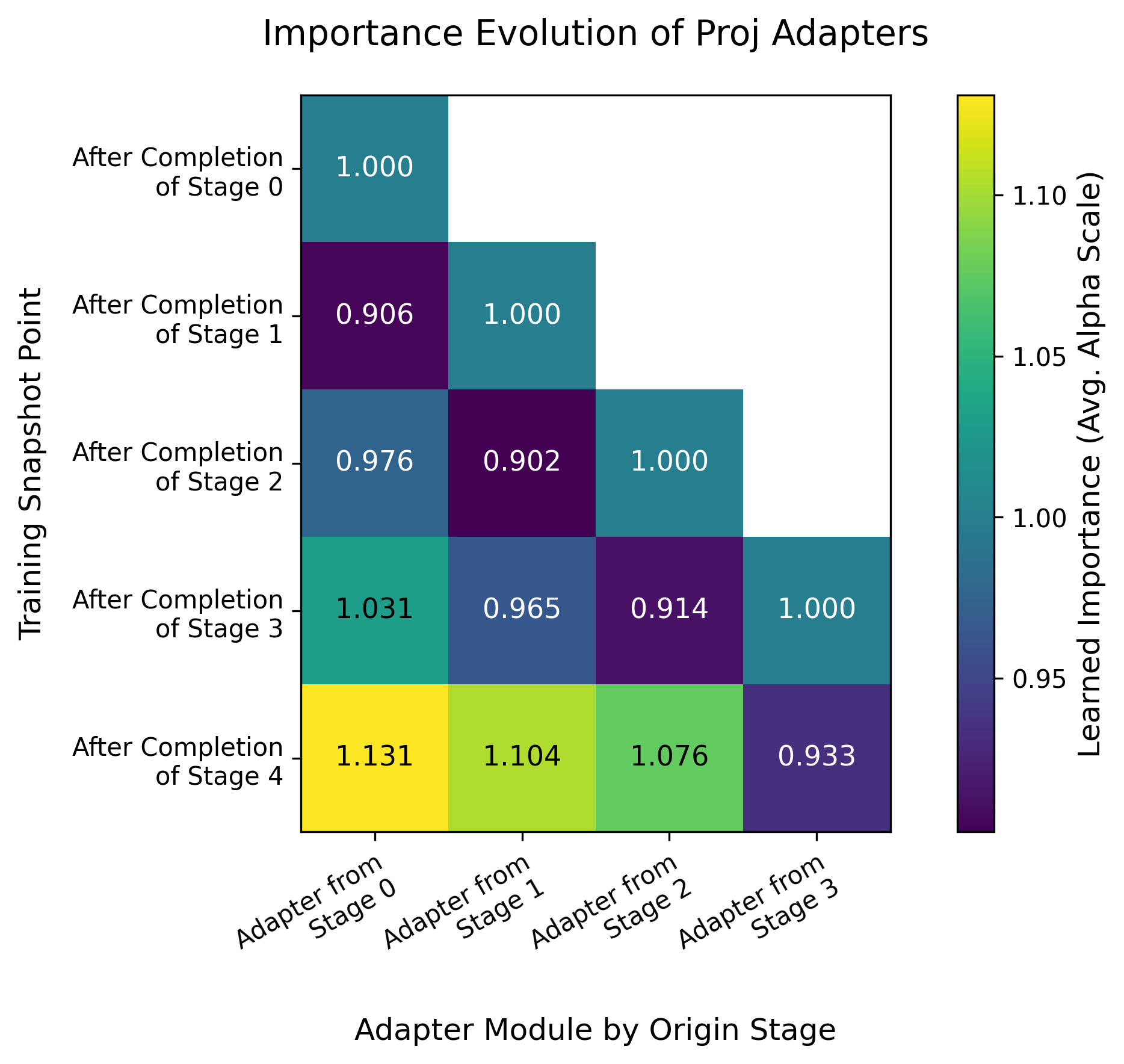}
        \caption{Projection Adapters}
        \label{fig:matrix_proj}
    \end{subfigure}

    \caption{
        \textbf{Evolution of Adapter Importance Across Training Stages and Model Components.}
        Each matrix displays the learned importance (average `alpha` scale) of adapters. A cell at `(row r, col c)` represents the importance of the adapter from Stage `c` after Stage `r`'s training is complete.
        \textbf{(a)} The overall average shows a clear pattern of knowledge preservation (column 0) and targeted plasticity (diagonal).
        \textbf{(b-c)} The layer-specific breakdown reveals that foundational knowledge is more rigidly preserved in early layers (Layer 0) while later layers (Layer 3) are more dynamic and adaptive.
        \textbf{(d-g)} The type-specific breakdown shows functional differentiation in how adapters are utilized over time, reflecting their distinct roles in the attention mechanism.
    }
    \label{fig:dps_alpha_matrices}
\end{figure*}

\section{Jericho Experiment Details}
\label{sec:appendix_jericho}
\label{app:jericho_exp}
\subsection{Benchmark Setup}
\textbf{Environment Overview:} Jericho~\citep{jericho} is a reinforcement-learning benchmark built on classic text-adventure games, played entirely through natural-language interaction. Unlike Atari or DMC, it demands robust language understanding of free-form scene, item, and state descriptions while contending with a combinatorial, effectively unbounded action space—at each step the environment surfaces only a small candidate set, yet many legal commands outside it still execute—and coping with sparse, delayed rewards that require long-horizon exploration and planning, making Jericho a stringent testbed for text-based RL.

\begin{figure}[t]
  \centering

  \begin{subfigure}{0.49\linewidth}
    \includegraphics[width=\linewidth]{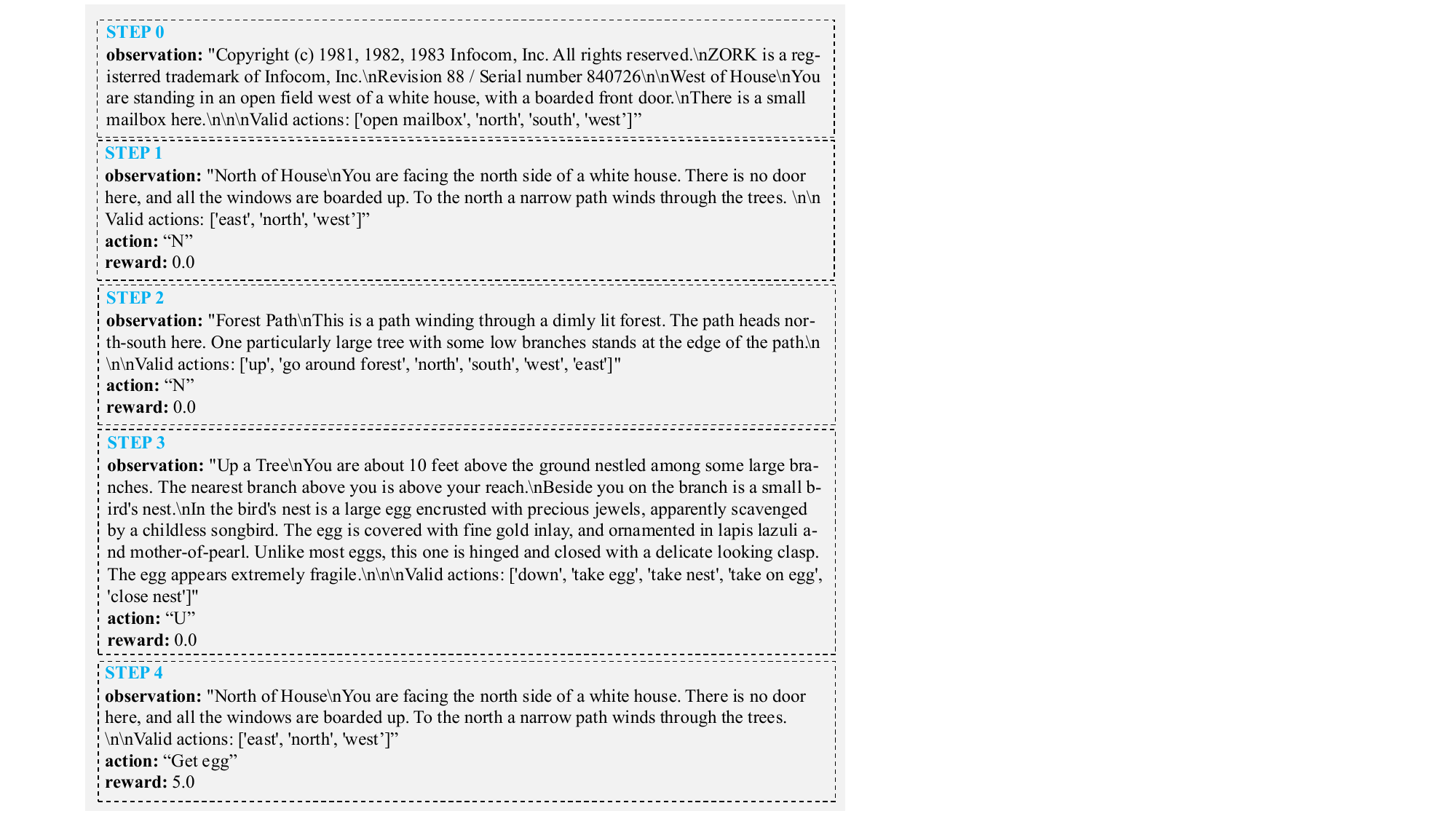}
    \subcaption{Zork1}\label{fig:zork1}
  \end{subfigure}\hfill
  \begin{subfigure}{0.49\linewidth}
    \includegraphics[width=\linewidth]{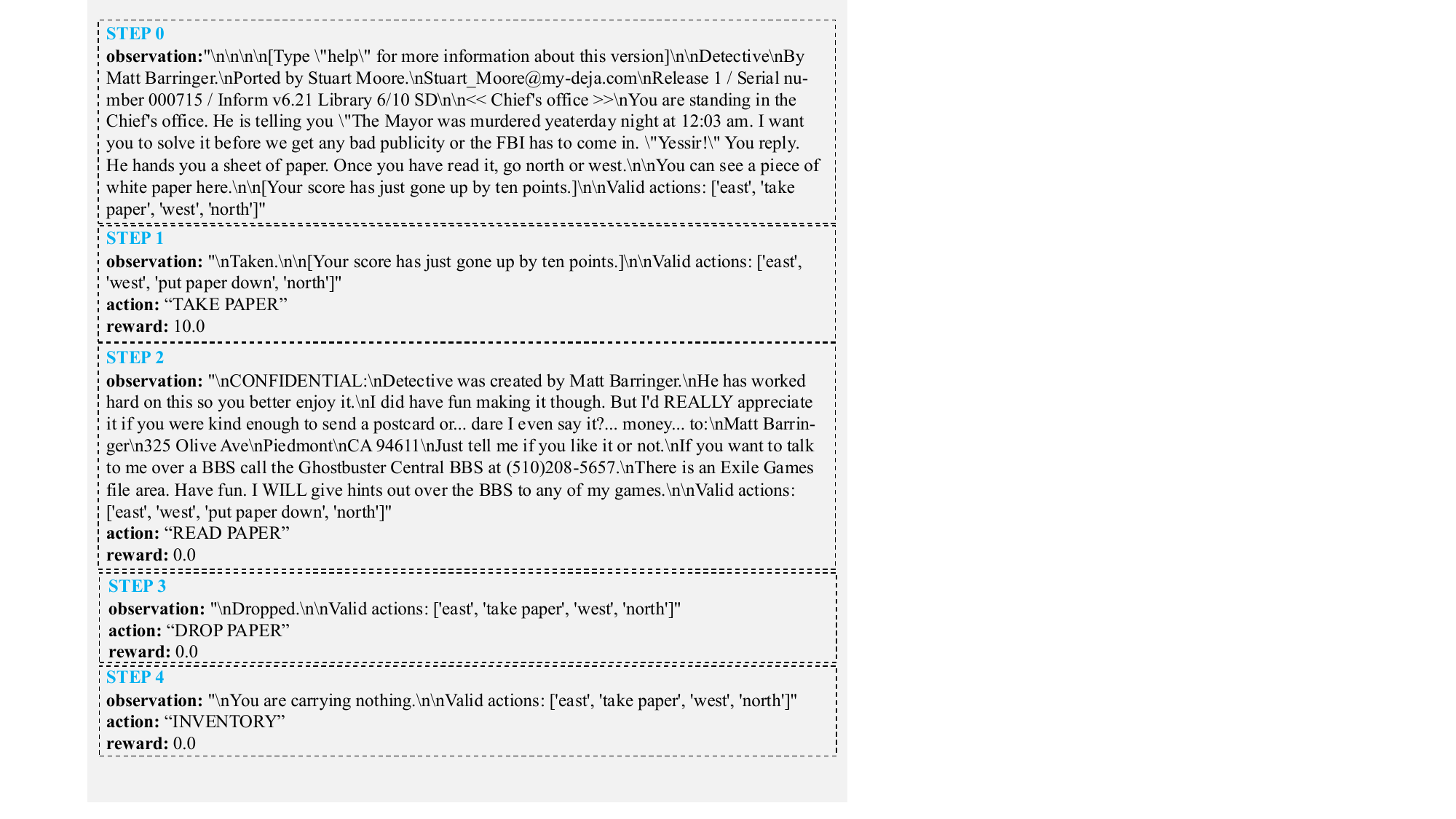}
    \subcaption{Detective}\label{fig:detective}
  \end{subfigure}

  \vspace{0.5em}

  \begin{subfigure}{0.49\linewidth}
    \includegraphics[width=\linewidth]{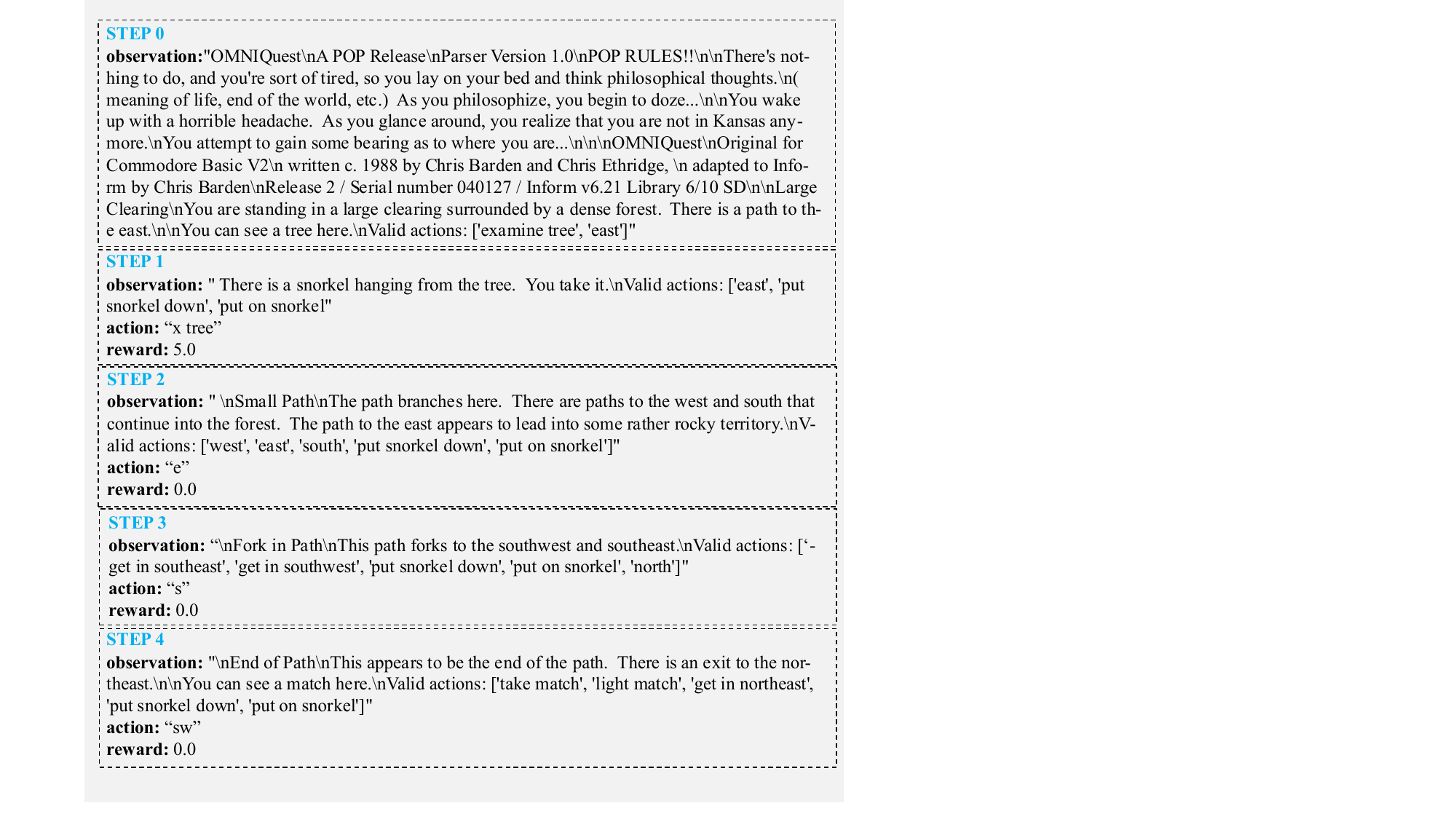}
    \subcaption{Omniquest}\label{fig:omniquest}
  \end{subfigure}\hfill
  \begin{subfigure}{0.49\linewidth}
    \includegraphics[width=\linewidth]{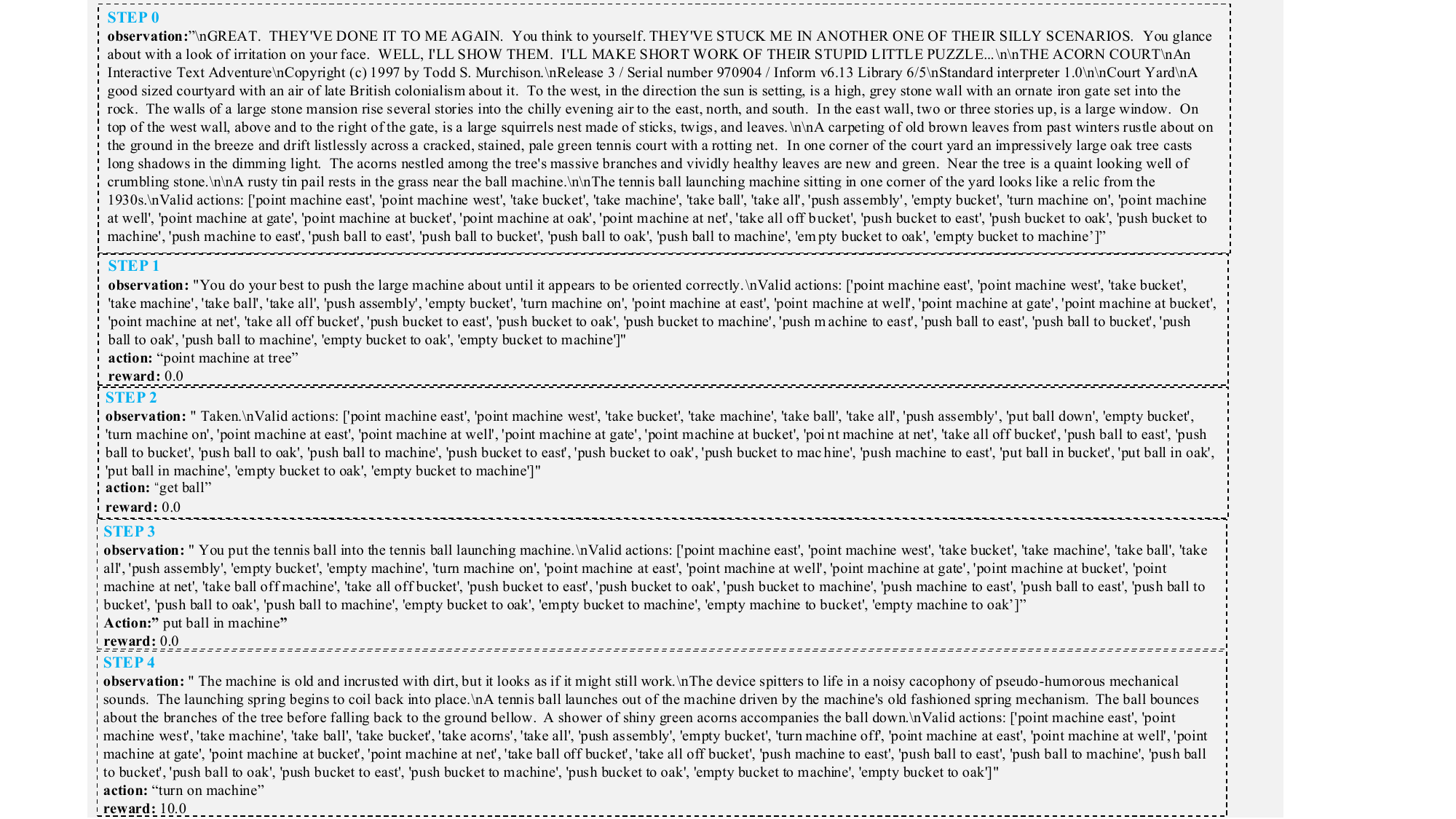}
    \subcaption{Acorncourt}\label{fig:acorncourt}
  \end{subfigure}

  \caption{First five steps of expert interaction trajectories across environments. At each step, the environment supplies a set of valid candidate actions (used in our experiments); however, additional legal actions outside this set may also be executed, so the action space is, in principle, unbounded.}
  \label{fig:jericho_4_expert_trajs}
\end{figure}

We conduct experiments on four representative tasks from the Jericho benchmark, namely \textit{Detective}, \textit{Acorncourt}, \textit{Omniquest}, and \textit{Zork1}. 
Examples of expert interaction trajectories for \textit{Zork1} and \textit{Detective} are illustrated in~\autoref{fig:jericho_4_expert_trajs}.
Following the protocol of the previous two benchmarks, we compare our multitask ScaleZero (MT) with the single-task UniZero (ST) baseline. ScaleZero uses the same text encoder (bge-base-en-v1.5), the same \textit{inference context length} of 4, and the same per-step \textit{max sequence length} of 512 tokens as UniZero. The \textit{max action num} and \textit{max steps} for the four tasks are given in \autoref{tab:jericho_stats}. Results are reported as the mean return over two random seeds. If, at any step, the valid action set is smaller than \textit{max action num}, we pad to \textit{max action num} and mark the padded entries with an \textit{action mask} so they are excluded from sampling.

\begin{figure}[t]
  \centering
  \includegraphics[width=\linewidth]{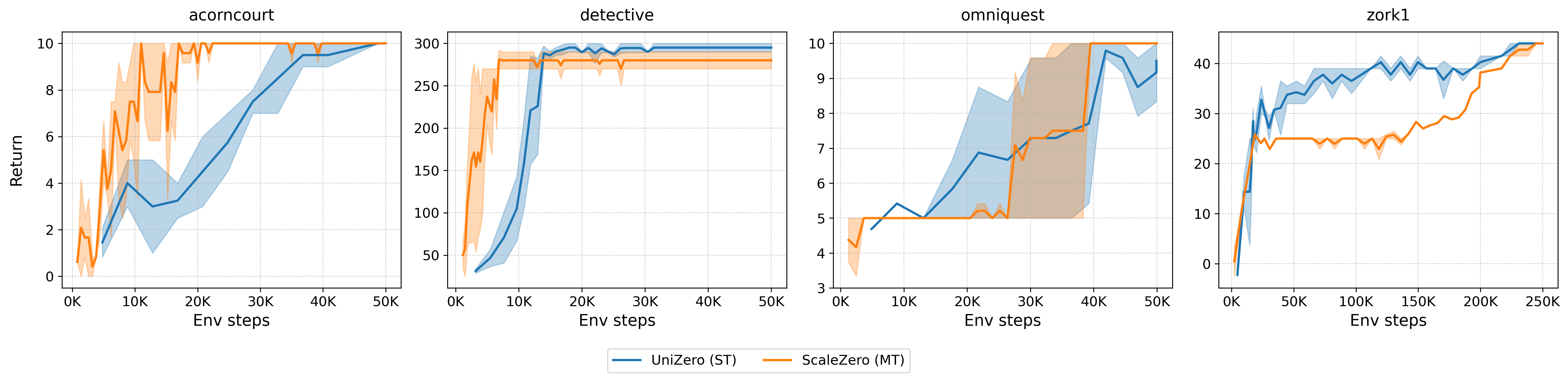}
      \caption{\textbf{Learning curves of ScaleZero (MT) vs. UniZero (ST) on the Jericho benchmark.} This figure compares the performance of the multitask ScaleZero and single-task UniZero models on 4 Jericho tasks. Solid lines represent the mean performance over 2 random seeds, and the shaded area indicates the 95\% confidence interval.}
  \label{fig:jericho_curves}
\end{figure}

\begin{table}[htbp]
\centering
\scalebox{0.85}{
\begin{tabular}{lcccc}
\hline
\textbf{Game} &
\multicolumn{1}{c}{\makecell{\textbf{maximum valid actions}\\(in 200 episodes)}} &
\multicolumn{1}{c}{\makecell{\textbf{maximum steps per episode}\\(in 200 episodes)}} &
\textbf{max action num} & \textbf{max steps} \\
\hline
Acorncourt & 34 & 17  & 45 & 100 \\
Omniquest  & 24 & 78  & 25 & 100 \\
Zork1      & 53 & 396 & 55 & 500 \\
Detective  & 11 & 51  & 12 & 100 \\
\hline
\end{tabular}
}
\caption{Key game statistics for the four Jericho tasks. Maximum valid actions denotes the largest number of valid action candidates exposed by the environment at any step, observed over 200 evaluation episodes; this statistic informs the \textit{max action num} parameter used in our experiments. Maximum steps per episode is the largest episode length observed over the same 200 episodes, while \textit{max steps} parameter is the episode-length cap adopted during experiments.}
\label{tab:jericho_stats}
\end{table}

\subsection{PERFORMANCE COMPARISON: SCALEZERO VS. UNIZERO (ST)}
As shown in \autoref{fig:jericho_curves}, the multitask ScaleZero attains returns comparable to the single-task UniZero baseline on most Jericho games, with learning curves exhibiting an “easy-first, hard-later” progression. Specifically, ScaleZero converges faster on \textit{Acorncourt}, \textit{Omniquest}, and \textit{Detective}; on the most challenging \textit{Zork1}, it initially lags but accelerates after ~150k environment interactions to reach parity with UniZero. We hypothesize that this stems from multitask training inducing transferable language priors and interaction routines, which help filter feasible commands and reduce fruitless exploration.

\section{Gradient Analysis in MoE}
\label{sec:appendix_moe}
\subsection{Experimental Analysis}
\label{sec:moe_experimal}
To understand \textit{why} Mixture-of-Experts (MoE) architectures excel in multitask settings, this study investigates the core mechanisms driving their performance. Focusing on multitask reinforcement learning, we conduct a dual analysis combining theoretical inquiry with empirical validation. The following section details the experimental protocol designed for this purpose.

\subsubsection{Experiment 1: Analyzing Gradient Conflicts in MoE-based Transformers}

We conduct our experiments on Atari-8. Concretely, our network architecture consists of an encoder (ViT), a backbone (Transformer), and corresponding heads. We compare two baseline methods with different backbones:

\noindent (1) \textbf{Naive Transformer:} The backbone consists of four standard Transformer blocks.

\noindent (2) \textbf{MoE-based Transformer:} The backbone also consists of four Transformer blocks, but the MLP layer in each block is replaced with an MoE layer, which comprises one shared expert and eight non-shared experts\citep{liu2024deepseek}. Shared experts provide cross-task general representations, enhancing generalization ability and ensuring training stability, while non-shared experts learn task-specific representations to strengthen the model’s discriminability and task adaptability. During the forward pass, all non-shared experts are selectively activated by a \textit{sparse gating network}, which determines which specific expert to use for each input.


For both baseline models, we investigate gradient conflicts at different components: \textit{(1) Input before the MoE layer.
(2) Output of the encoder. (3) Parameters of the MoE, including the shared expert, non-shared experts, and the entire MoE layer.} We measure gradient conflict between tasks using the maximum negative cosine similarity, defined as follows:
\begin{equation}
    \text{Max Gradient Conflict} = \max_{i,j} \left(- \frac{\nabla \theta_i \cdot \nabla \theta_j}{\|\nabla \theta_i\|\|\nabla \theta_j\|} \right)
\end{equation}
where $\nabla \theta_i$ and $\nabla \theta_j$ denote the gradients of the $i$-th and $j$-th tasks, respectively. A higher value of the \textit{Max Gradient Conflict} represents a greater degree of gradient conflict. We choose the \textit{maximum} pairwise cosine similarity because it directly identifies the most severe gradient conflict between any two tasks. Unlike \textit{averaging} operation, which can hide critical issues, the maximum value pinpoints the 'bottleneck pair' that most significantly impedes stable multitask training and overall convergence, even if other tasks cooperate well.


In MoE, when computing the gradient of an entire layer, if task $A$ selects expert $i$ while task $B$ selects expert $j$ with $i \neq j$, then the gradient of task $B$ on expert $i$ is filled with zero. The reason is that expert $i$ is not involved in the forward propagation of task $B$, and thus makes no contribution to its loss; consequently, its gradient during backpropagation should naturally be zero.

\begin{figure}[H]
    \centering
    \begin{minipage}{0.3\textwidth}
        \centering
        \includegraphics[width=\linewidth]{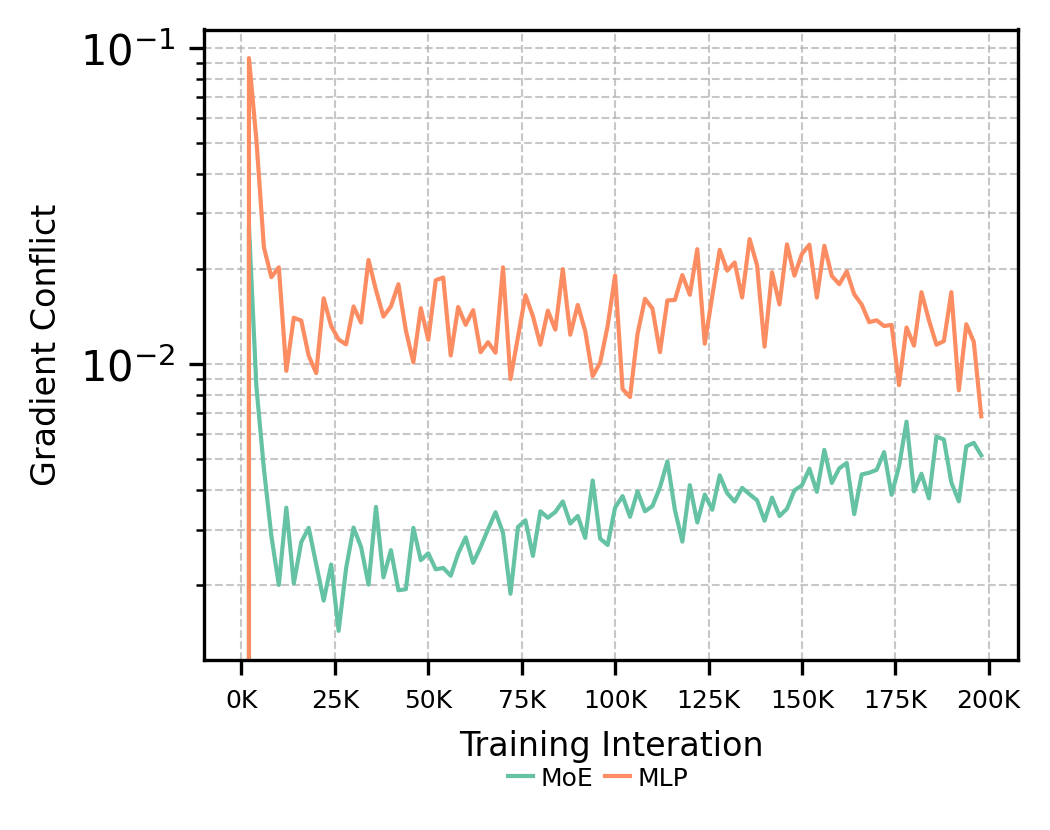}
        \subcaption{Conflict comparison of MoE layer gradients.}
        
    \end{minipage}
    \hfill
    \begin{minipage}{0.3\textwidth}
        \centering
        \includegraphics[width=\linewidth]{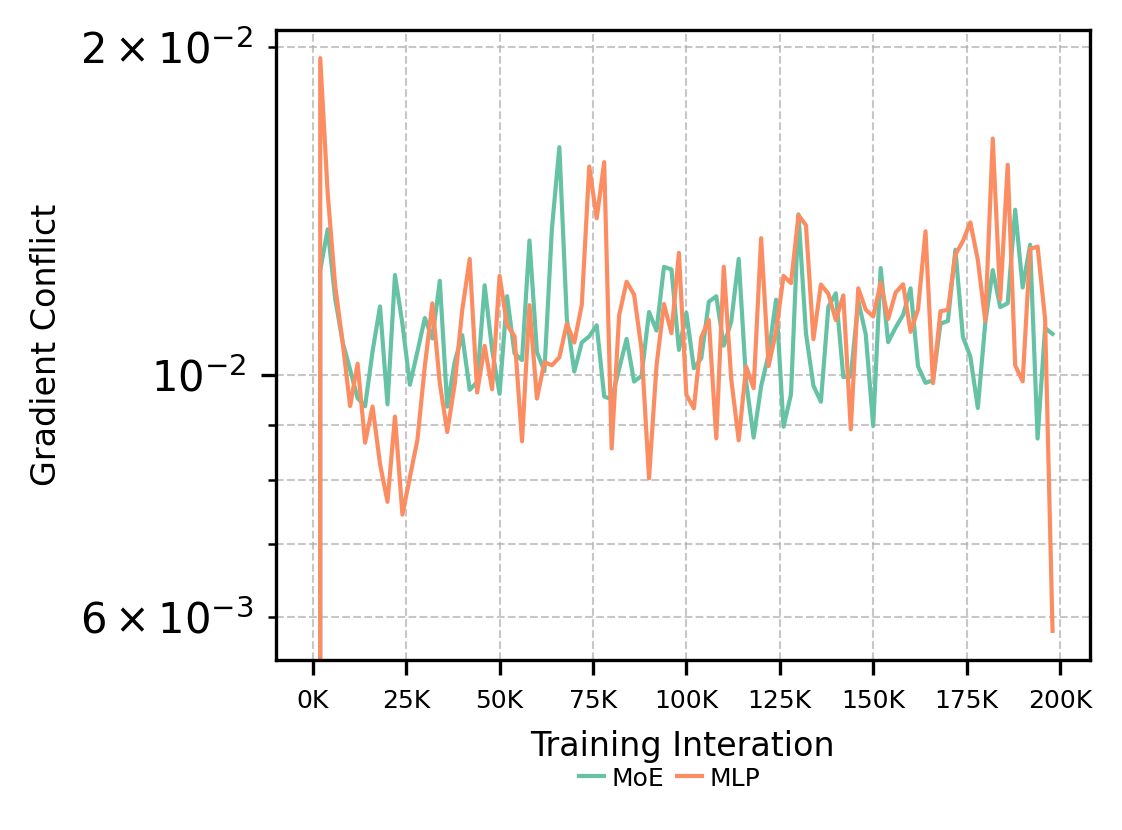}
        \subcaption{Conflict comparison of encoder output gradients.}
        
    \end{minipage}
    \hfill
    \begin{minipage}{0.3\textwidth}
        \centering
        \includegraphics[width=\linewidth]{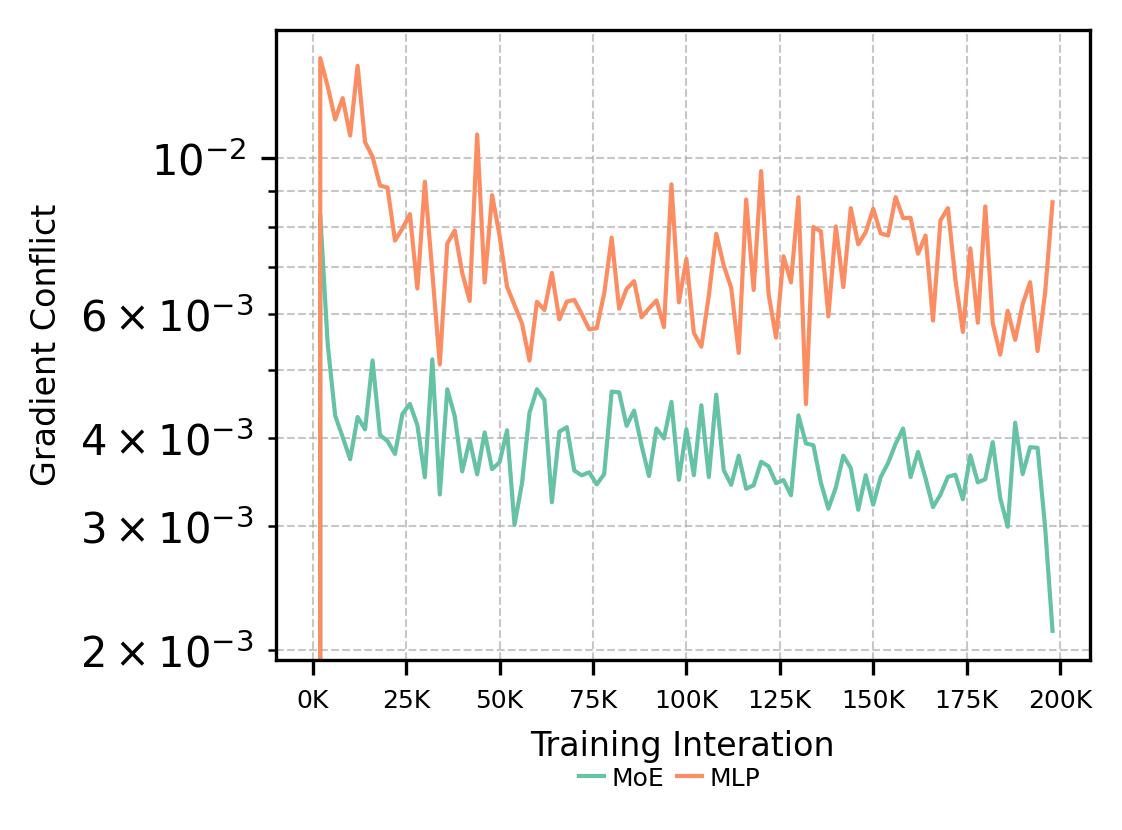}
        \subcaption{Conflict comparison of MoE input gradients.}
    \end{minipage}
\caption{
Comparison of gradient conflicts between MoE and MLP baselines across different components.   
MoE-based Transformer exhibits fewer gradient conflicts in  MoE input and MoE layer.
}
\label{fig:moe_gradient_conflict_exp1}
\end{figure}



\textbf{Observation 1:} As shown in \Cref{fig:moe_gradient_conflict_exp1} (a), the MoE-based Transformer exhibits fewer gradient conflicts in the MoE layer than its MLP counterpart. Furthermore, Figure \Cref{fig:moe_gradient_conflict_exp1} (c) shows that introducing MoE also reduces gradient conflicts at the MoE layer's input, implying that MoE alleviates conflicts in other components to some extent. However, at the output of the shared encoder, as depicted in Figure \Cref{fig:moe_gradient_conflict_exp1} (b), the gradient conflict levels for the MoE and MLP models are largely comparable and do not show a significant difference. This indicates that the advantage of using MoE in the backbone to mitigate gradient conflicts is primarily localized to the MoE layers themselves and their immediate downstream connections; this effect does not substantially propagate back to the upstream shared encoder. 

We posit that since the encoder functions as a general feature extractor for all tasks, its gradient dynamics are likely dominated by the need to learn common representations, making it less sensitive to architectural changes in downstream modules.

\subsubsection{Experiment 2: Investigating MoE Gating Mechanisms}

We are particularly interested in the internal gating mechanism of MoE. Previous studies\cite{chen2208towards} have shown that, under supervised learning, \textit{MoE can implicitly uncover latent cluster structures within input space}. However, when faced with non-stationary data distributions generated through agent-environment interactions, it remains unclear whether MoE experts still differentiate effectively. To answer the question, we analyze the \textit{entropy of expert selection distributions}, which quantifies the uncertainty of the choice of which expert to use for a task, with low entropy indicating high specialization and decisive selection, while high entropy implies uncertainty or an average utilization across experts.
We also record the \textit{Wasserstein distance }~\citep{ruschendorf1985wasserstein} between the expert selection distributions of different tasks, where smaller values indicate a greater proximity to the expert selection of two tasks. We aim to quantify this relationship to find the underlying connections in expert selection among various tasks. The specific experimental steps are as follows:

\textit{(STEP1)} During a forward pass at a given training step, we record the expert choices in the final MoE-based Transformer block.

\textit{(STEP2)} For a specific training step $s_t$, we collect expert selection data over a past window of size $S$ and compute the frequency of each expert's activation to form a probability distribution. For a given window size $S$, task $i$ and a specific task $j$, we denote the task selection distribution as $P^{S}_{i}$ and a specific probability for task $j$ as $P^{S}_{i,j}$. Different window sizes reflect attention to data over different temporal scales in non-stationary learning.

\textit{(STEP3)} Based on this probability distribution, we calculate the \textit{expert selection entropy} for each task $E_{task_i}$:
$$
E_{task_i} = - \sum_{j=1}^{N} P^{S}_{i,j} \log_2(P^{S}_{i,j})
$$
and \textit{Wasserstein Distance between task $i$ and task $i'$ ($W_{i,i'}$)}


We consider two sizes $S$: \textit{immediate} = 100, \textit{short} = 1,000.\footnote{Note that at the very beginning of training, when the window cannot be fully populated, we use the available data within the window, which may lead to larger fluctuations at early steps.} We present the entropy results in \Cref{fig:moe_entropy}. Due to space limitations, we show in \Cref{fig:heatmap_w_distance} the Wasserstein distances between expert selections of different tasks at multiple training stages.


\begin{figure}[htbp]
    \centering
    \begin{minipage}{0.45\textwidth}
        \centering
        \includegraphics[width=\textwidth]{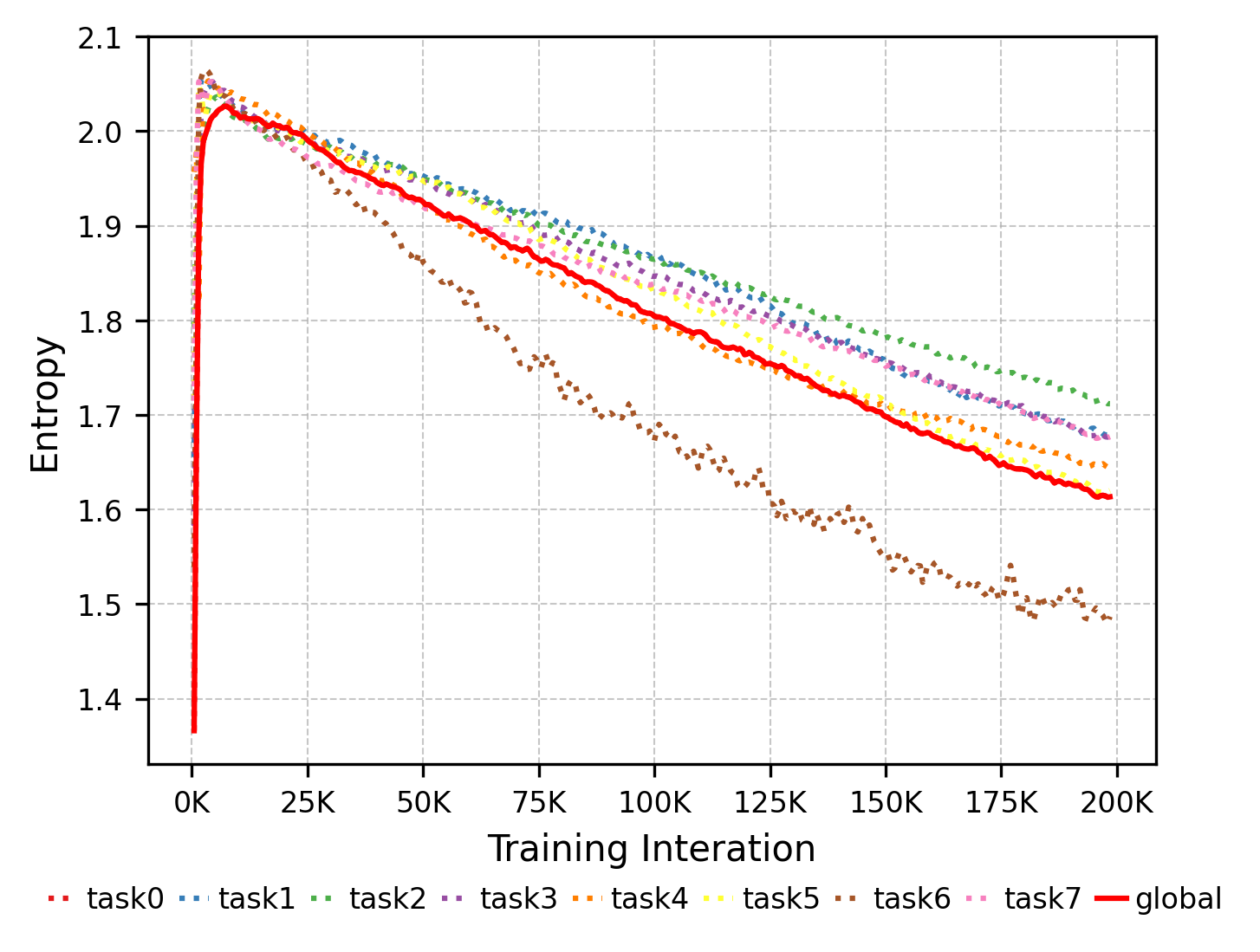}
    \end{minipage}
    \caption{Line plot showing the evolution of expert selection entropy in a multitask learning setting with eight tasks. The dashed lines correspond to the entropy values of individual tasks, and the solid red line represents the aggregated entropy across all tasks. Higher entropy reflects more uniform and uncertain expert utilization, while lower entropy reflects more concentrated and specialized expert selection.}

    \label{fig:moe_entropy}
\end{figure}

\textbf{Observation 2:} MoE differentiation and expert specialization. As shown in \Cref{fig:moe_entropy}, as tasks progress, the entropy of the expert distribution gradually decreases, indicating a reduction in distributional uncertainty. This suggests that expert selection becomes concentrated on a few outcomes, reflecting lower uncertainty. Furthermore, in \Cref{fig:heatmap_w_distance}, we visualize the expert selection distributions of different tasks at multiple training stages, offering additional evidence of the progressive specialization in expert utilization.



\subsubsection{Experiment 3: Analyzing Gradient Conflicts between shared expert and non-shared expert}
To further investigate the relationship between expert selection and gradient dynamics within MoE, we employ a MoE-based Transformer to analyze gradient conflicts across different MoE components. Our experimental results are shown below.

\begin{figure}[htbp]
    \centering
    \includegraphics[width=0.8\textwidth]{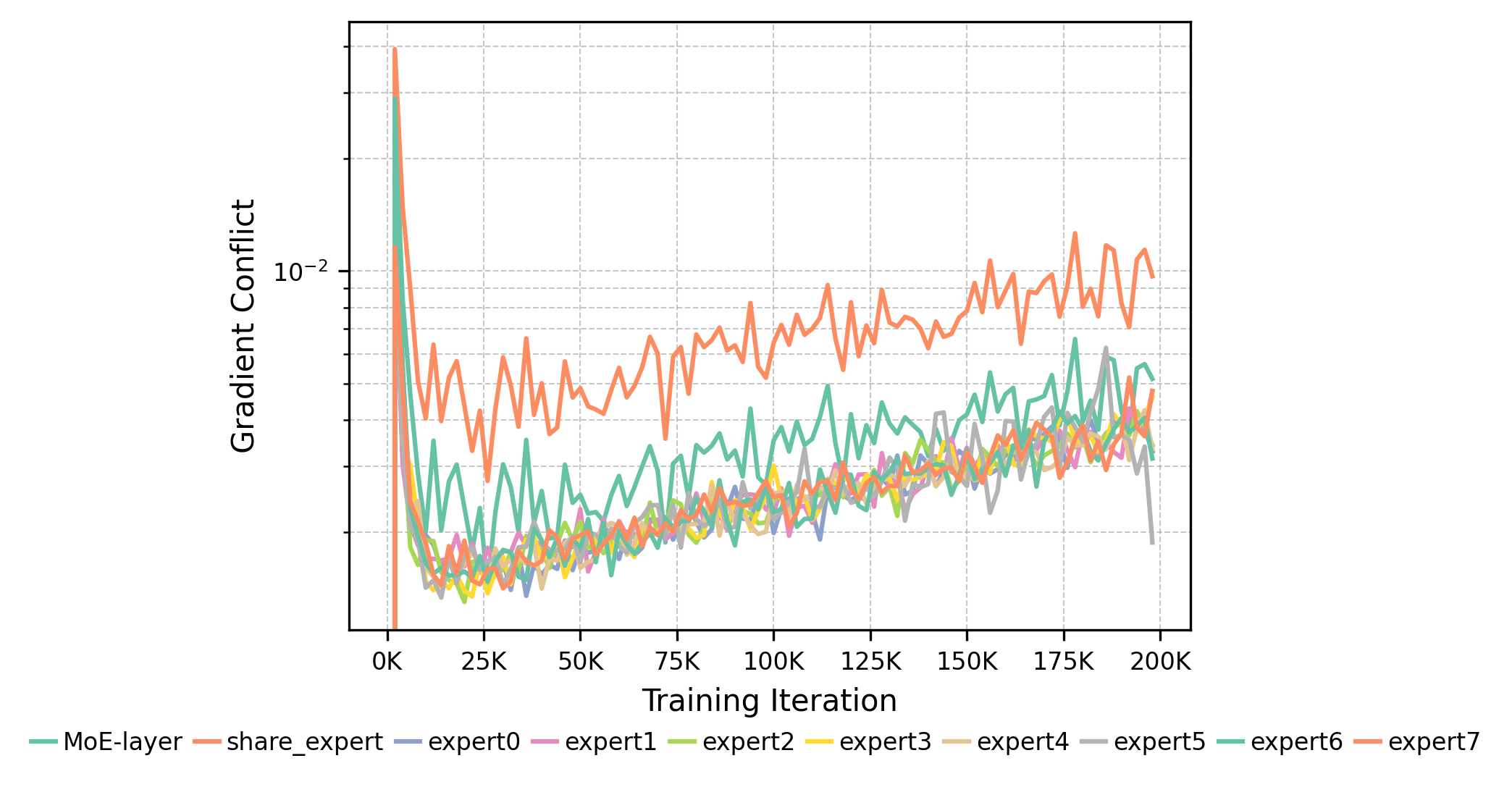}
    \caption{Gradient conflicts across experts in MoE training. The plot shows
    the evolution of gradient conflicts for the shared expert and 8 non-shared experts
    during training interations. The logarithmic y-axis reveals that different experts
    experience varying levels of gradient conflicts, with the shared expert and individual
    experts (expert0--expert7) demonstrating distinct conflict patterns, indicating
    effective expert specialization in the multitask learning framework.}
    \label{fig:moe_gradient_conflict_share_and_nonshared}
\end{figure}

\textbf{Observation 3:} 
As shown in \Cref{fig:moe_gradient_conflict_share_and_nonshared}, 
the shared expert exhibits significantly higher gradient conflicts compared to the task-specific experts. 
Among the non-shared experts, the level of conflicts does not show substantial differences. 
Overall, the shared expert accumulates even more conflicts than the entire MoE layer, indicating that most gradient conflicts within the MoE layer are concentrated on the shared expert. 
The introduction of several task-specific experts effectively reduces the overall gradient conflicts of the MoE layer, which is consistent with our theoretical analysis in \Cref{thm:router_learning_corollary}.

We further investigated the conflicts among different experts within MoE. Results is shown in \Cref{fig:moe_gradient_conflict_share_and_nonshared} \textit{We found that shared experts bear most of the gradient conflicts, whereas individual (non-shared) experts experience almost no conflict. }A plausible explanation is that in a standard gating MoE, different samples are routed to different experts, so each expert primarily receives gradient updates from a specific type of sample. This effectively performs an implicit task partitioning, leading to generally consistent gradient directions within the same expert and minimal conflicts. In contrast, in shared-expert MoE, all samples pass through the shared experts, whose parameters must adapt simultaneously to multiple tasks, diverse semantics, and even different domain distributions. This can result in highly inconsistent gradient directions, significantly increasing conflicts and making it difficult for shared experts to balance diverse task requirements during optimization.This observation naturally raises a simple question: \textit{is the alleviation of gradient conflicts mainly due to the sparse selection by the gating network?} We provide a theoretical justification from the perspective of gating.

\subsection{Theoretical Analysis}
\label{sec:moe_theoretical}



We observe that the magnitude of the gradient conflict in an MoE model is primarily influenced by the routing coefficient, $\rho_i$. In a sparse MoE, this coefficient acts as a binary variable determined by the router network.
For simlicity, let us consider a top-1 sparse MoE, where only the expert with the highest routing score is selected for a given input. An effective router would learn to assign tasks that are prone to high gradient conflict to different experts. In this case, the routing coefficient for any conflicting task pair would be zero, thereby nullifying their contribution to the overall conflict. Conversely, a poorly performing router might allocate conflicting tasks to the same expert, in which case the conflict remains unmitigated.

\begin{theorem}[Upper Bound of Full-layer MoE Gradient Conflict with Sparse/Soft Routing]
Consider a MoE layer with $M$ experts. Let each expert's gradient norm satisfy
$\| g_t^{(m)} \| \le R, \quad m=1,\dots,M,$
and let routing weights for a task be $\lambda_1,\dots,\lambda_M \ge 0$. The full-layer gradient is
$G_t=\operatorname{concat}(\lambda^{t}_1 g_{t}^{(1)},\dots,\lambda^{t}_M g_{t}^{(M)})$,
Assume that for any task pair $(t_1,t_2)$, the gradient conflict on a single expert is bounded by $G$.
Then the full-layer MoE gradient conflict satisfies the explicit upper bound
\[
\mathrm{conflict}(G_{t_1},G_{t_2})
\le
G\cdot
\frac{\displaystyle\sum_{m=1}^M \lambda^{t_1}_m\lambda^{t_2}_m\,\|g_{t_1}^{(m)}\|\,\|g_{t_2}^{(m)}\|}
{\displaystyle\sqrt{\sum_{m=1}^M (\lambda^{t_1}_m)^2\|g_{t_1}^{(m)}\|^2}\;
\sqrt{\sum_{m=1}^M (\lambda^{t_2}_m)^2\|g_{t_2}^{(m)}\|^2}} \le G
\]
\end{theorem}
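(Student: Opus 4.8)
The plan is to prove the two inequalities separately, working from the middle outward. The central object is the quantity
\[
c \;=\; \frac{\sum_{m=1}^M \lambda^{t_1}_m\lambda^{t_2}_m\,\|g_{t_1}^{(m)}\|\,\|g_{t_2}^{(m)}\|}
{\sqrt{\sum_{m=1}^M (\lambda^{t_1}_m)^2\|g_{t_1}^{(m)}\|^2}\;\sqrt{\sum_{m=1}^M (\lambda^{t_2}_m)^2\|g_{t_2}^{(m)}\|^2}},
\]
which, with the notation $u_m = \lambda^{t_1}_m\|g_{t_1}^{(m)}\|$ and $v_m = \lambda^{t_2}_m\|g_{t_2}^{(m)}\|$ from part (1) of the informal theorem, is exactly $\cos(u,v)$ for the nonnegative vectors $u,v\in\mathbb{R}^M_{\ge 0}$. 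So the right-hand inequality $G\cdot c \le G$ reduces to showing $\cos(u,v)\le 1$, which is the Cauchy–Schwarz inequality; since $u,v$ have nonnegative entries we also have $\cos(u,v)\ge 0$, and combined with $G\ge 0$ this makes the bound meaningful (no sign issues). I would state this as the easy half and dispatch it in one line.

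\textbf{The main inequality.} For the left inequality I would first fix the meaning of $\mathrm{conflict}(\cdot,\cdot)$: following the empirical section it is the negative cosine similarity $-\cos(G_{t_1},G_{t_2})$ truncated at zero (or simply $-\cos$), and "the gradient conflict on a single expert is bounded by $G$" means $-\cos(g_{t_1}^{(m)}, g_{t_2}^{(m)}) \le G$, i.e. $\langle g_{t_1}^{(m)}, g_{t_2}^{(m)}\rangle \ge -G\,\|g_{t_1}^{(m)}\|\,\|g_{t_2}^{(m)}\|$ for every expert $m$. Then I would expand the full-layer inner product using the block-concatenated structure:
\[
\langle G_{t_1}, G_{t_2}\rangle \;=\; \sum_{m=1}^M \lambda^{t_1}_m\lambda^{t_2}_m\,\langle g_{t_1}^{(m)}, g_{t_2}^{(m)}\rangle
\;\ge\; -G\sum_{m=1}^M \lambda^{t_1}_m\lambda^{t_2}_m\,\|g_{t_1}^{(m)}\|\,\|g_{t_2}^{(m)}\|,
\]
where the inequality uses the per-expert bound together with $\lambda^{t_1}_m\lambda^{t_2}_m\ge 0$. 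Dividing by $\|G_{t_1}\|\,\|G_{t_2}\| = \sqrt{\sum_m (\lambda^{t_1}_m)^2\|g_{t_1}^{(m)}\|^2}\cdot\sqrt{\sum_m (\lambda^{t_2}_m)^2\|g_{t_2}^{(m)}\|^2}$ (which is well-defined as long as neither full-layer gradient vanishes) and negating flips the inequality to give $-\cos(G_{t_1},G_{t_2}) \le G\cdot c$, which is the claim. The norm bound $\|g_t^{(m)}\|\le R$ is actually not needed for this chain — it is presumably carried along only to guarantee the denominators are finite / to set up later corollaries — so I would remark on that rather than use it.

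\textbf{Anticipated obstacle.} The only real subtlety is definitional: the statement writes $\mathrm{conflict}$ without a formula, and the argument above implicitly uses that $\mathrm{conflict}$ is monotone in $-\cos$ and that the per-expert hypothesis is phrased in exactly the $-\cos \le G$ form. If instead $\mathrm{conflict}$ were defined as $\max(0,-\cos)$, one needs the extra step that $\max(0,-\cos(G_{t_1},G_{t_2})) \le \max(0, G\cdot c) = G\cdot c$ (using $G\cdot c\ge 0$), which is immediate. I would therefore open the proof by pinning down this definition explicitly, then the rest is the short Cauchy–Schwarz-plus-block-decomposition computation sketched above. A secondary edge case worth one sentence: if some $g_{t}^{(m)} = 0$ the per-expert cosine is undefined, but then that term contributes $0$ to both sides and can be dropped; likewise if a full-layer gradient is zero the conflict is conventionally $0$ and the bound holds trivially.
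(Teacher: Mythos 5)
Your proof is correct, and for the second inequality it coincides exactly with the paper's argument: define $u_m=\lambda^{t_1}_m\|g_{t_1}^{(m)}\|$, $v_m=\lambda^{t_2}_m\|g_{t_2}^{(m)}\|$, note the fraction is $\cos(u,v)$, and invoke Cauchy--Schwarz. Where you go further is the first inequality: the paper's proof simply opens with ``from the block-level conflict bound we have \dots'' and never derives that bound, whereas you actually establish it --- you pin down $\mathrm{conflict}$ as (truncated) negative cosine, translate the per-expert hypothesis into $\langle g_{t_1}^{(m)},g_{t_2}^{(m)}\rangle \ge -G\,\|g_{t_1}^{(m)}\|\,\|g_{t_2}^{(m)}\|$, expand $\langle G_{t_1},G_{t_2}\rangle$ blockwise over the concatenation, use $\lambda^{t_1}_m\lambda^{t_2}_m\ge 0$, and divide by the product of full-layer norms. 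That is precisely the missing lemma the paper presupposes, so your write-up is more self-contained than the published proof. Your side remarks are also sound: the radius bound $R$ is indeed never used (the paper's proof does not use it either), the nonnegativity of $u,v$ together with $G\ge 0$ is what keeps the sign bookkeeping honest in the final step $G\cdot\cos(u,v)\le G$ (an assumption the paper also makes implicitly), and the zero-gradient edge cases are handled correctly by dropping vanishing blocks and treating a vanishing full-layer gradient as zero conflict. No gaps.
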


\begin{proof}
From the block-level conflict bound we have
\[
\mathrm{conflict}(G_{t_1},G_{t_2})
\le
G\cdot
\frac{\sum_{m=1}^M \lambda^{t_1}_m\lambda^{t_2}_m\,\|g_{t_1}^{(m)}\|\,\|g_{t_2}^{(m)}\|}
{\|G_{t_1}\|\,\|G_{t_2}\|},
\]
with
\[
\|G_{t_1}\|=\sqrt{\sum_{m=1}^M (\lambda^{t_1}_m)^2\|g_{t_1}^{(m)}\|^2},\qquad
\|G_{t_2}\|=\sqrt{\sum_{m=1}^M (\lambda^{t_2}_m)^2\|g_{t_2}^{(m)}\|^2}.
\]

Define vectors $u,v\in\mathbb{R}^M$ as
\[
u_m=\lambda^{t_1}_m\|g_{t_1}^{(m)}\|,\qquad v_m=\lambda^{t_2}_m\|g_{t_2}^{(m)}\|.
\]
Then the numerator equals $\langle u,v\rangle$, and the denominator equals $\|u\|\,\|v\|$.  
By the Cauchy--Schwarz inequality,
\[
\langle u,v\rangle \le \|u\|\,\|v\|,
\]
which implies that the fraction is at most $1$. Substituting this bound back, we obtain
\[
\mathrm{conflict}(G_{t_1},G_{t_2})\le G.
\]

Equality holds if and only if $u$ and $v$ are collinear (i.e., there exists $c>0$ such that $u=c\,v$), and each block-level conflict achieves its upper bound simultaneously.
\end{proof}

\begin{remark}[Effect of Routing Strategies on Full-layer Gradient Conflict]
The upper bound of full-layer MoE gradient conflict depends strongly on the routing strategy. We summarize two representative cases:

\textbf{a. Dense Routing (Soft Gating, $\sum_{m=1}^M \lambda_m = 1$)}  

\begin{itemize}
    \item All experts contribute to the full-layer gradient.  
    \item The fraction in Theorem 1 can be interpreted as the cosine similarity between the weighted vectors 
  $u = \lambda^{t_1} \odot \|g_{t_1}\|$ and $v = \lambda^{t_2} \odot \|g_{t_2}\|$:
  \[
  \frac{\sum_{m=1}^M \lambda^{t_1}_m \lambda^{t_2}_m \|g_{t_1}^{(m)}\|\|g_{t_2}^{(m)}\|}{\|G_{t_1}\|\,\|G_{t_2}\|} = \cos(u,v) \le 1.
  \] 
    \item If the expert norms are roughly equal ($\|g_{t_1}^{(m)}\|\approx \|g_{t_2}^{(m)}\|$), the conflict upper bound is largely determined by $\cos(\lambda^{t_1},\lambda^{t_2})$.  
    \item Uniform weights ($\lambda_m = 1/M$) do not automatically reduce conflict; alignment of weighted gradients matters more.  
    \item Overlap or alignment of routing vectors increases conflict, while orthogonal or disjoint routing vectors reduce it. 
\end{itemize}

  



\textbf{b. Sparse Routing (Top-1 gating, one $\lambda_m = 1$, others $0$)}  

\begin{itemize}
    \item \textit{Non-overlapping case (different tasks select different experts)}:Each task's full-layer gradient contains only its chosen expert. Inner products between task gradients are near zero because tasks lie in different expert subspaces. Full-layer conflict is strictly less than $G$, possibly near 0.  

    \item \textit{Collapsed case (all tasks select the same expert)}: Full-layer gradient reduces to the chosen expert’s gradient.  Conflict equals the single-expert bound $G$, since all gradients reside in the same subspace.
\end{itemize}

\begin{AIbox}{Takeaways}
\begin{itemize}[leftmargin=0.7em]
    \setlength\itemsep{0em}
    \item The gradient conflict of MoE with multiple experts is lower than that within a single MLP-based expert. 
    \item The mitigation of gradient conflict in MoE is closely related to the routers across tasks—more orthogonal routing coefficients tend to yield fewer gradient conflicts. 
\end{itemize}
\end{AIbox}

\end{remark}

Therefore, the key to mitigating gradient conflict lies in the router's ability to foster specialization among experts, ensuring that tasks with potentially conflicting objectives are handled by distinct, differentiated experts. This naturally raises a critical question: Can a Mixture-of-Experts architecture indeed achieve this, and if so, what are the underlying mechanisms that enable this capability? In the following, we answer this question affirmatively. By building upon existing foundational theories of Mixture-of-Experts\cite{chen2208towards}, we present a formal derivation that explains precisely when and how this is achieved.

\subsubsection{Analysis}

Previews work\cite{chen2208towards} theoretically demonstrated that, \textit{in a single-task supervised learning setting, when the input distribution exhibits distinguishable cluster structures, the sparse gating Mixture-of-Experts (MoE) router can automatically learn the cluster center features and route samples to the most suitable expert}, thereby significantly improving performance. We further hypothesize that, in multitask learning, the input spaces of different tasks naturally correspond to distinct clusters and the signal for the cluster center can be task ID or the latent pattern on the state space. Under this assumption, the conclusions from the original work can be directly extended to the multitask setting.

Next, we adapt \cite{chen2208towards} theoretical analysis to the reinforcement learning context using a tabular example. Specifically, we treat each task as an independent Markov Decision Process (MDP) and construct a multitask MDP set with a shared state space but task-specific transitions. In this setting, we derive the gradient conflicts that arise in the MoE layer.

We also discuss the case where no clustering signal exists between tasks and point out that the introduction of task embeddings can significantly enhance the clustering structure between tasks, thereby promoting the isolation of expert selection across different tasks in MoE.

\subsubsection{Problem setting: multitask MDP with task-specific latent clusters}
\label{sec:task_specific_clusters}

We consider a Markov decision process (MDP) defined by the tuple \((\mathcal{S},\mathcal{A},P,R,\gamma)\), where \(\mathcal{S}\) is the state space, \(\mathcal{A}\) the action space, \(P(s' \!\mid\! s,a)\) the transition kernel, \(R(s,a)\) the reward function, and \(\gamma\in[0,1)\) the discount factor. There are \(K\) tasks \(T_1,\dots,T_K\). Each task \(T_t\) corresponds to a disjoint state subspace \(\mathcal{S}^{(t)}\), so that
\[
\mathcal{S}=\bigcup_{t=1}^K \mathcal{S}^{(t)},\qquad
\mathcal{S}^{(t)}\cap\mathcal{S}^{(t')}=\varnothing\quad\text{for }t\neq t'.
\]


\paragraph{Patch-based state representation.}
Each state \(s\in\mathcal{S}^{(t)}_k\) (i.e., belonging to task \(t\) and cluster \(k\)) is represented as an unordered collection of \(P\) patches in \(\mathbb{R}^d\). The patches are randomly permuted before being presented to the model. Every state contains the following patch types:
\begin{enumerate}
  \item \textbf{Action-signal patch:} exactly one patch equals
    $\alpha\, v^{(t)}_{k},$
    where \(v^{(t)}_k\in\mathbb{R}^d\) encodes the optimal-action feature.
  \item \textbf{Cluster-center patch:} exactly one patch equals
    $\beta\, c^{(t)}_{k},$
    indicating the (task-specific) cluster identity; a router (e.g., in a Mixture-of-Experts model) is expected to detect this signal to decide routing.
  \item \textbf{Feature-noise (confounder) patch:} exactly one patch equals
    $\epsilon\gamma\, v^{(t)}_{k'},$
    where \(k'\) is typically a different cluster index in the same task \(t\), modeling an intra-task confounding feature.
  \item \textbf{Random-noise patches:} the remaining \(P-3\) patches are i.i.d.\ draws from an isotropic Gaussian,
    $\mathcal{N}\!\Big(0,\tfrac{\sigma_p^2}{d}I_d\Big).$
\end{enumerate}

Thus the encoder receives an unordered set \(\{x_1,\dots,x_P\}\subset\mathbb{R}^d\) containing exactly one action signal, one cluster-center signal (task-specific), one intra-task confounder, and Gaussian noise patches.

\textbf{MoE Model with Expert Specialization}

We now analyze how a Mixture-of-Experts (MoE) model can leverage expert specialization to address the aforementioned issues.

\textbf{Experts.} We consider $M$ linear experts. The value function of expert $m$ is defined as
\begin{equation}
    V_m(s; w_m) = \langle w_m, \phi(s) \rangle.
\end{equation}

\textbf{Router.} The gating network assigns a score to each expert $m$:
\begin{equation}
    h_m(s; \theta_m) = \langle \theta_m, \phi(s) \rangle.
\end{equation}

\textbf{MoE Output.} The overall value function is given by
\begin{equation}
    V_{\text{MoE}}(s; W, \Theta) = \sum_{m=1}^M \pi_m(s) V_m(s; w_m),
\end{equation}
where the softmax gating weights are
\begin{equation}
    \pi_m(s) = \frac{\exp(h_m(s))}{\sum_{j=1}^M \exp(h_j(s))}.
\end{equation}

\textbf{Loss Function.} We minimize the mean squared Bellman error (MSBE) with a stop-gradient:
\begin{equation}
    L(W, \Theta) = \frac{1}{2} \sum_{s \in \mathcal{S}} \Big( V_{\mathrm{MoE}}(s) - \operatorname{stop\_grad}\big((\mathcal{T}^* V_{\mathrm{MoE}})(s)\big) \Big)^2 
    = \frac{1}{2} \sum_{s \in \mathcal{S}} \delta(s)^2,
\end{equation}
where $\delta(s)$ denotes the temporal-difference (TD) error.

\subsubsection{Learning Dynamics Analysis}


We follow an analysis path similar to prior work\cite{chen2208towards}, dividing the learning process into two stages: an early \emph{expert exploration} stage and a later \emph{router learning} stage.

\textbf{Initialization.} 

The expert weights $w_m^{(0)}$ are randomly initialized from a small zero-mean Gaussian distribution:
\begin{equation}
    w_m^{(0)} \sim \mathcal{N}(0, \sigma_w^2 I),
\end{equation}
while the router weights $\theta_m^{(0)}$ are initialized as zero vectors:
\begin{equation}
    \theta_m^{(0)} = \mathbf{0}.
\end{equation}

\textbf{Expert Exploration Stage.} 

According to [\cite{chen2208towards}, Lemma E.3], the zero initialization of the router implies that at the beginning of training ($t=0$), 

\[
\max_{m \in [M]} \left| P(m_{i,t} = m) - \frac{1}{M} \right| = \tilde{O}(\sigma_0^{1.5})
\quad \text{for all } i \in [n],\, m \in [M].
\]

where $ P(m_{i,t} = m) \text{ is the probability that input sample } x_i \text{ is routed to expert } m \text{ at iteration } t $ and $\tilde{O}(\sigma_0^{1.5})$ is a negligible value. The equation means all experts have approximately uniform gating weights at the expert exploration stage. That is, the routing selection can be approximated as:

\begin{equation}
    \pi_m(s) \approx \frac{1}{M}.
\end{equation}
This ensures that each expert has an equal opportunity to learn from the data.

During the early stage of training, due to random initialization, each expert $m$ has a weight vector $w_m^{(0)}$ that exhibits a slightly larger inner product with some value basis vector $v_k$. We define the initial preference cluster of expert $m$ as
\begin{equation}
    k_m^* = \arg\max_{k \in [K]} \left| \langle w_m^{(0)}, v_k \rangle \right|.
\end{equation}
Under gradient descent updates, the weight vector $w_m$ of expert $m$ will predominantly grow along the direction of its preferred basis vector $v_{k_m^*}$.\textbf{[\cite{chen2208towards}, Lemma E.5]} In other words, during training, an expert progressively specializes in modeling a specific cluster $c_k$. Building upon this, we derive an upper bound on the gradient conflicts during the Expert Exploration Stage.

\begin{theorem}[Upper Bound on Single-Expert and Full-Layer MoE Gradient Conflict with Uniform Sparse Routing]
Consider a Mixture-of-Experts (MoE) layer with $M$ experts and $K$ independent tasks. Each task independently selects one expert uniformly at random: $P = \frac{1}{M}.$

Then:

1. For any single expert $m$, the expected gradient conflict is upper bounded by
\[
\mathbb{E}[\text{conflict}_{t_1,t_2}^{(m)}] \le G \, q_{\text{single}}, 
\quad 
q_{\text{single}} = 1 - \Big(1-\frac{1}{M}\Big)^K - K \frac{1}{M} \Big(1-\frac{1}{M}\Big)^{K-1}.
\]

2. For the full-layer MoE with sparse router, the expected gradient conflict is upper bounded by
\[
\mathbb{E}[\text{conflict}_{t_1,t_2}^{\text{MoE}}] \le G \, q_{\text{layer}}, 
\quad
q_{\text{layer}} =
\begin{cases} 
1 - \dfrac{M!}{(M-K)! \, M^K}, & K \le M \\[1em]
1, & K > M
\end{cases}.
\]
\end{theorem}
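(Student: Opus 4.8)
The plan is to reduce both inequalities to elementary balls-in-bins probabilities. Two structural facts do the work. First, as noted earlier, if task $t$ is not routed to expert $m$ then $g_t^{(m)}=0$, so such a task contributes nothing to expert $m$ and, in the concatenated layer gradient, its support is disjoint from that of tasks routed elsewhere. Second, the hypothesis of the theorem (consistent with the preceding block-level bound) says that any two tasks routed to the \emph{same} expert have gradient conflict at most $G$ on that expert. I would model the top-1 router as drawing, for each task $t$, an expert $\pi(t)$ uniformly from $\{1,\dots,M\}$, independently across the $K$ tasks, and set $N_m=|\{t:\pi(t)=m\}|\sim\mathrm{Bin}(K,1/M)$.

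For the single-expert bound, read $\mathrm{conflict}^{(m)}$ as the largest negative cosine similarity over pairs of tasks active on expert $m$. If $N_m\le 1$ there is no active pair, so this quantity is $0$; if $N_m\ge 2$ it is at most $G$ by hypothesis. Thus the pointwise inequality $\mathrm{conflict}^{(m)}\le G\cdot\mathbf{1}[N_m\ge 2]$ holds (both sides vanish when $N_m\le 1$), and taking expectations gives $\mathbb{E}[\mathrm{conflict}^{(m)}]\le G\cdot\Pr[N_m\ge 2]$. Since $\Pr[N_m=0]=(1-1/M)^K$ and $\Pr[N_m=1]=K\,(1/M)(1-1/M)^{K-1}$, their complement is exactly $q_{\text{single}}$.

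For the full-layer bound, use the block structure: a task routed to expert $m$ has its layer gradient supported only on the $m$-th block, so two tasks routed to distinct experts have orthogonal layer gradients and hence zero pairwise conflict. Consequently the full-layer conflict — again the worst pair — is $0$ on the event $\mathcal{D}$ that the map $\pi$ is injective, and is at most $G$ on $\mathcal{D}^c$; that is, $\mathrm{conflict}^{\mathrm{MoE}}\le G\cdot\mathbf{1}[\mathcal{D}^c]$ pointwise. A direct count gives $\Pr[\mathcal{D}]=\prod_{j=0}^{K-1}\frac{M-j}{M}=\frac{M!}{(M-K)!\,M^K}$ when $K\le M$, and $\Pr[\mathcal{D}]=0$ when $K>M$ by the pigeonhole principle; in both cases $\Pr[\mathcal{D}^c]=q_{\text{layer}}$, and taking expectations finishes the proof.

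The only delicate point — the one I would flag as the main obstacle — is making the two reductions airtight: one must commit to the reading of ``conflict'' as the worst-case pairwise negative cosine similarity, so that it collapses to $0$ whenever there is no active colliding pair, and one must cleanly separate the randomness of the routing from the deterministic per-expert bound $G$. Once the pointwise inequalities $\mathrm{conflict}^{(m)}\le G\cdot\mathbf{1}[N_m\ge 2]$ and $\mathrm{conflict}^{\mathrm{MoE}}\le G\cdot\mathbf{1}[\mathcal{D}^c]$ are in place, all probabilistic content is confined to the two closed-form occupancy probabilities above, and no further estimation is needed.
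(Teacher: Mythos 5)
Your proposal is correct and follows essentially the same route as the paper's proof: bound the conflict pointwise by $G$ times the indicator of a routing collision, then compute the collision probabilities via the binomial law $\mathrm{Bin}(K,1/M)$ for a single expert and the injective-assignment count $M!/((M-K)!\,M^K)$ (or pigeonhole when $K>M$) for the full layer. Your treatment is, if anything, slightly more careful than the paper's in making the pointwise inequalities $\mathrm{conflict}^{(m)}\le G\cdot\mathbf{1}[N_m\ge 2]$ and $\mathrm{conflict}^{\mathrm{MoE}}\le G\cdot\mathbf{1}[\mathcal{D}^c]$ explicit before taking expectations.
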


\begin{proof}

Consider an arbitrary expert $m$ and the $K$ tasks. Each task independently selects expert $m$ with probability $1/M$. Let $X$ denote the number of tasks that select expert $m$. Then
\[
X \sim \mathrm{Bin}(K, 1/M).
\]

A single-expert gradient conflict occurs if and only if $X \ge 2$, and by assumption, the maximum conflict for a single expert is $G$. Therefore, the probability that expert $m$ experiences a conflict is
\[
\Pr(X \ge 2) = 1 - \Big(1-\frac{1}{M}\Big)^K - K \frac{1}{M} \Big(1-\frac{1}{M}\Big)^{K-1}.
\]

By linearity of expectation, the expected gradient conflict for a single expert is upper bounded by
\[
\mathbb{E}[\text{conflict}_{t_1,t_2}^{(m)}] \le G \, q_{\text{single}}.
\]


The full-layer gradient $G_t$ is constructed by concatenating all expert gradients. A full-layer conflict occurs if at least one expert has a conflict (i.e., at least two tasks select that expert).  

- For $K \le M$, the probability that all $K$ tasks select distinct experts (no collision) is given combinatorially by
\[
\frac{M!}{(M-K)! \, M^K},
\]
so the probability of at least one collision (full-layer conflict) is
\[
q_{\text{layer}} = 1 - \frac{M!}{(M-K)! \, M^K}.
\]

- For $K > M$, by the pigeonhole principle, at least one expert must be selected by two or more tasks, so
\[
q_{\text{layer}} = 1.
\]


Each conflict in an expert can contribute at most $G$ to the full-layer gradient conflict. By worst-case analysis and linearity of expectation, the expected full-layer MoE gradient conflict is therefore upper bounded by
\[
\mathbb{E}[\text{conflict}_{t_1,t_2}^{\text{MoE}}] \le G \, q_{\text{layer}}.
\]


- The single-expert bound $G \, q_{\text{single}}$ gives a local (per-expert) expected conflict.  
- The full-layer bound $G \, q_{\text{layer}}$ accounts for collisions across all experts in the concatenated MoE layer.  
- Together, they describe the expected gradient conflict behavior of a MoE layer with $K$ tasks and $M$ experts under uniform sparse routing.
\end{proof}

\begin{AIbox}{Takeaways: Gradient Conflict in Expert Exploration}
\begin{itemize}[leftmargin=0.7em]
    \setlength\itemsep{0em}
    \item \textit{In the expert exploration stage, the full MoE layer shows a lower upper bound on gradient conflict, which further decreases as the number of experts $M$ grows relative to tasks $K$. } 
\end{itemize}
\end{AIbox}

\textbf{Router Learning Stage}

During the router-learning stage, the sparse MoE layer exhibits two decisive properties that justify its use in a multitask setting. 

First, the gating network rapidly identifies the latent task structure: after only 
\[
T_{2} = \left\lfloor \eta^{-1} M^{-2} \right\rfloor
\]
iterations, its weight vectors \(\boldsymbol\theta_m\) become strongly aligned with the cluster-center signal \(\mathbf c_k\) of every task \(k\) , while simultaneously suppressing spurious correlations with label signals and noise.\textbf{[\cite{chen2208towards}-lemma E.14]} Consequently, any input drawn from task \(k\) is routed to the corresponding subset of experts \(\mathcal M_k\) with probability \(1 - o(\frac{1}{d})\), even though the model is never given explicit task labels. \textbf{[\cite{chen2208towards}-Lemma E.18]}

Second, each expert in \(\mathcal M_k\) remains tightly specialised to its assigned task, but only chance-level performance on all other tasks. \textbf{[\cite{chen2208towards}-Lemma E.12,Lemma 5.2]}.

Taken together, these two properties guarantee automatic task separation and expert specialisation without external supervision, making the MoE layer an effective backbone for multitask problems in which tasks form separable clusters in the input space.

\begin{theorem}[Expected Gradient Conflict on Task-specific Expert Sets]
Consider a Mixture-of-Experts (MoE) model with $K$ tasks and $M$ experts. Let tasks $i$ and $j$ have expert sets $S_i$ and $S_j$ with sizes $|S_i| = a$, $|S_j| = b$, and intersection $U = |S_i \cap S_j|$. Assume that each task selects an expert from its own expert set with probability $1 - O(1/d)$ uniformly, and from the complement set with probability $O(1/d)$ uniformly. Let $G$ denote the maximum gradient conflict if two tasks select the same expert. Then the expected gradient conflict between tasks $i$ and $j$ satisfies
\begin{equation}
\mathbb{E}[\text{conflict}_{i,j}] \le G \cdot \Big( P^{(1)} + P^{(2)} + P^{(3)} \Big),
\end{equation}
where
\begin{align}
P^{(1)} &= U \cdot \left( \frac{1 - O(1/d)}{a} + \frac{O(1/d)}{M-a} \right) 
                 \left( \frac{1 - O(1/d)}{b} + \frac{O(1/d)}{M-b} \right), \\
P^{(2)} &= (a-U + b-U) \cdot \left( \frac{1 - O(1/d)}{a} + \frac{O(1/d)}{M-a} \right) 
                                   \frac{O(1/d)}{M-b}, \\
P^{(3)} &= (M - (a+b-U)) \cdot \frac{O(1/d)}{M-a} \cdot \frac{O(1/d)}{M-b}.
\end{align}
In particular, the dominant contribution comes from the shared experts (first class), leading to
\begin{equation}
\mathbb{E}[\text{conflict}_{i,j}] \approx G \cdot \frac{U}{ab} + O(G/d).
\end{equation}

Moreover, for the full-layer MoE gradient formed by concatenating all expert parameters, the expected layer-level gradient conflict satisfies the same upper bound:
\begin{equation}
\mathbb{E}[\text{conflict}]_{\text{layer}} \le G \cdot \Big( P^{(1)} + P^{(2)} + P^{(3)} \Big) \approx G \cdot \frac{U}{ab} + O(G/d).
\end{equation}
\end{theorem}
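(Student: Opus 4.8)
The plan is to reduce the statement to a collision-probability estimate for the two tasks' top-1 routing choices, and then feed this into the sparse-routing form of the full-layer bound proved above (Theorem~1). First I would introduce the shorthand $\epsilon = O(1/d)$ for the router leakage and let $q_i(m)$ be the probability that task $i$ is routed to expert $m$; the assumed routing model gives $q_i(m) = (1-\epsilon)/a$ for $m \in S_i$ and $q_i(m) = \epsilon/(M-a)$ for $m \notin S_i$, and symmetrically for $q_j$ with $b$ in place of $a$. Because an expert that a task does not select contributes nothing to that task's loss, its gradient block is zero, so the concatenated full-layer gradient of each task is supported on a single expert block. Hence the cosine-similarity factor appearing in Theorem~1 equals $1$ when both tasks pick the same expert and $0$ otherwise, and taking expectations over the (independent) routing of the two tasks gives
\[
\mathbb{E}\!\left[\text{conflict}_{i,j}\right] \le G\,\Pr[\text{tasks }i,j\text{ pick the same expert}] = G\sum_{m=1}^{M} q_i(m)\,q_j(m).
\]

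Next I would evaluate this sum by splitting the $M$ experts into the four classes $S_i\cap S_j$, $S_i\setminus S_j$, $S_j\setminus S_i$, and $(S_i\cup S_j)^{c}$, with cardinalities $U$, $a-U$, $b-U$, and $M-(a+b-U)$ respectively. On each class I bound $q_i(m)$ and $q_j(m)$ by the uniform upper bounds $q_i(m) \le (1-\epsilon)/a + \epsilon/(M-a)$ and $q_j(m) \le (1-\epsilon)/b + \epsilon/(M-b)$, which hold for every $m$ since exactly one of the two summands is the true value and the other is nonnegative. Summing over each class and merging the two one-sided classes into a single term produces exactly $\sum_m q_i(m)q_j(m) \le P^{(1)}+P^{(2)}+P^{(3)}$, hence $\mathbb{E}[\text{conflict}_{i,j}] \le G(P^{(1)}+P^{(2)}+P^{(3)})$.

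For the asymptotic statement I would expand $P^{(1)} = U\big(\tfrac{1}{ab} + O(\epsilon)\big) = \tfrac{U}{ab} + O(1/d)$ and note that $P^{(2)}$ and $P^{(3)}$ each carry at least one factor $\epsilon$ multiplying an $O(1)$ count-weighted sum of per-expert probabilities, so both are $O(1/d)$; this yields $\mathbb{E}[\text{conflict}_{i,j}] \le G\tfrac{U}{ab} + O(G/d)$. The full-layer claim then needs no new work: in the top-1 regime the layer-level conflict \emph{is} the single-pair quantity just analyzed --- distinct selected experts produce block-orthogonal concatenated gradients whose inner product vanishes, so the only surviving contribution is from a shared selected expert --- and the identical chain of inequalities delivers $\mathbb{E}[\text{conflict}]_{\text{layer}} \le G(P^{(1)}+P^{(2)}+P^{(3)}) \approx G\tfrac{U}{ab} + O(G/d)$.

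The main obstacle is the bookkeeping of the $O(1/d)$ terms, in particular the step that merges $S_i\setminus S_j$ and $S_j\setminus S_i$ into the single expression $P^{(2)}$: the true product $q_i(m)q_j(m)$ on $S_j\setminus S_i$ is $\tfrac{\epsilon}{M-a}\cdot\tfrac{1-\epsilon}{b}$, which is dominated by the summand of $P^{(2)}$ only up to a $d$-independent constant ratio $\tfrac{a(M-b)}{b(M-a)}$; the clean route is to carry $\epsilon$ symbolically throughout and absorb all such benchmark-dependent constants into the final $O(\cdot)$ only at the last step. A secondary point to state explicitly is that the two tasks' routing decisions are independent (the states of different tasks lie in disjoint subspaces and are sampled independently), and that the $1-O(1/d)$ routing-accuracy hypothesis is itself justified by the router-specialization results of Chen et al. invoked earlier (their Lemmas E.14, E.18, and E.12).
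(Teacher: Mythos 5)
Your proposal is correct and follows essentially the same route as the paper's proof: the same partition of experts into shared, task-specific, and unrelated classes, the same per-class collision probabilities under the $1-O(1/d)$ routing model multiplied by class cardinalities, the same leading-order extraction $G\cdot U/(ab)+O(G/d)$, and the same observation that block-orthogonality of top-1 gradients makes the layer-level bound identical. Your explicit reduction of the conflict to a collision probability via the cosine factor, and your remark that the $S_j\setminus S_i$ summand matches $P^{(2)}$ only up to a $d$-independent constant absorbed in the $O(\cdot)$, are just slightly more careful renderings of steps the paper treats implicitly.
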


\begin{proof}
Consider two tasks $i$ and $j$ with expert sets $S_i$ and $S_j$. Partition the set of all experts into three classes: (1) shared experts $S_i \cap S_j$, (2) task-specific experts $S_i \setminus S_j$ and $S_j \setminus S_i$, and (3) unrelated experts outside $S_i \cup S_j$. For each class, we compute the probability that both tasks select the same expert.

For a shared expert $e \in S_i \cap S_j$, task $i$ selects it either from its own set with probability $(1-O(1/d))/a$ or from outside with probability $O(1/d)/(M-a)$. Similarly for task $j$. Therefore, the probability that both tasks select $e$ is
\[
p^{(1)} = \left( \frac{1 - O(1/d)}{a} + \frac{O(1/d)}{M-a} \right) 
           \left( \frac{1 - O(1/d)}{b} + \frac{O(1/d)}{M-b} \right),
\]
and there are $U$ such experts, giving total contribution $P^{(1)} = U \cdot p^{(1)}$.

For task-specific experts $e \in (S_i \setminus S_j) \cup (S_j \setminus S_i)$, one task can select it from its own set while the other selects from outside. This yields
\[
p^{(2)} = \left( \frac{1 - O(1/d)}{a} + \frac{O(1/d)}{M-a} \right) \frac{O(1/d)}{M-b},
\]
with a total of $(a-U + b-U)$ experts, giving $P^{(2)} = (a-U + b-U) \cdot p^{(2)}$.

For unrelated experts $e \notin S_i \cup S_j$, both tasks must select from outside, giving
\[
p^{(3)} = \frac{O(1/d)}{M-a} \cdot \frac{O(1/d)}{M-b},
\]
and there are $M-(a+b-U)$ such experts, yielding $P^{(3)} = (M-(a+b-U)) \cdot p^{(3)}$.

Summing over all classes and multiplying by $G$, we obtain the expected single-expert gradient conflict:
\[
\mathbb{E}[\text{conflict}_{i,j}] \le G \cdot \big(P^{(1)} + P^{(2)} + P^{(3)}\big).
\]

Since the layer-level gradient is formed by concatenating all expert gradients, the layer-level conflict is the sum over all expert contributions, hence
\[
\mathbb{E}[\text{conflict}]_{\text{layer}} \le G \cdot \big(P^{(1)} + P^{(2)} + P^{(3)}\big).
\]

Finally, the dominant term is from the shared experts, giving the simplified approximation
\[
\mathbb{E}[\text{conflict}]_{\text{layer}} \approx G \cdot \frac{U}{ab} + O(G/d),
\]
where $O(G/d)$ absorbs all higher-order small contributions from selecting non-preferred experts.
\end{proof}

\begin{AIbox}{Takeaways: Gradient Conflict in Router Learning Stage}
\begin{itemize}[leftmargin=0.7em]
    \setlength\itemsep{0em}
    \item Gradient conflicts mainly arise from shared experts between tasks, while task-specific experts contribute little.  
    \item The full-layer MoE reduces overall gradient conflict compared to single experts, and conflicts decrease as tasks are routed to more disjoint expert subsets.
\end{itemize}
\end{AIbox}

\begin{corollary}[Expected Full-layer MoE Gradient Conflict for $K$ Tasks]
\label{thm:router_learning_corollary}
Consider a MoE layer with $M$ experts and $K$ tasks. Each task $t$ has its own expert set $S_t$ with size $|S_t| = a_t$. Each expert's gradient norm satisfies $\|g_t^{(m)}\| \le R$, and if two tasks select the same expert, the maximum gradient conflict is $G$. Each task chooses an expert according to the following rule: with probability $1 - O(1/d)$ it selects uniformly from its own set, and with probability $O(1/d)$ it selects uniformly from the remaining experts. 

For any subset of tasks $\mathcal{T} \subseteq \{1,2,\dots,K\}$, let the number of shared experts be
\[
U_{\mathcal{T}} = \left|\bigcap_{t \in \mathcal{T}} S_t\right|.
\]

Then the expected full-layer gradient conflict is upper bounded by
\begin{equation}
\mathbb{E}[\text{conflict}]_{\text{layer}} \le G \sum_{\mathcal{T} \subseteq [K], |\mathcal{T}|\ge 2} U_{\mathcal{T}} \prod_{t \in \mathcal{T}} \left( \frac{1 - O(1/d)}{a_t} + \frac{O(1/d)}{M - a_t} \right),
\end{equation}
and its leading term can be approximated as
\begin{equation}
\mathbb{E}[\text{conflict}]_{\text{layer}} \approx G \sum_{\mathcal{T} \subseteq [K], |\mathcal{T}|\ge 2} \frac{U_{\mathcal{T}}}{\prod_{t \in \mathcal{T}} a_t} + O(G/d).
\end{equation}
\end{corollary}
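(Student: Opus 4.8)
The plan is to lift the two-task estimate of the preceding theorem to $K$ tasks by replacing its three-way partition of experts ($S_i\cap S_j$, task-specific, unrelated) with a union bound indexed by task subsets. Write $p_t(m) := \Pr(\text{task }t\text{ routes to expert }m)$; by the router-learning hypothesis, $p_t(m)=\frac{1-O(1/d)}{a_t}$ for $m\in S_t$ and $p_t(m)=\frac{O(1/d)}{M-a_t}$ for $m\notin S_t$, and distinct tasks route independently. As in the block-concatenation picture used for the sparse/soft-routing theorem, the full-layer gradient is the concatenation of per-expert blocks, a task contributes a nonzero block on expert $m$ only when routed there, and so the conflict carried by block $m$ is at most $G$ and vanishes unless at least two tasks select $m$. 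I take the layer-level conflict to be $G$ times the probability that some pair of tasks collides on some expert — the natural $K$-task analogue of the quantity $q_{\text{layer}}$ appearing in the uniform-sparse-routing theorem.

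\textbf{Combinatorial step.} For each expert $m$ let $C_m\subseteq[K]$ be the random set of tasks routed to $m$. A collision on $m$ means $|C_m|\ge 2$, and since $\mathbf{1}[|C_m|\ge 2]\le \#\{\mathcal{T}\subseteq C_m:|\mathcal{T}|\ge 2\}=\sum_{\mathcal{T}\subseteq[K],\,|\mathcal{T}|\ge 2}\mathbf{1}[\mathcal{T}\subseteq C_m]$, a union bound over experts gives $\Pr(\exists\text{ collision})\le\sum_m\sum_{|\mathcal{T}|\ge 2}\Pr(\mathcal{T}\subseteq C_m)$. By independence $\Pr(\mathcal{T}\subseteq C_m)=\prod_{t\in\mathcal{T}}p_t(m)$, hence $\mathrm{conflict}_{\text{layer}}\le G\sum_{|\mathcal{T}|\ge 2}\big(\sum_m\prod_{t\in\mathcal{T}}p_t(m)\big)$.

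\textbf{Evaluating the inner sum.} Fix $\mathcal{T}$ and split the experts into those in $\bigcap_{t\in\mathcal{T}}S_t$ and the rest. The $U_{\mathcal{T}}$ experts in the intersection each contribute exactly $\prod_{t\in\mathcal{T}}\frac{1-O(1/d)}{a_t}$; any other expert has $p_t(m)=\frac{O(1/d)}{M-a_t}$ for at least one $t\in\mathcal{T}$, so — there being at most $M$ such experts and each term carrying a factor of order $1/d$ against factors of order $1/a_t\le 1$ — their total contribution is $O(1/d)$. Therefore $\sum_m\prod_{t\in\mathcal{T}}p_t(m)=U_{\mathcal{T}}\prod_{t\in\mathcal{T}}\frac{1-O(1/d)}{a_t}+O(1/d)\le U_{\mathcal{T}}\prod_{t\in\mathcal{T}}\big(\frac{1-O(1/d)}{a_t}+\frac{O(1/d)}{M-a_t}\big)$, where the residual $O(1/d)$ is absorbed into the per-factor correction, exactly as the task-specific and unrelated-expert terms $P^{(2)},P^{(3)}$ are dominated by $P^{(1)}$ in the two-task proof. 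Summing over $\mathcal{T}$ gives the claimed bound, and expanding $\frac{1-O(1/d)}{a_t}+\frac{O(1/d)}{M-a_t}=\frac{1}{a_t}+O(1/d)$ yields the leading-order form $\mathrm{conflict}_{\text{layer}}\approx G\sum_{|\mathcal{T}|\ge 2}\frac{U_{\mathcal{T}}}{\prod_{t\in\mathcal{T}}a_t}+O(G/d)$.

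\textbf{Main obstacle.} The delicate part is this last bookkeeping step. The union bound $\mathbf{1}[|C_m|\ge 2]\le\sum_{\mathcal{T}}\mathbf{1}[\mathcal{T}\subseteq C_m]$ over-counts heavily (only the $|\mathcal{T}|=2$ terms are needed for validity), so I must argue that the $|\mathcal{T}|>2$ terms are not merely valid but genuinely lower order, and I need a clean quantitative bound on $\sum_{m\notin\bigcap_{t\in\mathcal{T}}S_t}\prod_{t\in\mathcal{T}}p_t(m)$ to justify folding it into the per-factor $\frac{O(1/d)}{M-a_t}$ term rather than carrying a separate additive remainder — this is precisely where the explicit $P^{(1)},P^{(2)},P^{(3)}$ accounting of the two-task case is replaced by an order-of-magnitude argument. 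A secondary point is fixing, up front, the precise meaning of "full-layer gradient conflict" for $K>2$ (I use $G\cdot\Pr(\text{some pairwise collision})$, consistent with the uniform-routing theorem) so that the block-concatenation reduction is unambiguous.
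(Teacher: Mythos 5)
Your proposal follows essentially the same route as the paper's proof: for each subset $\mathcal{T}$ with $|\mathcal{T}|\ge 2$, use independence of the routing decisions to write the probability that all tasks in $\mathcal{T}$ land on a common expert as $\prod_{t\in\mathcal{T}}\bigl(\frac{1-O(1/d)}{a_t}+\frac{O(1/d)}{M-a_t}\bigr)$, multiply by the intersection size $U_{\mathcal{T}}$, sum over subsets and scale by $G$, then drop the $O(1/d)$ terms for the leading-order form. The bookkeeping you flag as the main obstacle (folding the non-intersection experts' $O(1/d)$ contribution into the per-factor correction, and the over-count from subsets of size $>2$) is handled no more rigorously in the paper, which simply restricts the count to shared experts and absorbs everything else into $O(G/d)$, so your version is, if anything, slightly more explicit about where the slack lies.
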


\begin{proof}
For each task subset $\mathcal{T} \subseteq \{1,2,\dots,K\}$ with $|\mathcal{T}|\ge 2$, consider any expert $e \in \bigcap_{t \in \mathcal{T}} S_t$. The probability that all tasks in $\mathcal{T}$ simultaneously select this expert is given by the product of individual selection probabilities. For each task $t \in \mathcal{T}$, the probability of choosing $e$ is
\[
\frac{1 - O(1/d)}{a_t} + \frac{O(1/d)}{M - a_t},
\]
where the first term accounts for selecting $e$ from the task's own expert set and the second term accounts for selecting $e$ from outside the set. Since the tasks are independent, the joint probability that all tasks in $\mathcal{T}$ select the same shared expert $e$ is
\[
\prod_{t \in \mathcal{T}} \left( \frac{1 - O(1/d)}{a_t} + \frac{O(1/d)}{M - a_t} \right).
\]

Multiplying by the number of shared experts $U_{\mathcal{T}}$ gives the total expected conflict contribution from this subset:
\[
P_{\mathcal{T}} = U_{\mathcal{T}} \prod_{t \in \mathcal{T}} \left( \frac{1 - O(1/d)}{a_t} + \frac{O(1/d)}{M - a_t} \right).
\]

Summing over all task subsets with size at least 2, and multiplying by $G$, yields the full-layer expected gradient conflict:
\[
\mathbb{E}[\text{conflict}]_{\text{layer}} \le G \sum_{\mathcal{T} \subseteq [K], |\mathcal{T}|\ge 2} P_{\mathcal{T}}.
\]

In the leading order, we can neglect the $O(1/d)$ small-probability contributions from selecting experts outside each task's own set, which gives
\[
\mathbb{E}[\text{conflict}]_{\text{layer}} \approx G \sum_{\mathcal{T} \subseteq [K], |\mathcal{T}|\ge 2} \frac{U_{\mathcal{T}}}{\prod_{t \in \mathcal{T}} a_t} + O(G/d),
\]
where the dominant contribution comes from shared experts that are simultaneously selected by multiple tasks, and the higher-order terms are absorbed in $O(G/d)$.
\end{proof}

\begin{figure}[htbp]
    \centering
    \includegraphics[width=1\textwidth]{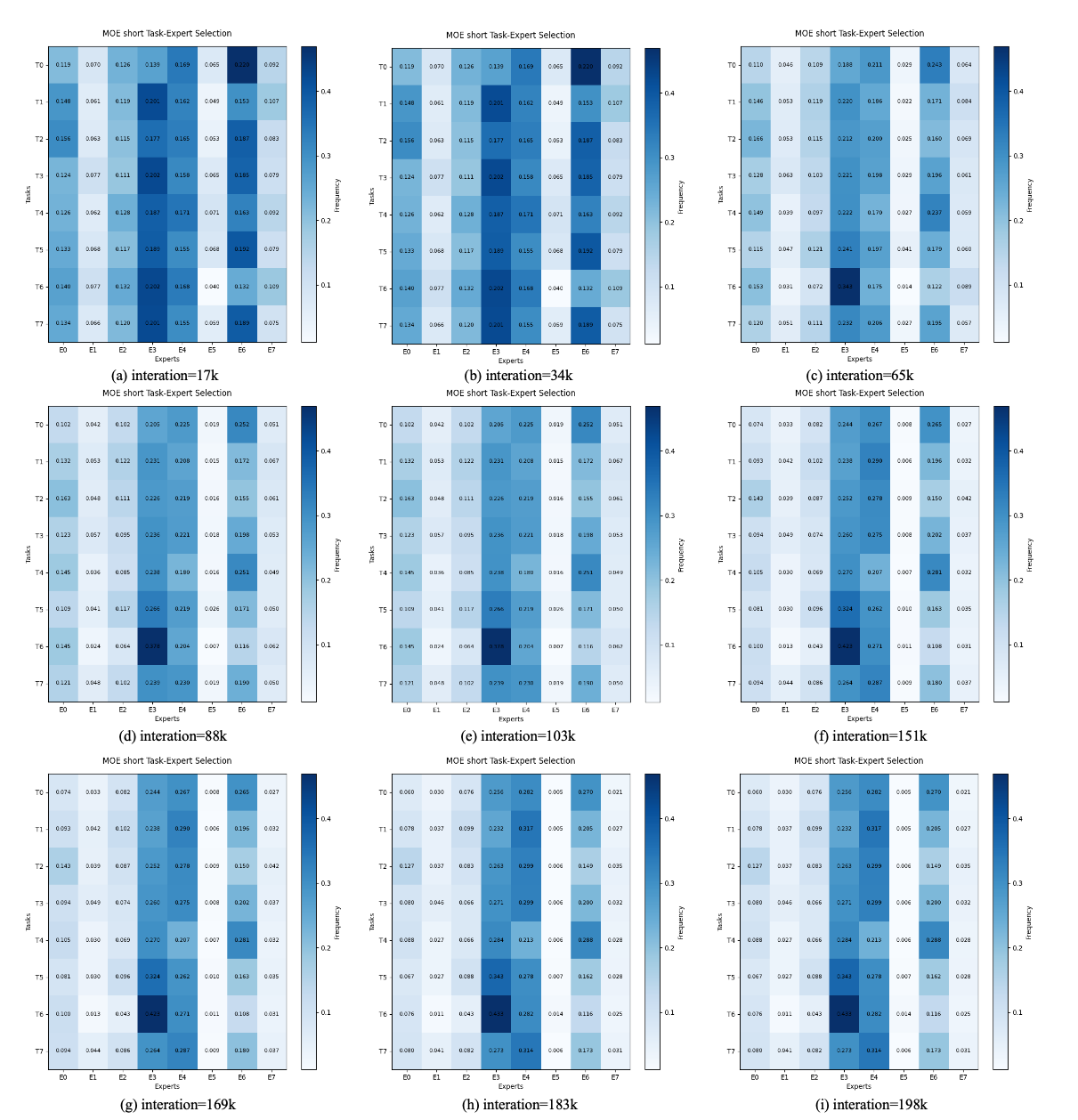}
    \caption{Nine heatmaps illustrating the evolution of expert selection distributions in the MoE-based Transformer across different tasks and training interations. Each row corresponds to a specific task, where the color intensity reflects the frequency of expert selections within a sliding window of $S=1000$. This visualization reveals how task-specific expert utilization patterns emerge and evolve during training.}
    \label{fig:moe_expert_selection_interation}
\end{figure}

\subsubsection{Task Embedding for Improved Task Separation in multitask Learning}
\label{sec:taskembedding}

In multitask learning, tasks often share overlapping input spaces and underlying features (e.g., sentiment classification tasks using the same text encoder), making it difficult for the routing network to distinguish tasks. This overlap leads to routing confusion, where data from one task may be misrouted to experts specialized in other tasks, reducing performance. To address this, \textit{task embedding} introduces a learnable vector \( e_\tau \) for each task, which can be combined with input features. Since \( e_\tau \) is \textit{orthogonal} across tasks (or constrained via a regularization term), it helps the routing network more clearly distinguish between tasks, even when inputs are similar. Task embedding also acts as a \textit{regularization term}, reducing expert load imbalance and improving the stability and performance of multitask MoE training.

Our theoretical analysis suggests that introducing distinct task embeddings provides crucial clustering signals, enabling the router to more effectively differentiate between tasks. This, in turn, mitigates gradient conflicts and enhances overall performance.
However, our empirical results, as presented in \Cref{fig:atari8_performance_comparison}, show that the inclusion of naive task embeddings did not yield a significant improvement. We hypothesize that this is due to the absence of explicit constraints to maintain separation between the embeddings during optimization, which could lead to their collapse in the later stages of training.
For future work, we plan to investigate more effective methods for explicitly injecting task-specific information. The goal is to empower the router to autonomously leverage the unique priors of each task, thereby further advancing multitask learning performance.

\begin{figure}[htbp]
    \centering
    \includegraphics[width=1\textwidth]{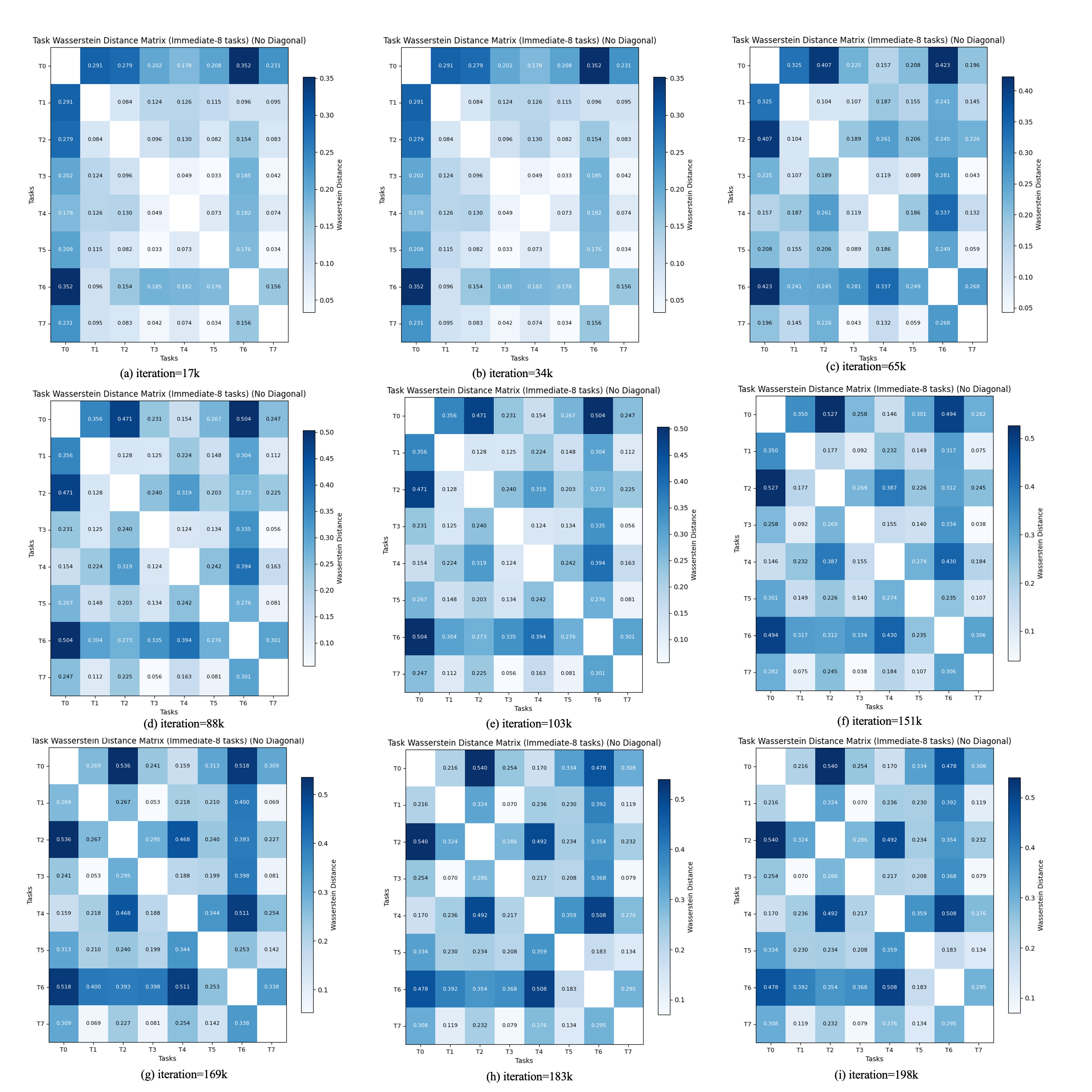}
    \caption{Nine heatmaps illustrating the Wasserstein distances between expert selection distributions of different tasks at successive training interations  with $S_{immediate}=100$. Each heatmap corresponds to a specific training interation, revealing how inter-task expert selection similarity changes over time . Diagonal elements are removed to exclude self-comparisons.}
    \label{fig:heatmap_w_distance}
\end{figure}


\begin{figure}[htbp]
    \centering
    \includegraphics[width=1\textwidth]{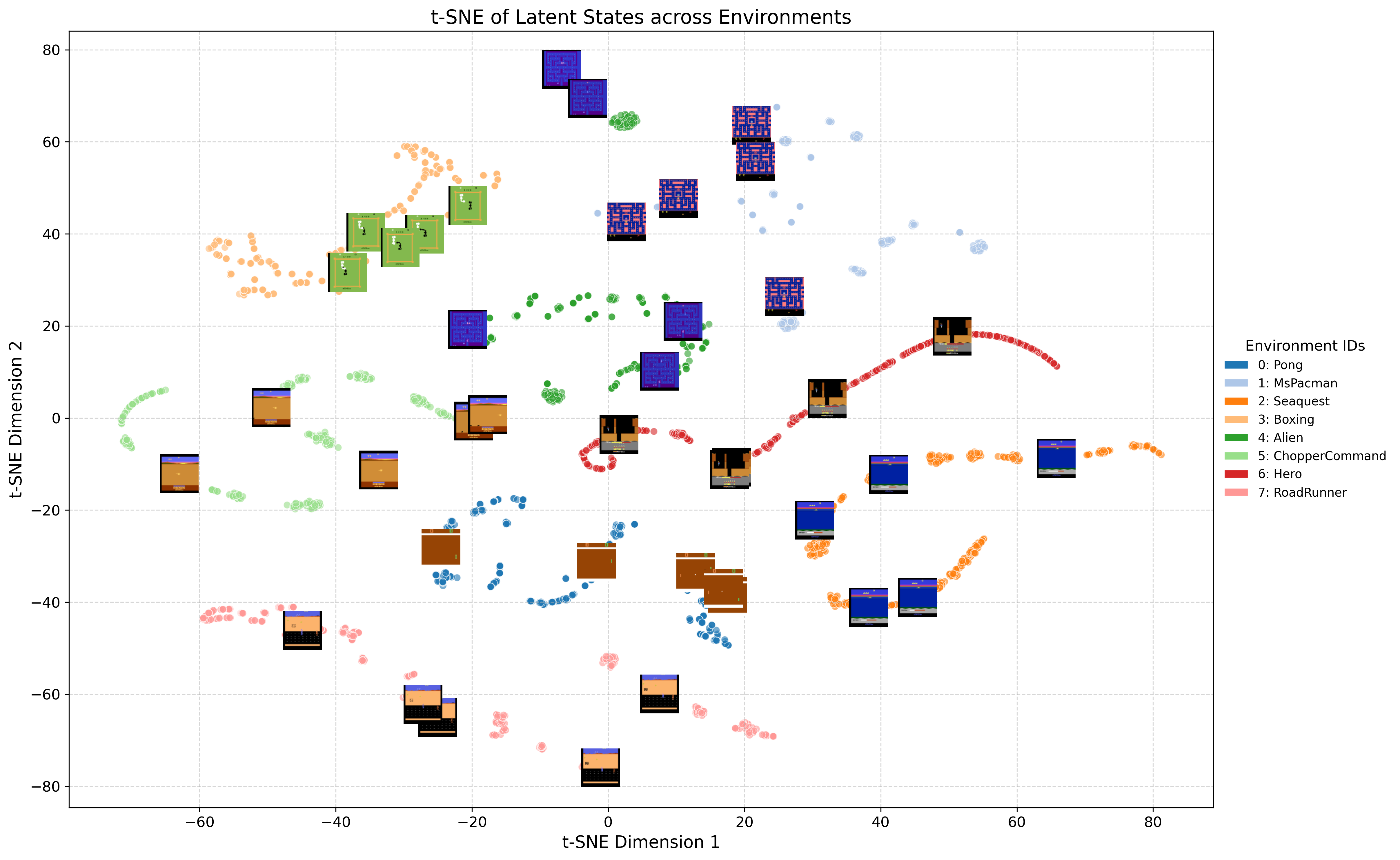}
    \caption{t-SNE visualization of the learned latent representations.
    Each point corresponds to a 2D projection of a 768-dimensional latent embedding generated by the world model. 
    Embeddings are color-coded by environment ID to distinguish between distinct Atari games.
    The projection illustrates a structured latent space where intra-game states exhibit high cohesion, while distinct environments maintain clear separation.
    This confirms that the model effectively captures task-specific semantics while preserving discriminative global features.
    The analysis aggregates 8,000 data points derived from 400 observation sequences, each spanning 20 time steps.}
    \label{fig:tnse}
\end{figure}

\section{Detailed Related Work}
\label{sec:appendix_related_work}

\textbf{MCTS with Learned World Models.}
Planning within a learned latent space, popularized by MuZero~\citep{muzero}, has become a dominant paradigm in reinforcement learning, building upon the success of AlphaZero~\citep{alphago, alphazero}. This approach enables a model to predict its own dynamics, rewards, and policies, allowing for powerful lookahead search within a compact, learned representation~\citep{sampledmuzero, stochastic, gumbel, rezero}. Recent advancements have incorporated Transformers as the backbone for these world models~\citep{iris, twm, pu2024unizero, dino-wm}, significantly enhancing representational capacity and the ability to model long-horizon dependencies. However, the monolithic nature of these architectures, while powerful for single tasks, becomes a critical liability in heterogeneous multitask settings. They are prone to \textit{representational interference} and \textit{plasticity collapse}, where a single shared model struggles to accommodate the diverse and often conflicting properties of multiple tasks. Our work confronts this fundamental architectural bottleneck by systematically investigating and proposing a more robust design.

\textbf{Multi-Task Reinforcement Learning (MTRL).}
MTRL seeks to improve data efficiency and generalization by sharing knowledge across a distribution of tasks~\citep{vithayathil2020survey, d2024sharing}. A common strategy is to use a shared backbone with task-specific heads to handle diverse action or observation spaces~\citep{ScaleQ, tdmpc2, MDT, jat}. To mitigate inter-task interference within the shared backbone, various approaches have been proposed. Architectural methods aim to disentangle knowledge through context-based conditioning~\citep{rakelly2019efficient, sodhani2021multi}, learnable modulation modules that adapt network activations~\citep{schmied2023learning}, or explicit modularization with task-aware routing and parameter composition~\citep{sun2022paco, he2023not}. In parallel, optimization-based methods focus on managing gradient conflicts at the training level by projecting gradients to avoid negative interference or re-weighting task losses dynamically~\citep{moco, lin2024smooth, ma2023harmonydream}. However, these strategies have been predominantly studied in model-free RL or supervised settings, largely outside the domain of latent space planning, where the additional challenge of disentangling dynamics prediction is paramount. Few works have investigated the unique failure modes of multitask learning in this context. We address this gap by analyzing plasticity-related metrics to identify the \textit{root cause} of performance collapse—an architectural bottleneck in the world model—and propose a novel design to resolve it.

\textbf{Sparse and Parameter-Efficient Architectures.}
To overcome the limitations of dense, monolithic models, we turn to sparse and parameter-efficient architectures. Sparse models like Mixture-of-Experts (MoE)~\citep{dai2024deepseekmoe} increase model capacity at a near-constant computational cost through conditional computation. This provides a natural architectural prior for multitask specialization, as different tasks can be routed to different ``expert'' sub-networks~\citep{obando2024mixtures}. Concurrently, parameter-efficient fine-tuning (PEFT) methods, notably Low-Rank Adaptation (LoRA)~\citep{hu2022lora}, enable lightweight, task-specific adjustments. By freezing the bulk of the model and training only a small subset of injected parameters, LoRA can efficiently adapt a model to new tasks or data distributions~\citep{wang2023multilora, yang2025mtl, zhang2025lori}. 
However, we identify a paradigm shift required to apply these techniques to online multitask learning. Classic LoRA techniques are predominantly designed for the static fine-tuning of offline or nearly static datasets~\citep{agiza2024mtlora, huang2023lorahub}. In contrast, our approach operates within a non-stationary online data stream. Unlike standard methods that rely on predefined task boundaries, our DPS introduces an in-training decision mechanism to make real-time judgments on when and how to expand model capacity. By allocating independent parameter spaces for distinct distribution shifts, DPS effectively balances plasticity and stability, surpassing the rigidity of standard fine-tuning.
Our work bridges this gap by unifying MoE and this DPS mechanism within a \textit{transformer-based world model}, creating a system that is both architecturally specialized and dynamically adaptable for large-scale MTRL.

\section{Limitations and Future Work}
\label{sec:appendix_conclusion}
\label{sec:appendix_conclusion}

\noindent While our main results demonstrate ScaleZero's robustness across 48 tasks and the efficiency of DPS on DMControl, we identify specific limitations in our current methodology. These limitations directly chart the course for our future research roadmap toward more capable generalist agents.

\paragraph{Generalizing DPS via Unified Training across Heterogeneous Domains.}
A primary limitation of this study is that the DPS strategy has been empirically validated only on the continuous DMC-18 suite. We acknowledge that claims of domain-agnosticism remain speculative without benchmarks on discrete video games (Atari) or language domains (Jericho). However, we posit that DPS is theoretically positioned to generalize beyond DMC due to two structural factors:

\begin{itemize}
    \item \textbf{Structural Decoupling:} DPS operates exclusively on the world model backbone (the latent transition function), mathematically decoupling it from specific I/O modalities. Since differences in observation spaces (e.g., pixels vs. proprioception) and action spaces are encapsulated within the Encoder and Task Heads, DPS is designed to function effectively regardless of the environmental interface.
    \item \textbf{Hypothesis of Higher Utility via Heterogeneity:} We hypothesize that the on-demand allocation mechanism of DPS will yield higher marginal utility in highly heterogeneous domains like Atari (e.g., the reactive simplicity of \textit{Pong} vs. the planning depth of \textit{Seaquest}) compared to DMC. Static architectures often struggle with such internal variance, whereas DPS can adaptively match capacity to the specific complexity of each task.
\end{itemize}

\noindent To validate these hypotheses, our immediate future work will transition from single-domain testing to a \textit{Unified Training} regime involving \textit{Atari-26, DMC-18, and Jericho-4} simultaneously. This will allow us to rigorously analyze whether the adaptive mechanism of DPS can mitigate negative transfer across vastly different state-action spaces, thereby enhancing generalization in a truly multi-modal context.

\paragraph{Deepening MoE-LoRA Synergy for Architectural Adaptation.}
Our current framework employs a modular design where MoE handles architectural specialization and LoRA manages dynamic capacity scaling. However, a limitation of the current implementation is that these components operate somewhat orthogonally. We believe that a deeper synergy is required to fully exploit the architecture's potential. Future work will investigate advanced coordination mechanisms, such as using LoRA to adapt the MoE gating network itself to stabilize routing during rapid distribution shifts, or utilizing MoE routing decisions to dynamically activate specific LoRA adapters. This aims to create a tightly integrated system capable of fine-grained, context-aware architectural adaptation.

\paragraph{Addressing Sample Efficiency via Hybrid Offline-Online Learning.}
Currently, our framework operates in a purely online learning setting. While effective, this approach is inherently limited by the need to collect fresh data for every task, creating a bottleneck for sample efficiency and "cold start" performance. A powerful extension to address this limitation is to integrate large-scale offline pre-training. By initializing the ScaleZero model with weights from a foundation model pre-trained on vast offline datasets, we aim to leverage broad prior knowledge. This hybrid approach is expected to provide superior initialization, faster adaptation to new tasks, and higher peak performance, bridging the gap between specialized reinforcement learning and generalist foundation models.

\section{The Use of Large Language Models}
\label{sec:appendix_llm}
\label{sec:llm_usage}
During the preparation of this manuscript, the large language models Gemini 2.5 Pro and Gemini 3 Pro were used to assist with language refinement. The primary goal was to enhance the clarity, precision, and readability of the text. The authors have reviewed and approved the final content.

\end{appendices}

\clearpage

\end{document}